\newtheorem{theorem}{Theorem}
\newtheorem{lemma}{Lemma}
\newtheorem{definition}{Definition}
\newtheorem{remark}{Remark}
\newtheorem{corollary}{Corollary}
\newcommand{\A}{\mathcal{A}}
\newcommand{\E}{\mathbb{E}}
\newcommand{\M}{\mathcal{M}}
\newcommand{\F}{\mathcal{F}}
\newcommand{\sigmac}{\sqrt{\frac{c}{k+b+1}}}
\newcommand{\D}{\mathcal{D}}
\newcommand{\Eimu}{E_i ^{\mu} (t)}
\newcommand{\Eitheta}{E_i^{\theta} (t)}
\newcommand{\N}{\mathcal{N}}
\newcommand{\pit}{p_{i,t}}
\newcommand{\armt}{a_t}
\newcommand{\rit}{r_t}
\newtheorem*{lemma2}{Lemma 2}
\newtheorem*{lemma4}{Lemma 4}
\newcommand{\nitcap}{n_{i,T}}
\newcommand{\muit}{\hat{\mu}_{i,t}}
\newcommand{\muitminus}{\hat{\mu}_{i,t-1}}
\newcommand{\nitminus}{n_{i,t-1}}
\newcommand{\thetait}{\theta_{i,t}}
\newcommand{\mustar}{\mu^{\ast}}
\algrenewcommand\algorithmicrequire{\textbf{Input:}}
\algrenewcommand\algorithmicensure{\textbf{Output:}}
\definecolor{DarkGreen}{rgb}{0.1,0.5,0.1}
\begin{document}
\title{Thompson Sampling Itself is Differentially Private}
\author{Tingting Ou\footnotemark[1] \and Marco Avella Medina\footnotemark[1] \and Rachel Cummings\footnotemark[1]} 

\renewcommand{\thefootnote}{\fnsymbol{footnote}}
\footnotetext[1]{Columbia University. Emails: \texttt{\{to2372,ma3874,rac2239\}@columbia.edu} T.O. and R.C. were supported in part by NSF grants CNS-2138834 (CAREER) and EEC-2133516. M.A.M. was supported in part by NSF grant DMS-2310973.}

\renewcommand{\thefootnote}{\arabic{footnote}}

%\date{}
\maketitle

\begin{abstract}

In this work we first show that the classical Thompson sampling algorithm for multi-arm bandits is differentially private as-is, without any modification. We provide per-round privacy guarantees as a function of problem parameters and show composition over $T$ rounds; since the algorithm is unchanged, existing $O(\sqrt{NT\log N})$ regret bounds still hold and there is no loss in performance due to privacy.  
We then show that simple modifications -- such as pre-pulling all arms a fixed number of times, increasing the sampling variance -- can provide tighter privacy guarantees. We again provide privacy guarantees that now depend on the new parameters introduced in the modification, which allows the analyst to tune the privacy guarantee as desired. We also provide a novel regret analysis for this new algorithm, and show how the new parameters also impact expected regret. 
Finally, we empirically validate and illustrate our theoretical findings in two parameter regimes and demonstrate that tuning the new parameters substantially improve the privacy-regret tradeoff.

\end{abstract}

\section{Introduction}\label{s.intro}

The Thompson Sampling algorithm is one of the earliest developed heuristics for the stochastic multi-arm bandits problem and has been proven to have good performance both empirically and theoretically. 
It is a Bayesian regret-minimization algorithm that initializes a prior distribution on the parameters of the reward distributions, plays the arm using the posterior probability of being the best arm, and updates the posterior distribution accordingly using the observations. It appeared for the first time by \cite{T1933} in a two-armed bandit problem motivated by clinical trials. After being largely ignored in the literature, strong empirical performance and theoretical guarantees led to a rapid surge of interest in this algorithm in the last 15 years \cite{russoetal2018}. Indeed, the Thompson Sampling algorithm has since been widely studied and proven to be useful for solving a wide range of online learning problems \cite{AG12,AG17,AG13,WC18,russoetal2018,LR21,ZYJ21,HAT20}.

In this work, we first analyze the privacy guarantees of Thompson Sampling to show that the classical Thompson sampling algorithm for multi-arm bandits is differentially private (DP) as-is, without any modification. The algorithm determines the next arm to play at each timestep by first sampling an estimate of each arm's mean reward from the posterior, and then selecting the arm with the highest noisy posterior sample. When the algorithm is initialized with Gaussian priors on reward distributions, this step is equivalent to adding mean-zero Gaussian noise to the empirical mean of realized rewards, also known as the \emph{Gaussian Mechanism} in the DP literature.

We provide per-round privacy guarantees as a function of problem parameters and show composition over $T$ rounds for $N$ arms. In the main body, we express per-round privacy in terms of Gaussian differential privacy (GDP) \cite{DRS19}, compose GDP parameters across all rounds, and then translate the guarantees back to the standard DP parameters. GDP is known to be amenable to many rounds of composition and the addition of Gaussian noise, both of which occur in the Thompson Sampling algorithm, but it leads to less easily interpretable statements of DP guarantees. In Appendix \ref{app.otherprivacy}, we provide an alternative proof of the DP guarantees of Thompson Sampling, that relies on a direct analysis of the DP guarantees. Along the way, we also show that a more general version of the classic ReportNoisyMax algorithm \cite{DR14} with heterogeneous Gaussian noise  still satisfies DP, which may be of independent interest. Since the algorithm is unchanged, existing regret bounds from non-private analysis of Thompson Sampling \cite{AG17} still hold, and there is no loss in performance due to privacy.

Next, we show that simple modifications -- such as pre-pulling all arms $b$ times or increasing the sampling variance by a factor of $c$ -- can provide even tighter privacy guarantees, and allow the analyst to tune the privacy parameters as desired. The analysis follows a similar proof structure as before -- proving per-round GDP guarantees, composing across $T$ rounds, and translating back to DP -- but this time accounting for the impact of the new parameters $b$ and $c$. Since we modify the Thompson Sampling algorithm, existing regret bounds no longer hold and must be re-derived. We provide a novel regret analysis for our algorithm that follows the same high-level structure of the original analysis of \cite{AG17}, while tightening some intermediate steps and tracking the impact of the new parameters in the expected regret. 

Finally, we empirically validate our theoretical findings for two different families of reward distributions: Bernoulli and truncated exponential.  For both families, our experimental findings match our theoretical results: tuning $b$ and $c$ can lead to substantial improvements in the empirical regret, for the same fixed privacy guarantee. We also observe that the optimal tuning strategy in terms of the privacy-accuracy tradeoff appears to involve jointly tuning $b$ and $c$, rather than tuning just one of the parameters.

\subsection{Related Work}\label{s.relwork}

There is a large body of work that considers the setting of private stochastic multi-arm bandits: \cite{MT15,HH22,TD16,SS19,HU21}. These works have considered different algorithms for the problem of privately learning from bandits, adapting popular non-private procedures such as Successive Elimination, Upper Confidence Bound and Thompson Sampling. 

We focus our attention in this work on the Thompson Sampling algorithm \cite{T1933} which has been shown to to be applicable in solving a wide range of online learning problems including the classic multi-arm bandits problem \cite{AG12,AG17}, contextual bandits \cite{AG13}, combinatorial semi-bandits \cite{WC18} and other applications including online job scheduling, subset-selection, variable selection, opportunistic routing, combinatorial network optimization \cite{GMM14,LR21,ZYJ21,HAT20}. 

The empirical efficacy of Thompson Sampling when applied to the stochastic multi-arm bandits problem was demonstrated \cite{COL11} before the currently best known theoretical regret bounds were proven \cite{AG12}. The seminal work on Thompson Sampling \cite{AG12} gave a regret bound of $O( (\sum _{i=2}^{N}\frac{1}{\Delta_i})^2 \log T )$ for the stochastic multi-arm bandits problem with $N$ arms over $T$ timesteps, when the algorithm is instantiated with a Beta prior over rewards, where $\Delta_i$ is the reward gap between the best arm and arm $i$. When the $\Delta_i$ are all bounded away from 0, this gives a problem-independent regret bound of $O(N^2 \log T)$. 
Follow-up work \cite{AG17} gives an $O(\sqrt{N T\log T})$ regret bound for the Thompson Sampling algorithm for both a Beta prior and a Gaussian prior. Our work focuses on the Gaussian prior version of the Thompson Sampling algorithm and its privacy guarantees. 

Some recent works on private stochastic bandits have specifically addressed the problem of privatizing the Thompson Sampling algorithm as a solution to the bandits problem. \cite{MT15} first designed an $\epsilon$-differentially private variant of the Thompson sampling algorithm, with expected regret $O(N\frac{\log ^3 T}{\Delta^2 \epsilon^2})$, where $\Delta$ is the reward gap between the best arm and the second best arm. \cite{HH22} presented two $\epsilon$-differentially private (near)-optimal Thompson Sampling-based algorithms, DP-TS and Lazy-DP-TS, with regret bounds $\sum _j O(\frac{\log T }{\min \{ \epsilon, \Delta_j\}} \log (\frac{\log T}{ \epsilon \Delta_j}))$ and $\sum _j O(\frac{\log T }{\min \{ \epsilon, \Delta_j\}} )$, respectively. Both works involve significant modifications of the Thompson Sampling algorithm to guarantee privacy, while our work mainly analyzes the privacy guarantee of the \textit{original} version of the Thompson Sampling algorithm.  

This work builds upon works analyzing the privacy guarantees of existing well-studied randomized algorithms, most notably \cite{BBDO12}, which showed that the Johnson-Lindenstrauss transform itself preserves differential privacy, and \cite{SSG20}, which proved that the Flajolet-Martin Sketch itself is differentially private. Similar in spirit -- although completely different in technical details -- we show that the noise added in Thompson Sampling is sufficient to satisfy differential privacy.

\section{Model and Preliminaries}\label{s.prelim}

We consider the classic stochastic multi-armed bandit (MAB) setting with $N$ arms. At each time $t \in [T]$, an arm $i \in \A = [N]$ is chosen to be played based on the outcomes from the previous $(t-1)$ timesteps, and yields a random real-valued reward $\rit \in [0,1]$ sampled from a fixed unknown distribution $\D_i$ with mean $\mu_i$. The rewards obtained from playing any arm are sampled i.i.d.~and are independent of time or the plays of other arms. 

The analyst must specify a \emph{policy} $\pi = \{\pi_t\}_{t\in [T]}$, where each $\pi_t$ maps from the history $\mathcal{F}_{t-1} = \{(a_{\tau},r_{\tau})\}_{\tau<t}$ containing the sequence of arms played and rewards realized up to time $t-1$, to the new arm $\armt$ played at time $t$.

We measure the analyst's success using the standard metric of expected total \emph{regret}, which is the difference between the best possible expected cumulative reward (if the distributions $\D_i$ were known) and the expected total reward under policy $\pi$ \cite{AG13}. Letting $\mustar = \max _i \mu_i$, the expected total regret is:
\begin{equation}\label{eq.regret}
\E[\mathcal{R}(T, \pi)] = \mustar T - \E_{\armt \leftarrow \pi}[\textstyle\sum_{t=1}^T r_{t}].
\end{equation}

For the remainder of this paper, $\log$ will always refer to the natural logarithm.

\subsection{Thompson Sampling}\label{s.tsprelims}

The Thompson Sampling algorithm is a commonly used policy for minimizing regret in a MAB setting \cite{T1933,AG12,russoetal2018}. One reason for the widespread use of Thompson Sampling is that it is known to achieve low regret.  At a high-level, the algorithm operates as follows. It starts with a prior belief on the mean reward $\mu_i$ for each arm. After observing each reward $r\sim \D_i$, the algorithm performs a Bayesian update to compute a posterior distribution of $\mu_i | r$. At every timestep, the algorithm samples a value $\theta_i$ for each arm according to the posterior given all realized rewards, and then plays the arm with the highest sampled $\theta$ value.

The Thompson Sampling algorithm is parameterized only by its initial priors on $\D_i$. In this work we focus on the special case of Gaussian priors, where the algorithm is initialized with a Gaussian prior for each arm. We emphasize that the algorithm's priors on rewards need not match the true reward distributions, and thus this does not conflict with our modeling assumption of bounded rewards. These priors are a part of the algorithmic construction, and not a part of the underlying data model.

We focus on the single parameter setting where each reward distribution $\D_i$ has unknown mean $\mu_i$ and known variance $\sigma_i^2 = 1$. For each arm $i$, the algorithm is initialized with a prior belief that $\mu_i \sim \mathcal{N}(0,1)$; after observing a reward, the posterior distribution of each $\mu_i$ will remain Gaussian. Concretely, given the initial prior $\mathcal{N}(0,1)$ and a sequence of rewards $(r_1, \ldots, r_t)$, the posterior for arm $i$ from which $\theta_i$ will be sampled is $\mathcal{N}(\hat{\mu}_{i,t}, \frac{1}{n_{i,t}+1})$, where $n_{i,t}$ is the number of times arm $i$ is played up to time $t$, and $\hat{\mu}_{i,t} = \frac{1}{n_{i,t}+1}\sum_{\tau = 1}^t 1_{a(\tau)=i} r_{\tau}$ is the empirical mean of realized rewards from arm $i$, with a slight offset to avoid degeneracy based on the initial prior. 

The $\theta_i$ for each arm is sampled according to this posterior at each timestep, and the arm corresponding to the largest realized $\theta$ is selected, output, and the reward from the selected arm is observed internally by the algorithm. Note that this reward is not a part of the algorithm's output.
Algorithm \ref{alg.ts.orig} presents a formal description of this algorithm.

\begin{algorithm}
\begin{algorithmic}[1]
\caption{Thompson Sampling with Gaussian priors}\label{alg.ts.orig}
\State \textbf{Input:} number of arms $N$, time horizon $T$
\State Initialize $\hat{\mu}_{i, 0} = 0, \ n_{i,0} = 0$ for each $i = 1, \ldots, N$
\For {$t = 1, 2, \ldots, T$}
    \State For each arm $i = 1, \ldots, N$, sample independently $\theta_{i,t} \sim \N(\hat{\mu}_{i,t-1}, \frac{1}{n_{i,t-1}+1})$
    \State Play arm $a_t : = \arg \max _i \theta_{i,t}$ and realize $r_t$
    \State \textbf{Output} $a_t$
    \State For $i=a_t$, update $ \hat{\mu}_{i, t} = \frac{\hat{\mu}_{i, t-1}  (n_{i,t-1} + 1) + r_t}{n_{i, t-1} + 2}$, and $ n_{i,t} = n_{i,t-1} + 1$
    \State For all $i \neq a_t$, update $ \hat{\mu}_{i, t} =\hat{\mu}_{i, t-1}$, and $ n_{i,t} = n_{i,t-1}$
\EndFor
\end{algorithmic}
\end{algorithm}

Regret as defined in Equation \eqref{eq.regret} provides a \emph{problem-independent} definition, because it has no additional assumptions or dependence on the problem instance characterized by the true reward means $(\mu_1,\dots,\mu_N)$.
%\mam{it is not really clear to me what we mean by this. I think we mean the values of $(\mu_1,\dots,\mu_N)$?}. 
We will also consider an equivalent \emph{problem-dependent} definition of regret, which provides guarantees based on the true gap between the arm means. Define the suboptimality gap of arm $i$ to be $\Delta_i := \mu^* - \mu_i$. We can use this, and the random variable $n_{i,t}$, which denotes the number of times that arm $i$ has been played up to time $t$, to re-write the expected regret in a problem-dependent manner:
\begin{equation}\label{eq:dependentregret}
    \E[\mathcal{R}(T, \pi)] = \E_{\armt \leftarrow \pi}[\sum_{t=1}^T(\mu^*-\mu_{a_t})]=\sum_{i=1}^N\Delta_i\E_{\armt \leftarrow \pi}[n_{i,T}].
\end{equation}

\subsection{Differential Privacy}\label{s.dpprelims}

Differential privacy (DP) is a parameterized notion of database privacy, which ensures that changing a single element of the input database will lead to only small changes in the distribution over outputs. In our online setting, we consider database \emph{streams}, where each data element (from a data universe $\mathcal{R}$) arrives one-by-one, and the algorithm must produce an output from some output space $\mathcal{O}$ at each timestep. Two length $T$ streams $R,R' \in \mathcal{R}^T$  are said to be \emph{neighboring} if they differ only in the data element received in a single timestep.  We first recall the definition of differential privacy for streams, adapted from its standard presentation of \cite{DMNS06} to the streaming setting as in \cite{CPW16}. 

\begin{definition} A streaming algorithm $\M:\mathcal{R}^T \to \mathcal{O}^T$ is $(\epsilon, \delta)$-differentially private if for any pair of neighboring streams $R,R' \in \mathcal{R}^T$ and for any set of outputs $S \subseteq \mathcal{O}^T$, 
\[\Pr [ \M(R) \in S] \leq e^\epsilon \Pr [ \M(R') \in S] + \delta. \] 
\end{definition}

In the context of Thompson Sampling, our \emph{neighboring streams} correspond to sequences of rewards $R=\{r_{t}\}_{t\in[T]}$ and $R'=\{r'_{t}\}_{t\in[T]}$ that differ in a single reward value: there exists a single $\tau$ such that $r_{\tau} \neq r'_{\tau}$, and for all other $t \neq \tau$, $r_t=r'_t$. The output $\mathcal{O}^T$ is the sequence of arm pulls $\{a_{t}\}_{t\in[T]}$ output by the algorithm. 

Note that streaming algorithms can be \emph{adaptive}, where the chosen arm $a_t$ at time $t$ is a function of all previous rewards and arm pulls.
We will also analyze the privacy of single-shot mechanisms, where the mechanisms output space is $\mathcal{R}$ instead of $\mathcal{R}^T$. This corresponds to analysis at a single fixed timestep, rather than across all time.

We will use Gaussian differential privacy (GDP) \cite{DRS19}, which is a variant of DP that is more amenable to many rounds of composition -- where private algorithms are applied many times to the same dataset -- and the addition of Gaussian noise, both of which occur in the Thompson Sampling algorithm.  
First we see that it is easy to translate between GDP and the standard $(\epsilon, \delta)$-DP notion.

\begin{definition}[\cite{DRS19}] For any two probability distributions $P$ and $Q$ on the same space, define the trade-off function $T(P,Q) : [0,1] \rightarrow [0,1]$  as $T(P,Q) (\alpha) = \inf \{ \beta _{\phi} : \alpha_{\phi} \leq \alpha \}$ where the infimum is taken over all (measurable) rejection rules. Then, a mechanism $\M$ satisfies $\eta$-GDP if for all neighboring databases $S$ and $S'$, $T(\M(S), \M(S')) \geq T(\N(0,1), \N(\mu, 1))$. 
\end{definition}

GDP is known to be translatable with DP (Lemma \ref{lem.gdp-to-eps}), and it \emph{composes} (Lemma \ref{lem.GDP-comp}), meaning that the privacy parameter $\mu$ composes slowly as more computations are performed on the data.

\begin{lemma}[\cite{DRS19}]\label{lem.gdp-to-eps}
A mechanism $\mathcal{M}$ is $\eta$-GDP if and only if it is $(\epsilon, \delta(\epsilon) ) $-DP for all $\epsilon \geq 0 $, where $\delta(\epsilon) = \Phi(-\frac{\epsilon}{\eta} + \eta / 2) - e^{\epsilon} \Phi(-\frac{\epsilon}{\eta} - \eta / 2)$, where $\Phi(x) = \frac{1}{\sqrt{2\pi}} \int _{-\infty}^x \exp(-u^2 / 2) du$ is the cumulative density function of the standard normal distribution.
\end{lemma}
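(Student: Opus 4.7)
The plan is to reduce both notions of privacy to statements about trade-off functions on neighboring outputs, and then perform a direct calculation to identify the smallest $\delta$ compatible with $\eta$-GDP at each $\epsilon$.

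First I would recall (or re-derive) the standard dictionary between $(\epsilon,\delta)$-DP and trade-off functions: $\M$ is $(\epsilon,\delta)$-DP if and only if, for every pair of neighboring databases $S,S'$,
\[
T(\M(S),\M(S'))(\alpha) \;\geq\; f_{\epsilon,\delta}(\alpha) \;:=\; \max\!\Bigl\{0,\; 1-\delta-e^{\epsilon}\alpha,\; e^{-\epsilon}(1-\delta-\alpha)\Bigr\},
\]
after symmetrizing over the two orderings of $S,S'$. This is the classical hypothesis-testing reformulation (Wasserman--Zhou, Kairouz--Oh--Viswanath), and it converts the probability-based definition into a pointwise statement about a single function $T(\M(S),\M(S'))$. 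Analogously, by the Neyman--Pearson lemma applied to two unit-variance Gaussians with mean gap $\eta$, $\eta$-GDP is equivalent to $T(\M(S),\M(S'))(\alpha) \geq G_\eta(\alpha) := \Phi(\Phi^{-1}(1-\alpha)-\eta)$ for all neighboring $S,S'$.

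With these reformulations, the lemma reduces to the purely analytic claim: $G_\eta \geq f_{\epsilon,\delta(\epsilon)}$ pointwise, and $\delta(\epsilon)$ is the smallest value for which this holds. Since $G_\eta$ is convex and decreasing on $[0,1]$, and $f_{\epsilon,\delta}$ is the pointwise maximum of two affine pieces (symmetric about the line of slope $-1$), it suffices to find the smallest $\delta$ such that the affine piece $\alpha \mapsto 1-\delta-e^{\epsilon}\alpha$ lies below $G_\eta$. The extremal $\delta$ is characterized by tangency: there exists $\alpha^\ast$ with $G_\eta'(\alpha^\ast)=-e^{\epsilon}$ and $G_\eta(\alpha^\ast) = 1-\delta-e^{\epsilon}\alpha^\ast$. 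Computing $G_\eta'$ via the chain rule using $u = \Phi^{-1}(1-\alpha)$ gives
\[
G_\eta'(\alpha) \;=\; -\frac{\phi(u-\eta)}{\phi(u)} \;=\; -\exp\!\bigl(u\eta - \tfrac{\eta^2}{2}\bigr),
\]
so setting this equal to $-e^{\epsilon}$ yields $u^\ast = \epsilon/\eta + \eta/2$, hence $\alpha^\ast = \Phi(-\epsilon/\eta - \eta/2)$ and $G_\eta(\alpha^\ast) = \Phi(\epsilon/\eta - \eta/2)$. Solving the tangency equation for $\delta$ gives
\[
\delta(\epsilon) \;=\; 1 - \Phi(\epsilon/\eta - \eta/2) - e^{\epsilon}\Phi(-\epsilon/\eta - \eta/2) \;=\; \Phi(-\epsilon/\eta + \eta/2) - e^{\epsilon}\Phi(-\epsilon/\eta - \eta/2),
\]
matching the stated formula. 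The symmetric affine piece is handled automatically because $G_\eta$ satisfies $G_\eta\circ G_\eta = \mathrm{id}$ (the trade-off function between two Gaussians is involutive by swapping null and alternative), which ensures the second constraint in $f_{\epsilon,\delta}$ is redundant once the first is satisfied.

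The main obstacle is the first step: rigorously invoking the trade-off-function characterization of $(\epsilon,\delta)$-DP, particularly that achieving the bound pointwise for all $\alpha$ is equivalent to the usual two-sided probabilistic definition. Once this equivalence is in hand, the remainder is convex-analytic and reduces to the tangent-line computation above; care is needed only to verify that the extremal $\alpha^\ast$ lies in $(0,1)$ and that the convexity of $G_\eta$ indeed implies the affine lower bound is tight exactly at the tangent point.
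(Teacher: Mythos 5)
The paper does not prove Lemma \ref{lem.gdp-to-eps} at all---it is imported verbatim from \cite{DRS19}---so the only meaningful comparison is with the original derivation there, and your argument is essentially that derivation: pass to trade-off functions via the hypothesis-testing characterization of $(\epsilon,\delta)$-DP, note that $\eta$-GDP means $T(\M(S),\M(S'))\geq G_\eta$, and obtain $\delta(\epsilon)$ as $1+G_\eta^*(-e^\epsilon)$, which is exactly your tangent-line computation giving $u^\ast=\epsilon/\eta+\eta/2$ and $\delta(\epsilon)=\Phi(-\epsilon/\eta+\eta/2)-e^\epsilon\Phi(-\epsilon/\eta-\eta/2)$; the symmetry (involutivity) of $G_\eta$ disposing of the second affine piece is also how \cite{DRS19} handle it. The one point you leave implicit is the converse direction: minimality of $\delta(\epsilon)$ for each fixed $\epsilon$ is not by itself the statement that $(\epsilon,\delta(\epsilon))$-DP for \emph{all} $\epsilon\geq 0$ implies $\eta$-GDP; for that you need that $\sup_{\epsilon\geq 0} f_{\epsilon,\delta(\epsilon)}=G_\eta$, i.e., that the tangent lines of slope $-e^\epsilon$, $\epsilon\geq 0$, together with their reflections under the involution, form the full family of supporting lines of the convex function $G_\eta$, so that their upper envelope reconstructs it. This is precisely the content of the primal--dual proposition in \cite{DRS19} that your sketch would need to invoke or reprove, but it follows from your own tangency analysis plus standard convex duality, so the proposal is sound.
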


\begin{lemma}[\cite{DRS19}] \label{lem.GDP-comp}
    Let $\M_t$ be an $\eta_t$-GDP mechanism, for $t=1,\ldots,T$. Then the adaptive $T$-fold composition of all $\M_1, \ldots, \M_T$ is $\sqrt{\sum_{t=1}^T \eta_t ^2 }$-GDP.
\end{lemma}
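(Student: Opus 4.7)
The plan is to work directly with the tradeoff-function characterization of GDP and reduce the composition bound to a statement about Gaussian tradeoff functions. Write $G_\eta := T(\N(0,1), \N(\eta,1))$, so that a mechanism is $\eta$-GDP precisely when $T(\M(S), \M(S')) \geq G_\eta$ pointwise on $[0,1]$. The goal is thus to show that for any pair of neighboring streams $S, S'$, the tradeoff function of the composed output $(\M_1, \M_2, \ldots, \M_T)$ is at least $G_{\sqrt{\sum_t \eta_t^2}}$.

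First, I would establish that adaptive composition corresponds to a tensor product of tradeoff functions. Concretely, let $P_t$ and $Q_t$ denote the conditional distributions of $\M_t$ on $S$ and $S'$, given the history of prior outputs. The joint output distribution on $S$ is the product of conditional laws $\bigotimes_t P_t$ (and analogously $\bigotimes_t Q_t$ on $S'$). Using the standard fact that tradeoff functions are coordinatewise non-increasing under tensor products of hypothesis-testing problems, together with the $\eta_t$-GDP assumption $T(P_t, Q_t) \geq G_{\eta_t}$ for each $t$ and every realization of the history, one obtains
\begin{equation*}
T\!\left( \textstyle\bigotimes_t P_t, \bigotimes_t Q_t \right) \;\geq\; G_{\eta_1} \otimes G_{\eta_2} \otimes \cdots \otimes G_{\eta_T}.
\end{equation*}
This step is the main obstacle, since the adaptivity requires arguing uniformly over the history; the cleanest route is to condition on the first $t-1$ outputs, apply the two-fold composition statement for tradeoff functions, and iterate.

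Second, I would carry out the algebra that Gaussian tradeoff functions combine under tensor products as
\begin{equation*}
G_{\eta_1} \otimes G_{\eta_2} \;=\; G_{\sqrt{\eta_1^2 + \eta_2^2}}.
\end{equation*}
This is a concrete computation: testing $\N(0, I_2)$ against $\N((\eta_1, \eta_2), I_2)$ via the Neyman--Pearson lemma uses the likelihood-ratio statistic, which is a one-dimensional Gaussian with shift $\sqrt{\eta_1^2 + \eta_2^2}$ and unit variance. Hence the tradeoff function coincides with $G_{\sqrt{\eta_1^2 + \eta_2^2}}$.

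Finally, I would iterate the Gaussian tensor-product identity $T$ times to conclude that the composition has tradeoff function at least $G_{\sqrt{\sum_t \eta_t^2}}$, which is exactly the claim that the adaptive $T$-fold composition of $\eta_t$-GDP mechanisms is $\sqrt{\sum_t \eta_t^2}$-GDP. The key insight to flag is that unlike pure $(\epsilon, \delta)$-DP, the Gaussian family is closed under tensor products of tradeoff functions, which is precisely what makes GDP composition lossless and motivates its use in the Thompson Sampling analysis that follows.
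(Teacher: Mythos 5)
The paper does not prove this lemma at all---it is quoted verbatim from \cite{DRS19}---so there is no internal proof to compare against; your sketch correctly reconstructs the argument from that reference, namely the adaptive composition theorem for tradeoff functions (the genuinely nontrivial step, which you rightly flag as the main obstacle and which occupies a full theorem in \cite{DRS19}, proved by conditioning on the history and iterating the two-fold case) combined with the Neyman--Pearson computation showing $G_{\eta_1}\otimes G_{\eta_2}=G_{\sqrt{\eta_1^2+\eta_2^2}}$, since the likelihood-ratio statistic for testing $\N(0,I_2)$ against $\N((\eta_1,\eta_2),I_2)$ is a unit-variance Gaussian with mean shift $\sqrt{\eta_1^2+\eta_2^2}$. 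Your proposal is correct and follows essentially the same route as the cited source.
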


Our analysis will be based on the \emph{Gaussian Mechanism}, which is a method for privately evaluating a real-valued function $f$, and is defined as:
\[\M(D,\sigma^2) = f(D) + Y, \quad \text{where $Y \sim \N(0, \sigma^2)$.}\]  
The Gaussian Mechanism satisfies $(s_f/\sigma)$-GDP \cite{DRS19}, where $s_f = \max_{R,R' \text{neighbors}} |f(R)-f(R')|$ is the \emph{sensitivity} of $f$, or the maximum change in the function's value between neighboring databases. 

\section{Thompson Sampling is DP}\label{s.tsisdp}

In this section, we show that the original Thompson Sampling algorithm (with Gaussian priors) as presented in Algorithm \ref{alg.ts.orig} is differentially private. Intuitively, differential privacy requires that algorithms make randomized decisions based on the database. In the case of Thompson Sampling, the algorithm is inherently randomized, by selecting the next arm based on the (randomly generated) $\theta_i$ rather than the exact empirical mean of historical play.

To show this formally, we must show that the particular distributions of randomness used in Thompson Sampling satisfy the mathematical requirements of differential privacy. 
To prove this, we will first focus on the privacy guarantees of a single step at a fixed time $t$, and then show how the privacy guarantees composes across $T$ timesteps. Since GDP is known to yield improved composition guarantees relative to regular DP composition and other competing DP-like guarantees such as Renyi-DP \cite{DRS19},
 the analysis will involve computing single-step privacy guarantees using GDP (Lemma \ref{lem.tsgdp-onestep}), then applying GDP composition (Lemma \ref{lem.orig-gdp}), and finally converting back to a DP guarantee (Theorem \ref{thm.tsdp}).
While the formal statement with all parameters is given later in Theorem \ref{thm.tsdp}, the main result can be stated informally as in Theorem \ref{thm.dpinformal} below.

\begin{theorem}[Informal version of Theorem \ref{thm.tsdp}]\label{thm.dpinformal}
    Algorithm \ref{alg.ts.orig} satisfies $(\epsilon,\delta)$-differential privacy.
\end{theorem}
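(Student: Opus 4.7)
My plan mirrors the three-step roadmap announced just before the theorem: (i) a per-round GDP guarantee, (ii) GDP composition across the $T$ rounds, and (iii) conversion back to $(\epsilon,\delta)$-DP. Throughout, I fix neighboring streams $R,R'$ differing only at time $\tau$.

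For the per-round step (Lemma~\ref{lem.tsgdp-onestep}), I would first verify by induction on the update rule of Algorithm~\ref{alg.ts.orig} that $\hat{\mu}_{i,t-1} = \frac{1}{n_{i,t-1}+1}\sum_{s<t,\,a_s=i} r_s$. Fixing any realization of the arm history $(a_1,\ldots,a_{t-1})$, the counter $n_{i,t-1}$ is determined, and the only empirical mean affected by the switch $R\to R'$ is that of the unique arm $i^{\ast}$ played at time $\tau$; its sensitivity is at most $1/(n_{i^{\ast},t-1}+1)$ because rewards lie in $[0,1]$, while every other $\hat{\mu}_{j,t-1}$ has sensitivity $0$. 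The sampling step then draws $\theta_{i^{\ast},t}\sim \N\bigl(\hat{\mu}_{i^{\ast},t-1},\tfrac{1}{n_{i^{\ast},t-1}+1}\bigr)$, which is exactly a Gaussian mechanism with noise standard deviation $1/\sqrt{n_{i^{\ast},t-1}+1}$, giving per-round GDP parameter
\[
\eta_t \;=\; \frac{1/(n_{i^{\ast},t-1}+1)}{1/\sqrt{n_{i^{\ast},t-1}+1}} \;=\; \frac{1}{\sqrt{n_{i^{\ast},t-1}+1}}.
\]
The argmax selecting $a_t$ is free by post-processing, and the independent draws $\theta_{j,t}$ for $j\neq i^{\ast}$ cost nothing because their distributions coincide under $R$ and $R'$.

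For composition (Lemma~\ref{lem.orig-gdp}), I would apply the adaptive GDP composition of Lemma~\ref{lem.GDP-comp} to the sequence of per-round mechanisms. Since $\eta_t=0$ for $t\le \tau$ and $n_{i^{\ast},t-1}\ge 1$ for $t>\tau$ (arm $i^{\ast}$ was played at time $\tau$), the uniform bound $\eta_t\le 1/\sqrt{2}$ yields a composed parameter of at most $\sqrt{T/2}$; a sharper trajectory-dependent expression $\sqrt{\sum_{t>\tau}1/(n_{i^{\ast},t-1}+1)}$ is also available. Finally, Lemma~\ref{lem.gdp-to-eps} converts this GDP bound into the explicit $(\epsilon,\delta(\epsilon))$-DP statement of Theorem~\ref{thm.tsdp}.

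The main obstacle is handling adaptivity: once the streams diverge at time $\tau$, the arms chosen under $R$ versus $R'$ can differ from round $\tau+1$ onward, so the counters $n_{i,t-1}$ and the empirical means at later rounds are random and distribution-dependent. I would resolve this by carrying out the per-round analysis \emph{conditionally} on an arbitrary fixed history $(a_1,\ldots,a_{t-1})$, so that at most one coordinate of the vector query $(\hat{\mu}_{1,t-1},\ldots,\hat{\mu}_{N,t-1})$ has nonzero sensitivity, and then applying adaptive GDP composition on top. Everything else in the analysis is routine Gaussian-mechanism bookkeeping.
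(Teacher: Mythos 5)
Your proposal is correct and follows essentially the same route as the paper: a per-round Gaussian-mechanism/GDP analysis of the sampled $\theta$'s (with sensitivity at most $\tfrac{1}{n_{i,t-1}+1}$, noise variance $\tfrac{1}{n_{i,t-1}+1}$, and the argmax handled by post-processing), followed by adaptive GDP composition over $T$ rounds and conversion to $(\epsilon,\delta)$-DP via Lemma~\ref{lem.gdp-to-eps}. Your conditioning-on-history treatment of adaptivity and the observation that $n_{i^{\ast},t-1}\geq 1$ after time $\tau$ match the paper's argument (including its remark that the uniform $1/\sqrt{2}$ bound could be tightened instance-wise), so there is nothing to add.
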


Before formally stating and proving this privacy result, we remark that since differential privacy does not require any changes to the algorithm itself, then all existing regret bounds continue to hold under differential privacy without incurring additional loss. These regret bounds are stated below the Theorem \ref{thm.tsregret}, and are tight under the mild assumption that rewards are bounded in $[0,1]$. 

\begin{theorem}[\cite{AG17}]\label{thm.tsregret}
    The Thompson Sampling algorithm with Gaussian priors (Algorithm \ref{alg.ts.orig}) has regret at most $\sum_{i=1}^N O(\frac{\log T}{\Delta_i} ) $ (problem-dependent), or $O(\sqrt{NT \log N})$ (problem-independent). 
\end{theorem}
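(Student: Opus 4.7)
The plan is to adapt the now-standard analysis framework of \cite{AG12,AG17}, decomposing the expected number of plays of each suboptimal arm into three regimes controlled by high-probability concentration events. For each arm $i$ and time $t$, define two events: $\Eimu$, that the empirical mean $\muitminus$ lies within a $\Delta_i/2$-window of $\mu_i$; and $\Eitheta$, that the posterior sample $\thetait$ is within a $\Delta_i/2$-window of $\muitminus$. Together these imply $\thetait < \mustar$ once arm $i$ has been played $\Theta(\log T / \Delta_i^2)$ times, so past that threshold arm $i$ can only be selected in rounds where the optimal arm's own sample $\theta_{1,t}$ falls below roughly $\mu_i + \Delta_i/2$.

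First I would upper-bound $\E[\nitcap]$ for each suboptimal arm $i$ by splitting its plays into: (i) rounds where $\Eimu$ fails, controlled by a Hoeffding-type tail for the empirical mean of $[0,1]$-bounded rewards; (ii) rounds where $\Eimu$ holds but $\Eitheta$ fails, controlled by a standard Gaussian tail on $\thetait \sim \N(\muitminus, 1/(\nitminus+1))$; and (iii) rounds where both events hold. Regimes (i) and (ii) each contribute only $O(1/\Delta_i^2)$ to $\E[\nitcap]$ once summed over $t$. The heavy lifting is in regime (iii), handled via the Agrawal--Goyal ratio argument: one bounds $\Pr[\armt = i, \Eimu, \Eitheta]$ in terms of $\Pr[\armt = 1, \Eimu, \Eitheta]$ through the ratio of Gaussian tail probabilities for the posterior at arm $1$ centered at $\hat{\mu}_{1,t-1}$ versus the threat level $\mu_i + \Delta_i/2$. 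This yields $\E[\nitcap] = O(\log T / \Delta_i^2)$, which by \eqref{eq:dependentregret} gives per-arm regret $O(\log T / \Delta_i)$ and hence the problem-dependent statement.

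The main obstacle is precisely this ratio/anti-concentration step. Conditional on the filtration $\F_{t-1}$ and on a history in which $\Eimu$ holds for the best arm, one needs a \emph{lower} bound of the form $\Pr[\theta_{1,t} > \mu_i + \Delta_i/2 \mid \F_{t-1}] \ge c / (n_{1,t-1}+1)$ for some constant $c$, obtained from Gaussian anti-concentration on the posterior $\N(\hat{\mu}_{1,t-1}, 1/(n_{1,t-1}+1))$. Turning this pointwise lower bound into a usable bound on $\E[\nitcap]$ requires an inductive tracking of how many plays of the optimal arm are needed before $\hat{\mu}_{1,t-1}$ itself concentrates near $\mustar$, together with a careful summation of a telescoping series in $(n_{1,t-1}+1)^{-1}$. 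This is exactly the step where the Gaussian-prior analysis of \cite{AG17} shaves a logarithmic factor over the Beta-prior version, by exploiting the two-sided Gaussian anti-concentration rather than a weaker one-sided estimate.

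Finally, for the problem-independent bound, I would partition the arms by a threshold $\Delta^{\star}$ on the gap. Arms with $\Delta_i \le \Delta^{\star}$ contribute at most $\Delta^{\star}\sum_i \E[\nitcap] \le \Delta^{\star} T$ to total regret, while arms with $\Delta_i > \Delta^{\star}$ contribute $\sum_i O(\log T / \Delta_i) \le O(N \log T / \Delta^{\star})$. Balancing these two terms with $\Delta^{\star} = \Theta(\sqrt{N \log T / T})$ immediately gives the coarser bound $O(\sqrt{NT \log T})$. To refine the logarithmic factor from $\log T$ to $\log N$ as stated, I would replace the crude per-arm bound by the sharper $\E[\nitcap] = O(\log(T\Delta_i^2)/\Delta_i^2)$ available from the Gaussian-prior analysis, and then optimize over the gap profile $(\Delta_1, \ldots, \Delta_N)$ via a convexity/Jensen argument, which absorbs the extra $\log T$ factor into the worst-case gap configuration and yields the claimed $O(\sqrt{NT \log N})$ rate.
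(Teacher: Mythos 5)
Your skeleton is the same as the one the paper relies on (the theorem is imported from \cite{AG17}, and the paper's Appendix~\ref{app.proofs} reproves the generalized version with parameters $b,c$, which reduces to this statement at $b=0$, $c=1$): decompose $\E[n_{i,T}]$ via two concentration events, handle the first two regimes with Hoeffding and Gaussian tails, handle the third with the Agrawal--Goyal ratio lemma, and get the problem-independent rate by thresholding the gaps at $\Delta^\ast \asymp \sqrt{N\log N/T}$ using the sharper $\log(T\Delta_i^2)$ form of the per-arm bound. However, two of your concrete choices would break the "heavy lifting" step as you describe it. First, your events give, on their intersection, only $\theta_{i,t} \le \hat{\mu}_{i,t-1}+\Delta_i/2 \le \mu_i+\Delta_i = \mu^\ast$, so in regime (iii) you can conclude only that $\theta_{1,t} < \mu^\ast$, not that it falls below $\mu_i+\Delta_i/2$ as you assert. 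This slack matters: with the comparison threshold at $\mu^\ast$ itself, $p_{i,t} = \Pr[\theta_{1,t} > \mu^\ast \mid \F_{t-1}]$ does not tend to $1$ as arm $1$ accumulates pulls (it hovers near a constant, since $\theta_{1,t}$ is centered at $\hat{\mu}_{1,t-1}\approx\mu_1$), so $\sum_k (\E[1/p_{i,\tau_k+1}]-1) = \Theta(T)$ and the ratio argument returns a vacuous linear bound. One needs two midpoints strictly between $\mu_i$ and $\mu_1$ -- the paper uses $x_i=\mu_i+\Delta_i/3$ and $y_i=\mu_1-\Delta_i/3$, with $E_i^{\theta}(t)$ the absolute-threshold event $\{\theta_{i,t}\le y_i\}$ -- so that $p_{i,t}=\Pr[\theta_{1,t}>y_i\mid\F_{t-1}]\to 1$ exponentially fast. (Relatedly, with the absolute-threshold definition your regime (ii) contributes $\Theta(\log(T\Delta_i^2)/\Delta_i^2)$, not $O(1/\Delta_i^2)$; the logarithm in the bound comes from both (ii) and (iii), not from (iii) alone.)

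Second, the anti-concentration step you propose is not the right quantity and is false as stated: conditional on an arbitrary history, $\Pr[\theta_{1,t}>y_i\mid\F_{t-1}]$ can be exponentially small in $n_{1,t-1}$ (when $\hat{\mu}_{1,t-1}$ sits well below $y_i$), so no pointwise bound of the form $c/(n_{1,t-1}+1)$ holds; and even where such a bound held, summing $\E[1/p]\lesssim n_{1,t-1}$ over pulls would blow up. What the argument actually requires -- and what the paper's Lemma~\ref{lem.2.13} supplies -- is a bound on the expectation over histories: $\E[1/p_{i,\tau_k+1}]-1$ is at most an absolute constant for every $k$ (proved by writing it as $\E[G_k]$ for a geometric variable and controlling the maximum of $r$ i.i.d.\ posterior samples via Mill's inequality together with Hoeffding on $\hat{\mu}_{1,\tau_k}$), and is at most $O(1/(T\Delta_i^2))$ once $k \gtrsim \log(T\Delta_i^2)/\Delta_i^2$; there is no telescoping in $(n_{1,t-1}+1)^{-1}$. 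For the problem-independent part, no Jensen/convexity step is needed: the paper simply uses monotonicity of $\Delta\mapsto\log(T\Delta^2)/\Delta$ above $e/\sqrt{T}$, bounds small-gap arms by $T\Delta^\ast$, and observes that at $\Delta^\ast = e\sqrt{N\log N}/\sqrt{T}$ the factor $\log(T\Delta^{\ast 2})$ becomes $O(\log N)$, which is exactly how the $\log T$ is traded for $\log N$.
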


Returning to the privacy analysis, at time $t$, Algorithm~\ref{alg.ts.orig} samples $\thetait \sim \N(\muitminus, \frac{1}{\nitminus})$ independently for each arm, and then selects $\armt = \arg \max_i \thetait $.
First, consider the vector of sampled mean-estimates $\{\theta_{i,t+1}\}_{i \in [N]}$ that are generated by the Thompson Sampling algorithm at time $t+1$ given history $\mathcal{F}_{t}$. The single-step algorithm $\M_{TS}(\mathcal{F}_{t})$ corresponding to one step of Thompson sampling can be described as follows:
\[ \M_{TS}(\mathcal{F}_{t}) = \arg\max_i \theta_{i,t+1}, \quad \text{ where } \theta_{i, t+1} \sim \N (\frac{1}{n_{i,t}+1}\sum_{\tau = 1}^t 1_{a(\tau)=i} r_{\tau}, \frac{1}{n_{i,t} + 1}). \]

Our first result is that this single step mechanism satisfies Gaussian differential privacy.

\begin{lemma}\label{lem.tsgdp-onestep}
The mechanism $\M_{TS}(\mathcal{F}_t)$ satisfies $\sqrt{\frac12}$-GDP with respect to realized rewards.
\end{lemma}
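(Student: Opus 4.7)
The plan is to recognize $\M_{TS}(\mathcal{F}_t)$ as a post-processing of a vector-valued Gaussian mechanism in which only a single coordinate carries any privacy risk, and then read off the GDP parameter from the standard Gaussian mechanism bound.

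First I would fix two neighboring histories $\mathcal{F}_t$ and $\mathcal{F}_t'$ that agree in every coordinate except for one reward at some timestep $\tau \le t$, with the corresponding arm $i^\star := a(\tau)$. Because the arm-pull sequence is identical across the two histories, the counts $n_{i,t}$ agree for every $i$, and the empirical means $\hat{\mu}_{i,t}$ agree for every $i \ne i^\star$; for arm $i^\star$, the closed-form
\[
\hat{\mu}_{i^\star,t} \;=\; \frac{1}{n_{i^\star,t}+1}\sum_{\tau':\, a(\tau')=i^\star} r_{\tau'}
\]
together with the boundedness $r_\tau,r_\tau'\in[0,1]$ gives $|\hat{\mu}_{i^\star,t}-\hat{\mu}_{i^\star,t}'| \le \frac{1}{n_{i^\star,t}+1}$.

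Next I would analyze the intermediate vector $(\theta_{1,t+1},\ldots,\theta_{N,t+1})$, which is obtained by adding independent Gaussian noise to each coordinate. For every $i\ne i^\star$ the marginal distribution is identical under the two histories and therefore contributes no privacy loss. For $i^\star$, the sample $\theta_{i^\star,t+1}$ is precisely the Gaussian mechanism applied to $\hat{\mu}_{i^\star,t}$ with sensitivity $\tfrac{1}{n_{i^\star,t}+1}$ and noise standard deviation $\sigma = \sqrt{\tfrac{1}{n_{i^\star,t}+1}}$. The Gaussian-mechanism GDP bound (as stated at the end of Section \ref{s.dpprelims}) then gives that this coordinate is $\tfrac{1}{\sqrt{n_{i^\star,t}+1}}$-GDP. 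Because $\tau \le t$ and arm $i^\star$ was played at time $\tau$, we have $n_{i^\star,t}\ge 1$, so $\tfrac{1}{\sqrt{n_{i^\star,t}+1}}\le \sqrt{1/2}$.

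To promote this single-coordinate bound to a guarantee for the entire $N$-vector, I would invoke the fact that appending independent noise drawn from an \emph{identical} distribution under the two histories does not change the tradeoff function: formally, $T(\M(R)\times Q,\, \M(R')\times Q) = T(\M(R),\M(R'))$, so the joint vector is also $\sqrt{1/2}$-GDP. Finally, the arm selected is $\arg\max_i \theta_{i,t+1}$, which is a deterministic function of the noisy vector, so post-processing preserves the GDP guarantee, yielding the claimed $\sqrt{1/2}$-GDP for $\M_{TS}(\mathcal{F}_t)$.

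The only subtle step is the bound $n_{i^\star,t}\ge 1$ that pins the per-step parameter to $\sqrt{1/2}$; everything else is pure bookkeeping on top of the Gaussian mechanism and the post-processing invariance of GDP. I do not expect any real technical obstacle, though care is needed to state the neighboring relation at the single-step level (same arm pulls, one changed reward) so that the subsequent adaptive composition argument in Lemma \ref{lem.orig-gdp} can handle the dependence of arm pulls on past rewards.
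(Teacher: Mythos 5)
Your proposal is correct and follows essentially the same route as the paper's proof: view the step as a Gaussian mechanism on the empirical-mean vector, note that only the coordinate of the arm whose reward changed differs between neighbors, bound its sensitivity by $\tfrac{1}{n_{i,t}+1}$ against noise of variance $\tfrac{1}{n_{i,t}+1}$, and finish by post-processing through the argmax. Your justification that $n_{i^\star,t}\ge 1$ holds automatically because the differing reward was realized from that arm at some $\tau\le t$ is in fact a slightly cleaner way of handling the point the paper treats as an assumption about non-degenerate histories.
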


The full proof of Lemma \ref{lem.tsgdp-onestep} is deferred to Appendix \ref{app.tsonestep}, but we give a brief proof sketch here for intuition. Each $\theta_{i,t+1}$ can be expressed as $\hat{\mu}_{i}(\F_t)$ plus an independent Gaussian noise term sampled from $\N (0, \frac{1}{n_{i,t} + 1})$. Viewing $\hat{\mu}_{i}(\F_t)$ as the real-valued query on the data, this is simply an instantiation of the Gaussian Mechanism. The query has sensitivity $s \leq \frac{1}{n_{i,t} + 1}$ for arm $i$, since rewards are bounded between 0 and 1, so the empirical means on neighboring reward vectors can differ by at most $\frac{1}{n_{i,t} + 1}$. Since the variance of the Gaussian noise added is $\frac{1}{n_{i,t} + 1}$, and assuming that $n_{i,t}\geq 1$ -- meaning that at least one reward has been realized from arm $i$, this yields a $\sqrt{\frac12}$-GDP guarantee for this $\theta_{i,t+1}$. Since neighboring reward sequences can only differ in one single reward, they will also differ in only one single arm pull, so we need not consider composition across all $N$ arms. Finally, the outcome of $\M_{TS}(\mathcal{F}_t)$ is simply the argmax of the $\theta_i$, which is post-processing on the private output. 

 In Appendix \ref{app.otherprivacy}, we give an alternative proof for the DP guarantees of $\M_{TS}(\mathcal{F}_t)$, by proving that ReportNoisyMax \cite{DR14} with heterogeneous Gaussian noise, rather than the standard Laplace noise with identifical variance, satisfies DP. This result may be of independent interest, but leads to looser overall privacy bounds for Thompson Sampling due to the composition over a large number of rounds.

Next, we apply the composition guarantees of GDP given in Lemma \ref{lem.GDP-comp} to show that the repeated application of $\M_{TS}(\mathcal{F}_t)$ for $T$ rounds -- as in Thompson Sampling -- also satisfies GDP.

\begin{lemma} \label{lem.orig-gdp}
 The Thompson Sampling algorithm with Gaussian priors run for $T$ total timesteps (Algorithm \ref{alg.ts.orig}) satisfies $\sqrt{\frac{1}{2} T} $-GDP. 
\end{lemma}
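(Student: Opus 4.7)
The plan is to reduce this statement to a direct application of the GDP composition theorem (Lemma \ref{lem.GDP-comp}) combined with the per-round guarantee (Lemma \ref{lem.tsgdp-onestep}). Conceptually, the full Thompson Sampling algorithm is the adaptive composition of the single-step mechanisms $\M_{TS}(\mathcal{F}_t)$ for $t = 0, 1, \ldots, T-1$: at each step the algorithm releases an arm choice, which together with the realized reward updates the history $\mathcal{F}_t$ used by the next step. Since Lemma \ref{lem.tsgdp-onestep} gives a per-round guarantee of $\sqrt{1/2}$-GDP, adaptive composition of $T$ such mechanisms yields $\sqrt{\sum_{t=1}^T 1/2} = \sqrt{T/2}$-GDP, which is exactly the claim.

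In more detail, I would first argue that the output sequence $(a_1, \ldots, a_T)$ of Algorithm \ref{alg.ts.orig} is obtained by iteratively invoking $\M_{TS}(\mathcal{F}_t)$, with $\mathcal{F}_t$ being a deterministic function of the previous outputs $(a_1,\dots,a_t)$ and the realized rewards $(r_1,\dots,r_t)$. The statistics $\hat{\mu}_{i,t}$ and $n_{i,t}$ appearing in Line 4 are precisely the sufficient statistics describing the posterior $\N(\hat{\mu}_{i,t-1}, \frac{1}{n_{i,t-1}+1})$ defining $\M_{TS}(\mathcal{F}_{t-1})$, so no additional access to the reward stream occurs between rounds. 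This makes the overall mechanism exactly an adaptive $T$-fold composition in the sense of Lemma \ref{lem.GDP-comp}.

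Next, I would apply Lemma \ref{lem.tsgdp-onestep} uniformly across rounds to assign $\eta_t = \sqrt{1/2}$ for every $t$, and invoke Lemma \ref{lem.GDP-comp} to conclude $\sqrt{\sum_{t=1}^T \eta_t^2} = \sqrt{T/2}$-GDP. The one place requiring care is the boundary case when some arm has $n_{i,t} = 0$ (including the first round, where all counts are zero). In that case the corresponding posterior is the prior $\N(0,1)$, which has zero sensitivity with respect to the reward stream since no realized reward has ever influenced it; such coordinates of $\M_{TS}$ are trivially $0$-GDP and therefore do not weaken the per-round bound derived under the $n_{i,t}\geq 1$ case. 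Hence $\sqrt{1/2}$ remains a valid per-round GDP parameter regardless of how many arms have been pulled so far.

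The main obstacle, to the extent there is one, is justifying that the adaptivity of Thompson Sampling is genuinely captured by Lemma \ref{lem.GDP-comp}: the history $\mathcal{F}_t$ that parameterizes each $\M_{TS}$ depends on previously released outputs, so one must check that the composition theorem as stated covers this kind of adaptive selection of mechanism. This is standard for GDP composition (the cited result from \cite{DRS19} is explicitly an adaptive composition theorem), so the proof is essentially a careful bookkeeping exercise rather than a technically hard step.
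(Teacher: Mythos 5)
Your proposal is correct and follows essentially the same route as the paper: invoke the per-round $\sqrt{1/2}$-GDP guarantee of Lemma \ref{lem.tsgdp-onestep} and apply the adaptive composition result of Lemma \ref{lem.GDP-comp} to obtain $\sqrt{T/2}$-GDP. Your extra bookkeeping about adaptivity and the $n_{i,t}=0$ boundary case matches the remarks the paper makes in its proof of the one-step lemma, so there is no substantive difference.
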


The proof of Lemma \ref{lem.orig-gdp} follows immediately from the fact that one round of Thompson Sampling is $\sqrt{\frac12}$-GDP by Lemma \ref{lem.tsgdp-onestep}, and then applying GDP composition (Lemma~\ref{lem.GDP-comp}) to get that $T$ rounds of Thompson Sampling is together $\sqrt{\frac{1}{2} T} $-GDP.

Finally, we use Lemma~\ref{lem.gdp-to-eps} to convert the GDP guarantee of Lemma~\ref{lem.orig-gdp} back to the desired $(\epsilon, \delta)$-DP. 

\begin{theorem}\label{thm.tsdp}
    Thompson Sampling with Gaussian priors (Algorithm \ref{alg.ts.orig}) run for $T$ timesteps is  $(\epsilon, \delta(\epsilon) ) $-DP for all $\epsilon \geq 0 $, where $\delta(\epsilon) = \Phi(-\frac{\epsilon}{\sqrt{\frac12 T}} + \frac12 \sqrt{\frac12 T}) - e^{\epsilon} \Phi(-\frac{\epsilon}{\sqrt{\frac12 T}} - \frac12 \sqrt{\frac12 T} )$. 
\end{theorem}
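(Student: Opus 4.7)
The plan is to derive Theorem \ref{thm.tsdp} as a direct corollary of the two lemmas already established in the section, with essentially no new technical work. The heavy lifting has been done in Lemma \ref{lem.tsgdp-onestep} (single-step GDP analysis of $\M_{TS}$ via the Gaussian Mechanism) and Lemma \ref{lem.orig-gdp} (composition of the $T$ single-step guarantees using Lemma \ref{lem.GDP-comp}). So the remaining task is to translate the composed GDP guarantee to the standard $(\epsilon,\delta)$-DP language via Lemma \ref{lem.gdp-to-eps}.

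Concretely, I would first invoke Lemma \ref{lem.orig-gdp} to conclude that Algorithm \ref{alg.ts.orig} is $\eta$-GDP for $\eta = \sqrt{T/2}$. Then I would apply Lemma \ref{lem.gdp-to-eps} with this value of $\eta$: the lemma says that any $\eta$-GDP mechanism is $(\epsilon, \delta(\epsilon))$-DP for all $\epsilon \geq 0$ with
\[
\delta(\epsilon) \;=\; \Phi\!\left(-\tfrac{\epsilon}{\eta} + \tfrac{\eta}{2}\right) \;-\; e^{\epsilon}\,\Phi\!\left(-\tfrac{\epsilon}{\eta} - \tfrac{\eta}{2}\right).
\]
Substituting $\eta = \sqrt{T/2}$ directly yields the claimed expression
\[
\delta(\epsilon) \;=\; \Phi\!\left(-\tfrac{\epsilon}{\sqrt{T/2}} + \tfrac{1}{2}\sqrt{T/2}\right) \;-\; e^{\epsilon}\,\Phi\!\left(-\tfrac{\epsilon}{\sqrt{T/2}} - \tfrac{1}{2}\sqrt{T/2}\right),
\]
which matches the statement of the theorem.

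Since this proof is a two-line chain, there is no genuine technical obstacle at the level of Theorem \ref{thm.tsdp} itself. The only conceptual care needed is to confirm that the notion of neighboring used in Lemma \ref{lem.tsgdp-onestep} (reward streams differing in one coordinate) matches the one used in the composition lemma and the GDP-to-DP conversion, so that the three results compose cleanly, and that the adaptive composition of Lemma \ref{lem.GDP-comp} is the appropriate version to cite because $\F_t$ is built adaptively from previously output arms. If I wanted to be pedantic, I would also note explicitly that post-processing (the $\arg\max$ step and any downstream use of $\armt$) preserves GDP, so the arm outputs over $T$ rounds inherit the composed $\sqrt{T/2}$-GDP bound before the final conversion. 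The true difficulty of the theorem lives upstream, in bounding the sensitivity of the empirical mean and controlling the variance $\tfrac{1}{n_{i,t}+1}$ of the Gaussian posterior sample within Lemma \ref{lem.tsgdp-onestep}.
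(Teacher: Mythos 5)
Your proposal is correct and is exactly the paper's argument: Theorem \ref{thm.tsdp} follows immediately by taking the $\sqrt{T/2}$-GDP guarantee of Lemma \ref{lem.orig-gdp} (itself from Lemma \ref{lem.tsgdp-onestep} plus GDP composition) and converting it to $(\epsilon,\delta(\epsilon))$-DP via Lemma \ref{lem.gdp-to-eps} with $\eta=\sqrt{T/2}$. Your additional remarks about matching the neighboring relation, adaptive composition, and post-processing are consistent with what the paper does upstream, so there is no gap.
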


\begin{remark}
Although GDP provides the tightest composition bounds, this comes at the cost of interpretability of the resulting differential privacy parameters, as observed in the statement of Theorem \ref{thm.tsdp}. In Appendix \ref{app.otherprivacy}, we show that the informal Theorem \ref{thm.dpinformal} can also be proved using the composition methods of standard DP (Theorem \ref{thm.tsregulardp}) and a variant known as Renyi DP (Theorem \ref{thm-rdp}). We also show empirically in Appendix \ref{app.privacycompare} that although these other privacy methods provide more interpretable bounds in terms of the dependence on $T$, the privacy guarantees are often orders of magnitude weaker than those attained using GDP. 
\end{remark}

\begin{remark}
We note that Lemmas \ref{lem.tsgdp-onestep} and \ref{lem.orig-gdp} can be tightened, since the factor $1/2$ in these bounds comes from a loose bound on $\frac{1}{n_{i,t}+1}$ using the fact that $n_{i,t}\geq 1$. However, tightening these lemmas would not in general improve the guarantees of Theorem \ref{thm.tsdp}, since the binding term in the expression comes from the worst-case arm, which could indeed have only been pulled once. However, instance specific accounting might lead to improved realized guarantees empirically, as we observe in Section \ref{s.experiments}.
\end{remark}

\section{Improving the Privacy-Regret Trade-off}\label{s.improving}

In this section, we show how the privacy-regret tradeoff of Thompson Sampling can be improved with a simple modification of the algorithm. Concretely, the modified algorithm first pulls each arm $b$ times before beginning the Thompson Sampling procedure. This serves to give the algorithm a ``warm start'' with more accurate prior beliefs on rewards, rather than $\mathcal{N}(0,1)$. It also decreases the sensitivity of the implicit Gaussian Mechanism that computes $\theta_{i,t}$ by ensuring that each $n_{i,t}$ is at least $b$, thus leading to smaller $\epsilon$ values; on the other hand, the algorithm does not improve its decisions during these $bN$ rounds, and may incur maximum loss during these initial rounds.

The second modification is scaling up the variance used to sample $\theta_{i,t}$ by a factor of $c \geq 1$. This serves to improve the $\epsilon$ privacy guarantees of the algorithm since more noise is added, but it also adds higher levels of noise to the algorithm's estimated empirical reward of each arm, thus increasing regret. The complete Modified Thompson Sampling algorithm with both of these changes is presented in Algorithm~\ref{alg.ts.modified}.

\begin{algorithm}
\begin{algorithmic}[1]
\caption{Modified Thompson Sampling with Gaussian priors}\label{alg.ts.modified}
\State \textbf{Input:} number of arms $N$, time horizon $T$, number of pre-pulls $b$, variance multiplier $c\geq 1$
\State Initialize $\hat{\mu}_{i, 0} = 0, \  n_{i,0} = 0$ for each $i = 1, \ldots, N$
\For {$i = 1, 2, \ldots, N$}
\For {$j = 1, 2, \ldots, b$}
\State Play arm $i$ and realize reward $r_{i,j}$ 
\State \textbf{Output} $i$
\State Update $ \hat{\mu}_{i,} = \frac{\hat{\mu}_{i,0}   (n_{i,0}  + 1) + r_{i,j}}{n_{i, 0} + 2}$, and $ n_{i,0} = n_{i,0} + 1 $
\EndFor
\EndFor
\For {$t = 1, 2, \ldots, T-bN$}
    \State For each arm $i = 1, \ldots, N$, sample independently $\theta_{i,t} \sim \N(\hat{\mu}_{i,t-1}, \frac{c}{n_{i,t-1}+1})$
    \State Play arm $a_t : = \arg \max _i \theta_{i,t}$ and realize $r_t$
    \State \textbf{Output} $a_t$
    \State For $i=a_t$, update $ \hat{\mu}_{i, t} = \frac{\hat{\mu}_{i, t-1}  (n_{i,t-1} + 1) + r_t}{n_{i, t-1} + 2}$, and $ n_{i,t} = n_{i,t-1} + 1$
    \State For all $i \neq a_t$, update $ \hat{\mu}_{i, t} =\hat{\mu}_{i, t-1}$, and $ n_{i,t} = n_{i,t-1}$
\EndFor
\end{algorithmic}
\end{algorithm}

We show that by tuning $b$ and $c$, the Modified Thompson Sampling algorithm can achieve both tighter privacy guarantees and lower regret, than the values achieved under the existing settings of $b=0$ and $c=1$ that correspond to standard Thompson Sampling (Algorithm \ref{alg.ts.orig}). The remainder of this section provides analysis of the privacy guarantee and the regret bound of Algorithm~\ref{alg.ts.modified}.

\subsection{Privacy Guarantees}

The privacy analysis of Algorithm \ref{alg.ts.modified} follows a similar structure as that of Algorithm \ref{alg.ts.orig} in Section \ref{s.tsisdp}. We start with Lemma \ref{gdp-ts-b}, which gives the GDP guarantee.

\begin{lemma}\label{gdp-ts-b}
    Modified Thompson Sampling with Gaussian priors and input parameters $(b,c)$ run for $T$ timesteps satisfies $\sqrt{ \frac{1}{c(b+1)} T} $-GDP. 
\end{lemma}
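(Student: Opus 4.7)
The plan is to follow the same three-step structure used in the proof of Lemma \ref{lem.orig-gdp}: establish a single-step GDP guarantee for one round of the modified sampling step, observe that the pre-pulling phase is privacy-free, and then compose across rounds via Lemma \ref{lem.GDP-comp}.

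First I would handle the pre-pulling phase (lines 3--8 of Algorithm \ref{alg.ts.modified}). Each arm pulled in this phase is fixed by the deterministic for-loop schedule and does not depend on the realized rewards at all. Hence on any two neighboring reward streams the distributions of outputs during pre-pulling are literally identical, so this phase contributes $0$ to the GDP parameter and can be absorbed into the composition for free. The rewards observed during pre-pulling do, however, shift the initial values of $\muit$ and $\nit$ used when the Thompson sampling phase begins; I would handle this simply by tracking the fact that $\nit \geq b$ for every arm $i$ and every $t$ from the start of the Thompson sampling phase onward.

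Next, I would analyze one round of the Thompson sampling phase (lines 10--17), following the logic of Lemma \ref{lem.tsgdp-onestep}. Such a round is $\arg\max_i \thetait$ with $\thetait \sim \N(\muitminus, c/(\nitminus+1))$, i.e.\ the Gaussian Mechanism applied to the query $\muitminus$ followed by the post-processing arg-max. Since rewards lie in $[0,1]$ and a neighbor changes a single reward, $\muitminus$ has sensitivity at most $1/(\nitminus+1)$; combined with the inflated noise standard deviation $\sqrt{c/(\nitminus+1)}$, the Gaussian Mechanism yields a ratio
\[
\frac{s_f}{\sigma} \;\leq\; \frac{1/(\nitminus+1)}{\sqrt{c/(\nitminus+1)}} \;=\; \frac{1}{\sqrt{c(\nitminus+1)}} \;\leq\; \frac{1}{\sqrt{c(b+1)}},
\]
where the last inequality uses $\nitminus \geq b$. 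Thus each single step is $\frac{1}{\sqrt{c(b+1)}}$-GDP. As in the original proof, neighboring reward streams differ at only one timestep and hence affect only one arm's empirical mean at a time, so no inter-arm composition is needed within a round.

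Finally I would apply GDP composition (Lemma \ref{lem.GDP-comp}) over the $T - bN$ Thompson sampling rounds (each $\frac{1}{\sqrt{c(b+1)}}$-GDP) together with the pre-pulling rounds (each $0$-GDP), obtaining an overall guarantee of $\sqrt{(T-bN)/(c(b+1))}$-GDP, which is upper bounded by the stated $\sqrt{T/(c(b+1))}$-GDP. There is no real obstacle here: the only subtlety compared to Lemma \ref{lem.orig-gdp} is the bookkeeping that turns $b$ into an improved lower bound on $\nit+1$ and turns $c$ into an inflated noise scale; both enter cleanly into the ratio $s_f/\sigma$ and the rest is a direct reuse of the machinery already developed for Algorithm \ref{alg.ts.orig}.
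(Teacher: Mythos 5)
Your proposal is correct and follows essentially the same route as the paper: bound the per-round Gaussian-Mechanism ratio by $\tfrac{1}{\sqrt{c(b+1)}}$ using sensitivity $\tfrac{1}{n_{i,t}+1}$, the inflated noise scale $\sqrt{c/(n_{i,t}+1)}$, and $n_{i,t}\geq b$, then apply GDP composition (Lemma \ref{lem.GDP-comp}). Your explicit observation that the deterministic pre-pull outputs are $0$-GDP, yielding the slightly tighter $\sqrt{(T-bN)/(c(b+1))}$ before relaxing to $\sqrt{T/(c(b+1))}$, is a minor refinement the paper does not spell out, but the argument is otherwise identical.
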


The full proof of Lemma \ref{gdp-ts-b} is presented in Appendix \ref{app.modifiedprivacy}, and we give a proof sketch here.  Similar to the proof of Lemma \ref{lem.tsgdp-onestep} in Section \ref{s.tsisdp}, the proof begins with a privacy analysis of the single step of the mechanism at a fixed time $t$. The changes for this modified algorithm are in the sensitivity of $\muit$ and in the noise that is added. Recall that the GDP parameter of the Gaussian Mechanism is $s_f/\sigma$ when sensitivity of the function is $s_f$ and the added Gaussian noise has variance $\sigma^2$. In Algorithm \ref{alg.ts.modified}, this expression is:
\[     
\left| \tfrac{\frac{1}{n_{i,t} + 1 }}{\sqrt{\frac{c}{n_{i,t} + 1}}}\right|  
    =  \tfrac{1}{\sqrt{ c( n_{i,t} + 1 )} } 
    \leq  \tfrac{1}{\sqrt{c(b+1)}}.\]

This single shot GDP guarantee is then composed across $T$ rounds using Lemma \ref{lem.GDP-comp}, to give $\sqrt{ \frac{1}{c(b+1)} T} $-GDP over $T$ rounds.

Finally, the GDP guarantee of Lemma \ref{gdp-ts-b} is converted to a differential privacy guarantee using Lemma \ref{lem.gdp-to-eps} to yield the final privacy guarantees of Algorithm \ref{alg.ts.modified}, presented in Theorem \ref{thm.mtsprivacy}.

\begin{theorem}\label{thm.mtsprivacy}
    Modified Thompson Sampling with Gaussian priors and input parameters $(b,c)$ run for $T$ timesteps satisfies $(\epsilon, \delta(\epsilon) ) $-DP for all $\epsilon \geq 0 $, where $\delta(\epsilon) = \Phi(-\tfrac{\epsilon}{\sqrt{  \frac{1}{c(b+1)} T}} +  \tfrac12 \sqrt{ \tfrac{1}{c(b+1)} T} ) - e^{\epsilon} \Phi(-\tfrac{\epsilon}{\sqrt{  \frac{1}{c(b+1)} T}} - \tfrac12 \sqrt{  \tfrac{1}{c(b+1)} T} ).$
\end{theorem}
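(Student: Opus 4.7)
The plan is to derive Theorem \ref{thm.mtsprivacy} as a direct corollary of Lemma \ref{gdp-ts-b} together with the GDP-to-DP translation in Lemma \ref{lem.gdp-to-eps}. Since Lemma \ref{gdp-ts-b} already establishes that Algorithm \ref{alg.ts.modified} is $\eta$-GDP with $\eta = \sqrt{T/(c(b+1))}$, the substantive privacy analysis is done upstream and no further probabilistic argument is required at the level of the theorem itself.

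The second step is purely mechanical: invoke Lemma \ref{lem.gdp-to-eps}, which says that any $\eta$-GDP mechanism is $(\epsilon, \delta(\epsilon))$-DP for every $\epsilon \geq 0$ with
$$\delta(\epsilon) = \Phi\left(-\tfrac{\epsilon}{\eta} + \tfrac{\eta}{2}\right) - e^{\epsilon}\,\Phi\left(-\tfrac{\epsilon}{\eta} - \tfrac{\eta}{2}\right).$$
Substituting $\eta = \sqrt{T/(c(b+1))}$ reproduces the expression in the theorem statement verbatim, so the proof itself collapses to a single line once the two lemmas are cited.

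Because both ingredients already live in the paper, the theorem contains no genuinely difficult step on its own. The real technical content sits upstream in Lemma \ref{gdp-ts-b}: there one must (i) bound the sensitivity of $\hat{\mu}_{i,t-1}$ by $1/(n_{i,t-1}+1) \leq 1/(b+1)$, using the warm-start guarantee $n_{i,t-1} \geq b$, pair it with the inflated sampling variance $c/(n_{i,t-1}+1)$, and conclude per-step $\tfrac{1}{\sqrt{c(b+1)}}$-GDP via the Gaussian Mechanism; (ii) note that the $bN$ pre-pull rounds output fixed arm indices independent of the realized rewards and hence contribute $0$-GDP, so they add nothing under composition; and (iii) apply the GDP composition bound of Lemma \ref{lem.GDP-comp} across at most $T$ rounds. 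Given all of this, the only ``obstacle'' left in proving Theorem \ref{thm.mtsprivacy} is a one-line verification that plugging $\eta = \sqrt{T/(c(b+1))}$ into Lemma \ref{lem.gdp-to-eps} matches the stated $\delta(\epsilon)$, which it does by inspection. I would therefore write the proof as exactly that: a short two-sentence argument citing Lemma \ref{gdp-ts-b} and Lemma \ref{lem.gdp-to-eps} and doing the substitution.
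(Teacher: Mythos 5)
Your proposal is correct and matches the paper's proof exactly: the paper also obtains Theorem \ref{thm.mtsprivacy} by citing the $\sqrt{T/(c(b+1))}$-GDP guarantee of Lemma \ref{gdp-ts-b} (itself proved via the per-round Gaussian-mechanism bound $1/\sqrt{c(b+1)}$ and Lemma \ref{lem.GDP-comp} composition) and then substituting this value of $\eta$ into Lemma \ref{lem.gdp-to-eps}. Your observation that the pre-pull rounds are reward-independent and contribute nothing to the privacy loss is a correct refinement the paper leaves implicit, but it does not change the bound.
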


To illustrate the impact of $b$ and $c$ on the privacy guarantees of Theorem \ref{thm.mtsprivacy}, Figure~\ref{fig:impact_b_c} visualizes the tradeoff between $\epsilon$ and $\delta$ under varying $b$ and $c$.  We observe that relative to the values $b=0$ and $c=1$ corresponding to the special case of standard Thompson Sampling, increasing these parameters can lead to substantial improvements in the privacy guarantee. This means that even a small amount of prepulling or increase in the variance of sampling $\theta$ can lead to dramatically tighter privacy guarantees, relative to the standard Thompson Sampling algorithm. Further empirical analysis, including the impact of $b$ and $c$ on regret, is deferred to our experimental results in Section \ref{s.experiments}.

\begin{figure}[ht]
    \centering
    \includegraphics[scale=0.5]{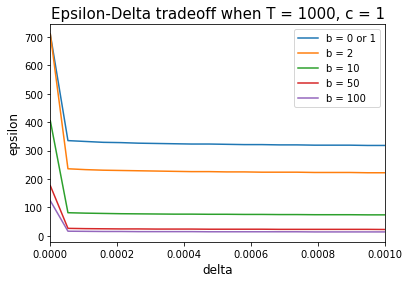}
    \includegraphics[scale=0.5]{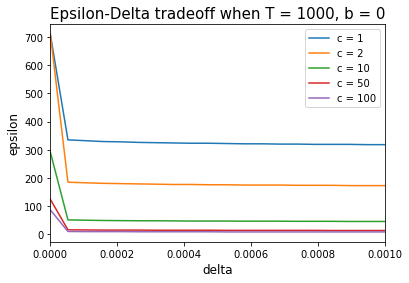}
    \caption{\small{DP parameter $\epsilon$ as a function of $\delta$ when fixing $T=1000$ and varying $b$ (left) and $c$ (right). Note that the per-round GDP parameter is $\sqrt{\frac{1}{c(b+1)}}$, so the role of $c$ and $b$ are nearly symmetric, leading to nearly identical plots on the left and right. Of course, these parameters can also be varied together.} 
}
    \label{fig:impact_b_c}
\end{figure}

\subsection{Regret Guarantees}

Since Algorithm \ref{alg.ts.modified} differs from the standard Thompson Sampling algorithm, new regret analysis is needed. Theorem \ref{thm:regret} gives both problem-dependent and problem-independent regret bounds for Algorithm \ref{alg.ts.modified}, based on both parameters $b$ and $c$. Relative to the guarantees of Theorem \ref{thm.tsregret} from \cite{AG17} for standard Thompson Sampling, we see that regret increases by a factor of $c$. 

\begin{restatable}{theorem}{mainregret}\label{thm:regret} 
Consider the Modified Thompson Sampling with Gaussian priors and input parameters $(b,c)$ run for $T > bN + \frac{4}{\min_i \Delta_i^2}$ timesteps, where $b \geq 0, c\geq 1$. Then the algorithm has expected regret $bN + O(c \sqrt{N(T-bN) \log N})$ (problem-independent), or $bN + \sum_{i=1}^N O(c \frac { \log (T-bN)  } {\Delta_i})$ (problem dependent). 
\end{restatable}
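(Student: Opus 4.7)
My plan is to split the horizon into the deterministic pre-pulling phase of length $bN$ and the Thompson Sampling phase of length $T-bN$. During pre-pulling, each arm is pulled exactly $b$ times in a fixed round-robin schedule, so since rewards lie in $[0,1]$ the regret incurred in this phase is at most $bN$. This yields the additive $bN$ term in both bounds and reduces the problem to bounding the regret of the Thompson Sampling loop starting from a state in which every arm has already been pulled exactly $b$ times and $\nit$ starts at $b$ rather than $0$.

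\textbf{Adapting the Agrawal--Goyal analysis.} For the Thompson Sampling phase I would follow the proof structure of \cite{AG17} for Gaussian priors and track precisely where the new parameters $b$ and $c$ enter. In the original analysis one fixes a suboptimal arm $i$ and controls $\E[\nitcap]$ via two good events: $\Eimu$, that the empirical mean $\muit$ concentrates around $\mu_i$, and $\Eitheta$, that the posterior sample $\thetait$ is close to $\muit$. With scaled variance $\frac{c}{\nitminus+1}$, the Gaussian concentration bounds controlling $\Pr(\overline{\Eitheta})$ and the anti-concentration lower bounds on $\Pr(\thetait > \mustar - \Delta_i/2)$ pick up factors of $\sqrt{c}$ in their exponents. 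This in turn inflates the threshold number of pulls $L_i$ beyond which arm $i$ is unlikely to be played from $\Theta(\log(T-bN)/\Delta_i^2)$ to $\litvalue$ with $T$ replaced by $T-bN$, producing the extra factor of $c$ in the final bound.

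\textbf{Key technical steps and main obstacle.} The post-pre-pulling regret is then bounded via the standard decomposition
\[\E[\nitcap]\Delta_i \le L_i\Delta_i + \Delta_i \sum_{t=1}^{T-bN}\Pr(\armt=i,\,\nit > L_i),\]
and the summation is controlled by summing over three bad events: (i) $\overline{\Eimu}$, handled by Hoeffding-type concentration on the empirical mean; (ii) $\Eimu\cap\overline{\Eitheta}$, handled by Gaussian tail bounds for $\thetait$ with variance scaled by $c$; and (iii) both good events hold but arm $i$ is still selected, which is controlled by relating the probability of picking a suboptimal arm to the probability that the optimal arm's sampled value exceeds a threshold. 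The step I expect to require the most care is (iii), since this is where \cite{AG17} picks up the dominant $\log T/\Delta_i$ factor through a sum of the form $\sum_k \frac{1}{\Pr(\thetait[1] > \mustar - \Delta_i/2 \mid n_{1,t}=k)} - 1$; I must re-prove the geometric-type bound on this sum after replacing the variance by its $c$-scaled counterpart while also exploiting $\nit\ge b$ to tighten constants. The warm-start from pre-pulling will also let me use tail bounds uniformly valid for $n\ge b$ without the degenerate $n=0$ case that forces looseness in the original analysis.

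\textbf{Problem-independent bound.} Finally, to pass from the problem-dependent bound $bN+\sum_{i} O(c\log(T-bN)/\Delta_i)$ to the problem-independent bound $bN+O(c\sqrt{N(T-bN)\log N})$, I would use the standard threshold trick: fix $\Delta>0$, split arms into those with $\Delta_i\le\Delta$ (whose cumulative regret over $T-bN$ rounds is at most $(T-bN)\Delta$) and those with $\Delta_i>\Delta$ (whose regret is controlled by the problem-dependent bound), and then optimize $\Delta=\Theta(c\sqrt{N\log N/(T-bN)})$. The assumption $T>bN+4/\min_i\Delta_i^2$ guarantees that $L_i<T-bN$ so the problem-dependent bound is non-vacuous, and handling the boundary case where $\Delta_i$ is close to this threshold is another place where I must verify that the constants work out cleanly.
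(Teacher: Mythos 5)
Your proposal follows essentially the same route as the paper's proof: the $bN$ pre-pull charge plus the Agrawal--Goyal three-event decomposition (empirical-mean concentration via Hoeffding, Gaussian tail bounds on $\theta_{i,t}$ with the $c$-scaled variance, and the geometric-type bound on $\sum_k\bigl(\E[1/p_{i,\tau_k+1}]-1\bigr)$ for the optimal arm, re-proved with variance $c/(n_{1,t}+1)$ and the warm start $n_{i,t}\geq b$), followed by the standard threshold trick for the problem-independent bound. The only differences (midpoints at $\Delta_i/2$ rather than $\Delta_i/3$, and a $c$-dependent threshold in the final optimization) are immaterial, so your plan is sound as stated.
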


The full proof of Theorem \ref{thm:regret} is deferred to Appendix \ref{app.proofs}. It generalizes the analysis of \cite{AG17} to incorporate the new parameters $b$ and $c$. Without loss of generality, we assume arm 1 is the unique optimal arm. The key idea is to note that $\E[\mathcal{R}(T, \pi)]=\sum_{i=1}^N\Delta_i\E[n_{i,T}]$ and $\E[n_{i,T}]=\sum _{t = 1} ^T \Pr [a_t = i ]$. Therefore in order to control the expected regret it suffices to control the probability that arm $i$ gets play at any time $t$. This can be done by intersecting this probability to events of the form  $\{\muitminus-\mu_i \leq \Delta_i/3\}$ and $\{\thetait -\mu_1\leq -\Delta_i/3\}$, where arm 1 is assumed WLOG to be the optimal arm. Since rewards are bounded in $[0,1]$ and the $\thetait$ are Gaussian, one can tightly  control these probabilities by carefully conditioning on the history using Chernoff-type bounds tailored for this problem.

\section{Experiments}\label{s.experiments}

In this section, we evaluate the empirical performance of the modified Thompson Sampling algorithm under varying $b$ and $c$ parameter values. This both helps illustrate performance of the algorithm in terms of privacy and regret, and it also helps illustrate the role of the additional parameters. Recall that $b=0$ and $c=1$ is the special case corresponding to standard Thompson Sampling.  We vary combinations of $(b,c)$ that achieve the same fixed privacy budget, as measured by the GDP parameter, which we also vary. These experiments can also provide insight for identifying the optimal $(b,c)$ values to minimize regret given a fixed privacy budget. We consider two families of true reward distributions: Bernoulli and exponentially distributed. 
In Appendix~\ref{app.comparison}, we also compare our algorithm against two recent non-TS-based DP algorithms for online bandit problems: DP-SE \cite{SS19} and Anytime-Lazy-UCB \cite{HU21}). All experiments were run on a personal laptop with an M1 Pro chip in around 2 hours. 

\subsection{Bernoulli rewards}

We start with the experimental setting of \cite{HH22}, where $N=5$ arms have Bernoulli rewards with means [0.75, 0.625, 0.5, 0.375, 0.25] respectively.
We let $T=10^5$ and vary the desired $\eta$-GDP privacy parameter to be $1, 2$ and $5$. We vary the $(b,c)$ parameters jointly to ensure that the desired GDP guarantee is obtained. Recall from Lemma \ref{gdp-ts-b} that to ensure $\mu$-GDP for $T=10^5$ rounds, $b$ and $c$ must satisfy $\eta=\sqrt{ \frac{1}{c(b+1)}}10^{2.5}$. 

Note that standard Thompson Sampling with $b=0$ and $c=1$ would yield $10^{2.5} \approx 316$-GDP, which is orders of magnitude higher than the privacy parameters considered here, so performance for these parameter values are not shown in the plots.

\begin{figure}[tbh]
    \centering
\includegraphics[scale=0.4]{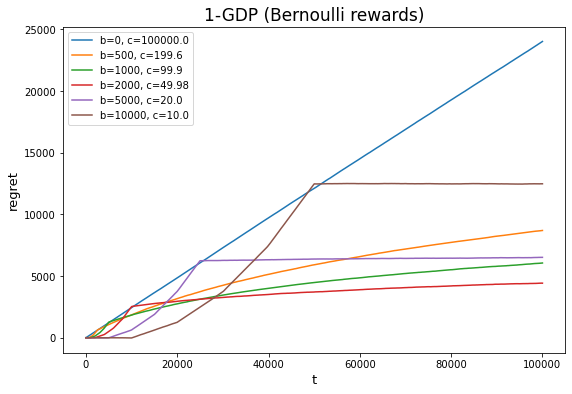}
\includegraphics[scale=0.4]{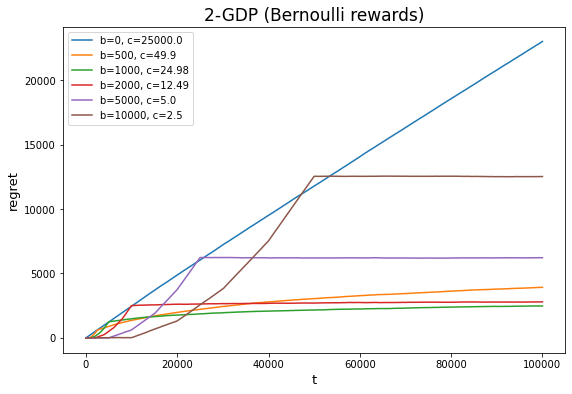}
\includegraphics[scale=0.4]{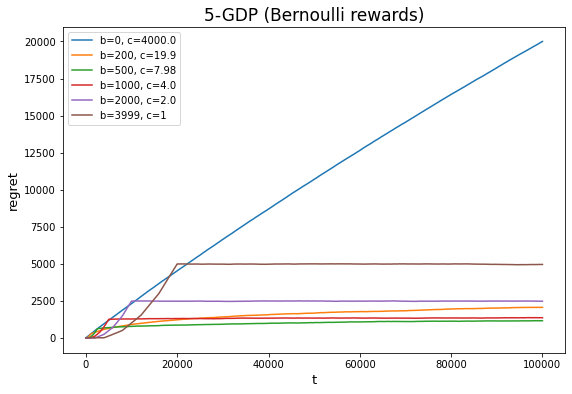}
    \caption{\small{Empirical regret of Algorithm \ref{alg.ts.modified} under varying $(b,c)$ when rewards are Bernoulli distributed.}
    }
    \label{fig:sec5set1}
\end{figure}

Figure \ref{fig:sec5set1} shows the \emph{empirical regret} for each parameter combination over time, defined as,
\begin{equation}\label{eq.empregret}
\E[\mathcal{R}(T, \pi)] = \mustar T - \textstyle\sum_{t=1}^T r_{t}.
\end{equation}
The empirical regret is averaged over 10 runs, for which we already observe convergence of the average regret empirically. 
For each parameter setting, we observe an initial period of high regret, corresponding to the pre-pulling phase; this is more pronounced for larger $b$ values. Afterwards, there is a phase transition to much lower per-round regret, once the algorithm begins the traditional Thompson Sampling phase. Parameter regimes with larger $b$ values perform extremely well in this phase, both because they have a more accurate warm-start from the pre-pulls, and because larger $b$ corresponds to smaller $c$ for the same fixed GDP guarantees, corresponding to lower variance in the sampling step. Smaller $b$ values do not suffer this initial period of loss, but they do incur more per-round regret; at the extreme, $b=0$ has so much noise that its estimates do not converge. 
The convex shape of the regret in the pre-pulling phase is an artifact of our specific setting, where the algorithm pre-pulls the arms with the highest average rewards first. 

We also observe that the lowest regret at time $T$ is achieved by parameter settings with intermediate values of $b$ and $c$, rather than settings that only involve tuning each parameter alone. This suggests that an optimal tuning strategy would increase both $b$ and $c$ together. Finally, we observe that the values of $b$ and $c$ that lead to the lowest empirical regret depend on the privacy budget, meaning that parameter tuning to optimize the privacy-regret tradeoff must take into account the desired privacy level. %The regret for all parameters decreases with weaker privacy guarantees, as expected.

\begin{figure}[htb]
    \centering
    \includegraphics[scale=0.6]{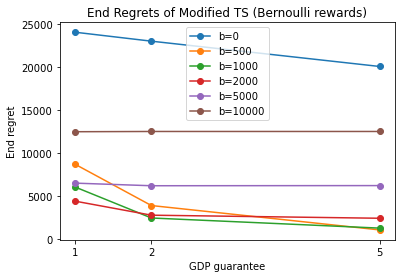}
    \caption{\small{Total regret at $T=100,000$ of the modified Thompson Sampling algorithm under varying privacy guarantees, when rewards are Bernoulli distributed.} }
    \label{fig:newpic_endregrets_diffgdp_bern}
\end{figure}

Figure~\ref{fig:newpic_endregrets_diffgdp_bern} shows impact of the parameters $b$ and $c$ on the privacy-regret trade-off of modified Thompson Sampling (Algorithm~\ref{alg.ts.modified}) by plotting the final empirical regret at time $T=100,000$ against the GDP guarantee for varying values of $b$. For each fixed privacy guarantee and $b$ value, we use the minimum $c$ value required to satisfy the privacy budget. These $c$ values are not reported in the legend because they also vary with the GDP parameter, but they can be read off of the corresponding plots in Figures \ref{fig:sec5set1}.

Figure~\ref{fig:newpic_endregrets_diffgdp_bern} further illustrates that optimal parameter tuning should involve intermediate $b$ and $c$ values. We observe that the highest and lowest values of $b$ correspond to the highest regret, across all privacy levels, and that substantially lower regret is achieved with intermediate values of $b$. We also observe that, as expected, regret is lower with lower levels of privacy protection. This effect is more pronounced with smaller values of $b$. One possible explanation is that when $b$ is large, the primary source of regret is the pre-pulls, so varying the privacy level does not significantly impact the overall regret. However, when $b$ is smaller, varying the privacy level does impact the regret more significantly.

\subsection{Truncated exponential rewards}

Next, we consider rewards sampled from the truncate exponential distribution on [0,1] with varying rates.  We again consider $N=5$ arms respectively with exponential rates [0.1, 1, 2, 5, 10], corresponding to means of approximately [0.492, 0.418, 0.343, 0.193, 0.1].  We again fix $T=10^5$, and vary the $(b,c)$ parameters jointly to achieve desired GDP guarantees of $\eta = 1,2,5$.

Figure \ref{fig:sec5set2} shows the empirical regret as defined in Equation \eqref{eq.empregret} for each parameter combination over time, averaged over 10 runs. We observe qualitatively similar findings as with the Bernoulli rewards: larger $b$ corresponds to high initial regret and then low per-round regret; larger $c$ corresponds to higher per-round regret, with the largest value of $c$ (i.e., $b=0$) having much higher regret; the optimal regret-minimizing parameter regimes involve intermediate values of $b$ and $c$; and the optimal parameter values depend on the desired GDP parameter.

\begin{figure}[tbh]
    \centering
\includegraphics[scale=0.4]{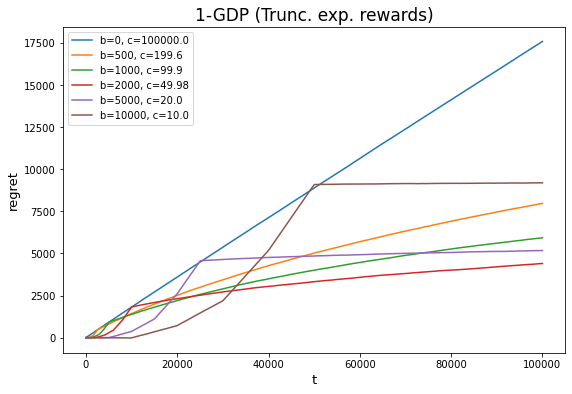}
\includegraphics[scale=0.4]{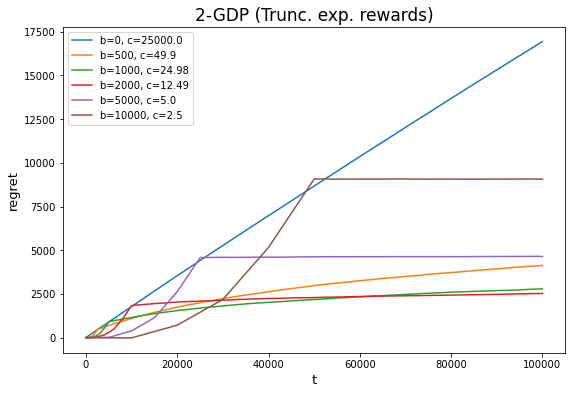}
\includegraphics[scale=0.4]{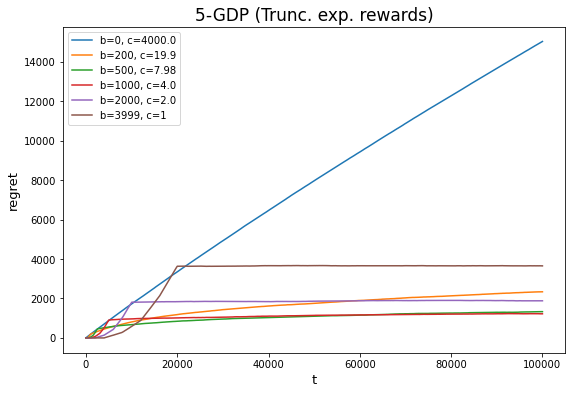}
    \caption{\small{Empirical regret of Algorithm \ref{alg.ts.modified} under varying $(b,c)$ when rewards are generated from a truncated exponential distribution.}
    }
    \label{fig:sec5set2}
\end{figure}

Figure~\ref{fig:newpic_endregrets_diffgdp_te} shows the total regret at time $T=100,000$ for different privacy levels and $b$ values -- with their corresponding $c$ values as indicated in Figure \ref{fig:sec5set2}. We also observe similar findings to the case of Bernoulli rewards: the lowest regret is achieved with intermediate values of $b$, with substantially higher regret under the largest and smallest values of $b$. For smaller $b$ values, we observe that increasing the privacy parameter (i.e., weaker privacy) leads to lower regret, whereas this effect is less noticeable for larger values of $b$.

\begin{figure}[ht]
    \centering
    \includegraphics[scale=0.6]{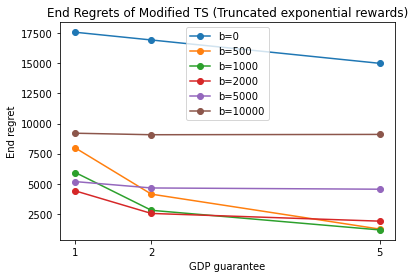}
    \caption{\small{Total regret at $T=100,000$ of the modified Thompson Sampling algorithm under varying privacy guarantees, when rewards are generated from truncated exponential distributions.}}
    \label{fig:newpic_endregrets_diffgdp_te}
\end{figure}

\section{Conclusion}
In this work, we analyze the privacy guarantees of the Thompson Sampling algorithm \cite{T1933,russoetal2018} with Gaussian priors, which is commonly used for learning with bandit feedback. We show that the original Thompson Sampling algorithm satisfies differential privacy without any modifications, leveraging structural similarities between the algorithm's sampling procedure and existing DP tools, namely the Gaussian Mechanism. Importantly, this result means that there is \emph{no} loss in performance from adding privacy, and known regret bounds \cite{AG17} still hold for the private algorithm.

Additionally, we show that two small modifications to the algorithm -- namely pre-pulling each arm $b$ times and scaling up the variance of sampling noise by a factor of $c$ -- enables tunable privacy guarantees. The resulting privacy and regret guarantees depend on the values of the new parameters $b$ and $c$, which can be tuned to substantially improve the privacy guarantee at only a small increase in regret. We demonstrate our theoretical results empirically on two different reward distributions, and show substantial improvements in regret for a fixed privacy guarantee by properly tuning the parameters $b$ and $c$.

\bibliography{ref}
\bibliographystyle{alpha}

\newpage 

\appendix

\section{Omitted Privacy Proofs}\label{app.omitted}

\subsection{Proof of Lemma \ref{lem.tsgdp-onestep}}\label{app.tsonestep}

\begin{lemma2}
The mechanism $\M_{TS}(\mathcal{F}_t)$ satisfies $\sqrt{\frac12}$-GDP with respect to observed rewards.
\end{lemma2}

\begin{proof}

We first note that Thompson Sampling itself is inherently an adaptive algorithm whose output at any time step $t$ depends on its previous outputs at time steps $1,2,\ldots, t-1$. In order to prove the privacy guarantee of Algorithm~\ref{alg.ts.orig} composed for $T$ time steps, it suffices to show that at every time step, conditioning on the outputs in the previous time steps, the algorithm is $\sqrt{1/2}$-GDP. 
To prove Lemma \ref{lem.tsgdp-onestep}, consider two neighboring histories $\mathcal{F}_t$ and $\mathcal{F}'_t$ of length $t$ that differ only in one observed reward.
Consider the vector-valued $N$-dimensional query $\hat{\mu}(\mathcal{F}_t) = (\hat{\mu}_{1}(\F_t), \ldots, \hat{\mu}_{N}(\F_t))$ that computes the empirical mean of observed rewards from each arm, given reward history $\F_t$:
\[\hat{\mu}_{i}(\F_t) = \frac{1}{n_{i,t}+1}\sum _{\tau = 1}^t 1_{a_{\tau}=i} r_{\tau}. \]
Then for each $i \in [N]$, $\theta_{i,t+1}$ can be expressed as $\hat{\mu}_{i}(\F_t)$ plus an independent Gassian noise term sampled from $\N (0, \frac{1}{n_{i,t} + 1})$.

We then analyze the privacy guarantee of releasing $\{ \theta_{i,t+1} \} _{i \in [N]}$,  which is sufficient to determine the arm pulled at time $t+1$, as simply the argmax of all $\theta_{i,t+1}$. Let $j$ denote this arm. Since $\theta_{j,t+1}$ is the only value that would depend on the reward observed at time $t$, then it would be the only value that differs between neighboring databases of  reward histories $\F_t$ and $\F'_t$. Therefore the (single shot) $N$-dimensional empirical mean query has sensitivity $s = \max | 
 \hat{\mu} _{j}(\F_t) - \hat{\mu}_{j}(\F'_t)| \leq \frac{1}{n_{j,t} + 1}$. The standard deviation of noise added is $\sqrt{\frac{1}{n_{j,t}+1}}$, so the GDP parameter is at most:
 \begin{align}
    \left| \frac{\frac{1}{n_{j,t} + 1}}{\sqrt{\frac{1}{n_{j,t} + 1}}}\right|  = \frac{1}{\sqrt{n_{j,t} + 1}} \leq \frac{1}{\sqrt{2}}.
    \label{Single-step-gdp}
\end{align}

The final inequality in Equation \eqref{Single-step-gdp} comes from the assumption that $n_{j,t} \geq 1$, which is required to avoid the degenerate case of empty database $\hat{\mu}_{j,t}$ used in the Gaussian mechanism.
Note that if this were not the case, then $\theta_{j,t+1} \sim \N( 0,1 )$, and there would be no privacy loss associated with arm $j$ because there would be no data to protect.

\end{proof}

\subsection{Proof of Lemma \ref{gdp-ts-b}}\label{app.modifiedprivacy}

\begin{lemma4}
Modified Thompson Sampling with Gaussian priors and input parameters $(b,c)$ run for $T$ timesteps satisfies $\sqrt{ \frac{1}{c(b+1)} T} $-GDP. 
\end{lemma4}

\begin{proof}
We first show the per-round GDP guarantee of Algorithm~\ref{alg.ts.modified} with respect to the observed rewards is $\sqrt{\frac{1}{(b+1)c}}$, and then compose across rounds using Lemma~\ref{lem.GDP-comp} to reach the final result. To prove the per-round privacy guarantee, consider two neighboring histories $\mathcal{F}_t$ and $\mathcal{F}'_t$ of length $t$ that differ only in one observed reward. 
Consider the vector-valued $N$-dimensional query $\hat{\mu}(\mathcal{F}_t) = (\hat{\mu}_{1}(\F_t), \ldots, \hat{\mu}_{N}(\F_t))$ that computes the empirical mean of observed rewards from each arm, given reward history $\F_t$:
\[\hat{\mu}_{i}(\F_t) = \frac{1}{n_{i,t}+1}\sum _{\tau = 1}^t 1_{a_{\tau}=i} r_{\tau}. \]
For each $i \in [N]$, $\theta_{i,t+1}$ can be expressed as $\hat{\mu}_{i}(\F_t)$ plus an independent Gassian noise term sampled from $\N (0, \frac{c}{n_{i,t} + 1})$.

Following the same argument as in the proof of Lemma~\ref{lem.tsgdp-onestep}, we analyze the privacy guarantee of releasing $\{ \theta_{i,t+1} \} _{i \in [N]}$, which is sufficient to determine the arm pulled at time $t+1$ as $\arg\max_i \theta_{i,t+1}$. Denote this arm as $j$. As in the proof of Lemma \ref{lem.tsgdp-onestep}, $\theta_{j,t+1}$ is the only value that would depend on the reward observed at time $t$, so it would be the only value that differs between neighboring histories $\F_t$ and $\F'_t$. Therefore the (single shot) $N$-dimensional empirical mean query is essentially a histogram query with sensitivity $s = \max | 
 \hat{\mu} _{j}(\F_t) - \hat{\mu}_{j}(\F'_t)| \leq \frac{1}{n_{j,t} + 1}$. 
 
To calculate the GDP parameter, with a variance multiplier $c$, the standard deviation of the Gaussian noise added is now $\sqrt{\frac{c}{n_{j,t}+1}}$. Additionally, we have a new lower bound for $n_{j,t}$, of $b$, due to the $b$ pre-pulls of each arm. Then the GDP parameter is at most:
\begin{align*}
    \left| \frac{\frac{1}{n_{j,t} + 1}}{\sqrt{\frac{c}{n_{j,t} + 1}}}\right|  = \frac{1}{\sqrt{c (n_{j,t} + 1 )}} \leq \frac{1}{\sqrt{c(b+1)}}.
\end{align*}

Then, using Lemma~\ref{lem.GDP-comp} which composes the single-step privacy guarantee adaptively, the GDP parameter over $T$ timesteps is 
\[\sqrt{ \sum _{t=1}^T \left(\frac{1}{\sqrt{c(b+1) } }
\right)^2 } = \sqrt{  \frac{1}{c(b+1)} T}.\]
\end{proof}

%%%%% %%%%% %%%%% %%%%% %%%%% %%%%%
%%%%%% Appendix B %%%%%
%%%%% %%%%% %%%%% %%%%% %%%%% %%%%%
\section{Omitted Regret Proofs}\label{app.proofs}

We adapt the arguments of \cite{AG17} to our modified Thompson sampling algorithm. We first provide the main argument of the proof in Section \ref{app:regret_proof}, followed by proofs of three auxiliary lemmas in Section \ref{app:regret_lemmata}.

\subsection{Proof of Theorem \ref{thm:regret}}
\label{app:regret_proof}

\mainregret*

Recall the notation of Algorithm \ref{alg.ts.modified}, that $a_t$ is the arm played and outputted at time $t$, $n_{i,t}$ is the number of times arm $i$ is pulled after $t$ time steps, and $\muit = \frac{1}{n_{i, t} + 1}(\sum _{j=1}^b r_{i,j} + \sum_{\tau = 1: a_\tau = i} ^{t} r_t)$ is the empirical mean of rewards for arm $i$ after $t$ time steps. At time step $t$, the sample $\theta_{i,t}$ is drawn from the posterior distribution from observations in the previous $t-1$ timesteps, i.e. $\theta_{i,t} \sim \N (\hat{\mu}_{i,t-1}, \frac{1}{n_{i,t-1} + 1})$. We emphasize that at time step $t$, the sample  $\theta_{i,t} \sim \N (\hat{\mu}_{i,t-1}, \frac{1}{n_{i,t-1} + 1})$ is drawn from a distribution updated by data up to time $t-1$. That is, the decision $a_t$ only depends on $\hat{\mu}_{i,t-1}$ and $\frac{1}{n_{i,t-1} + 1}$.

Recall that the expected problem-dependent regret (Equation \eqref{eq:dependentregret}) can be written as: 
\[ \E[\mathcal{R}(T, \pi )]=\sum_{i=1}^N\Delta_i\E[n_{i,T}].\]

We will therefore need to bound the expected number of suboptimal plays of each arm $i$ during $T$ time steps. Without loss of generality, we assume arm 1 is the optimal arm, and define $\Delta_i = \mu_1 - \mu_i$ to be the suboptimality gap of arm $i$; if there are multiple optimal arms, this will only improve regret.

For each $i$, define $x_i = \mu_i + \Delta_i / 3$ and $y_i = \mu_1 - \Delta_i / 3$. These will serve as two ``mid-points" between $\mu_i$ and $\mu_1$.  Thus for $i\neq 1$, we have $\mu_i < x_i < y_i < \mu_1$, and for $i = 1$, we have $\mu_i = \mu_1 = x_1 = y_1$.  
We will also define the events $\Eimu=\{\muitminus \leq x_i\}$, 
and $\Eitheta=\{\thetait \leq y_i\}$. 
Note that when the number of observed rewards of arm $i$ increases,  empirical mean $\hat{\mu}_{i,t-1}$ convergences to $\mu_i$, making the probability of event $\Eimu$  tend to $1$  and that of $\Eitheta$  tend to $0$.  We denote the respective complements of these events by $\overline{E_i^{\mu} (t)}$ and $\overline{E_i^{\theta} (t)}$. Finally, we define $\pit := \Pr[\theta_{1,t} > y_i | \F_{t-1}]$. Note that this is a random variable that depends on $ \F_{t-1}$, and has a fixed value if  $\F_{t-1}$ is instantiated to be a particular history $F_{t-1}$.

Given $b$ prepulls per arm and $T>bN$, this notation and the law of total probability leads to the following identity on the number of arm pulls. 
\begin{align}
   \nonumber \E [\nitcap ] & = b+\sum _{t = 1} ^{T-bN} \Pr \left[a_t = i \right] \\
 \label{eq:thm_regret_eq1}   & = b+ \sum _{t = 1} ^{T-bN} \Pr \left[a_t = i, \overline{E_i ^{\mu} (t)} \right] 
    + \sum _{t = 1} ^{T-bN} \Pr \left[a_t = i, E_i ^{\mu} (t), \overline{E_i^{\theta} (t)} \right] +\sum _{t = 1} ^{T-bN} \Pr \left[a_t = i, E_i ^{\mu} (t), E_i^{\theta} (t) \right].
\end{align}
We will upper bound each of these three terms separately.
Lemmas \ref{lem.2.15} and \ref{lem.2.16} bound the first two terms in Equation \eqref{eq:thm_regret_eq1}, and are both proven in Section \ref{app:regret_lemmata}.

\begin{restatable}{lemma}{firstterm}\label{lem.2.15}
%\begin{lemma}\label{lem.2.15}
$\sum _{t = 1} ^{T-bN} \Pr \left[a_t = i,  \overline{E_i^{\mu} (t)} \right] \leq\frac{9}{\Delta_i^2}e^{-2b\Delta_i^2/9}.$ 
%\end{lemma}
\end{restatable}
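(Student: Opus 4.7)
\medskip

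The plan is to reduce the statement to a tail-probability calculation on i.i.d.\ rewards, then exploit the structure of the sum to turn it into a geometric series.

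First I would recast the event $\overline{E_i^\mu(t)}$ in terms of the raw empirical mean $\bar{\mu}_{i,t-1} := \frac{1}{n_{i,t-1}}\sum \text{(realized rewards of arm $i$)}$. Since $\hat\mu_{i,t-1} = \frac{n_{i,t-1}}{n_{i,t-1}+1} \bar\mu_{i,t-1}$, the event $\hat\mu_{i,t-1} > x_i$ implies $\bar\mu_{i,t-1} > x_i = \mu_i + \Delta_i/3$. So it suffices to bound $\sum_t \Pr[a_t = i,\ \bar\mu_{i,t-1} > \mu_i + \Delta_i/3]$.

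Next, I would use a standard coupling: let $X_1, X_2, \ldots$ be an i.i.d.\ sequence from $\D_i$ (including the $b$ pre-pull rewards as the first $b$ terms) with partial sums $S_k = \sum_{j=1}^k X_j$. Because every reward observed from arm $i$ is an independent draw from $\D_i$ regardless of when arm $i$ is selected, the rewards from arm $i$ seen up through time $t-1$ are exactly $X_1,\ldots,X_{n_{i,t-1}}$, so $\bar\mu_{i,t-1} = S_{n_{i,t-1}}/n_{i,t-1}$. Each timestep with $a_t = i$ increments $n_{i,t-1}$ by one and we always have $n_{i,t-1} \geq b$, so
\begin{align*}
\sum_{t=1}^{T-bN} \mathbf{1}\{a_t = i,\ \bar\mu_{i,t-1} > x_i\}
\ \leq\ \sum_{k=b}^{\infty} \mathbf{1}\{S_k/k > \mu_i + \Delta_i/3\}.
\end{align*}
Taking expectations and applying Hoeffding's inequality for $[0,1]$-valued variables gives $\Pr[S_k/k - \mu_i > \Delta_i/3] \leq \exp(-2k\Delta_i^2/9)$.

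The final step is to sum the resulting geometric series:
\begin{align*}
\sum_{k=b}^{\infty} e^{-2k\Delta_i^2/9}
\ =\ \frac{e^{-2b\Delta_i^2/9}}{1 - e^{-2\Delta_i^2/9}}.
\end{align*}
Using the elementary inequality $1 - e^{-x} \geq x/2$ for $x \in [0,1]$ (which applies here since $2\Delta_i^2/9 \leq 2/9$ because $\Delta_i \leq 1$), the denominator is at least $\Delta_i^2/9$, yielding the claimed bound $\tfrac{9}{\Delta_i^2}e^{-2b\Delta_i^2/9}$. The only mildly delicate point is the coupling/pigeonhole step that converts a sum indexed by timesteps into one indexed by the number of pulls of arm $i$; everything else is routine concentration.
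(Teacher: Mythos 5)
Your proposal is correct and follows essentially the same route as the paper's proof: re-indexing the sum over timesteps to a sum over the number of pulls of arm $i$ (which is at least $b$), applying Hoeffding's inequality to the i.i.d.\ rewards, summing the geometric series, and bounding the denominator via $1-e^{-x}\geq x/2$ (equivalently the paper's $\frac{1}{1-e^{-x}}\leq\frac{2}{x}$ for $x\in(0,2/9]$). The only cosmetic differences are that the paper re-indexes through the stopping times $\tau_{i,k}$ and keeps the finite geometric sum, while you use the explicit reward-stack coupling and sum to infinity; both yield the identical bound $\frac{9}{\Delta_i^2}e^{-2b\Delta_i^2/9}$.
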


\begin{restatable}{lemma}{secondterm}\label{lem.2.16}
Let $T\geq bN+\frac{1}{\Delta_i^2}e^{\frac{1}{4\pi}}$. Then, $\sum _{t = 1} ^{T-bN} \Pr \left[a_t = i, E_i ^{\mu} (t), \overline{E_i^{\theta} (t)} \right]   \leq \max\{0, \frac{18c\log ((T-bN) \Delta_i ^2)}{\Delta_i^2 } - b \} +  \frac{1}{\Delta_i^2}$. 
\end{restatable}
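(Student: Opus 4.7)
I plan to follow the structure of the analogous lemma in \cite{AG17}, adapted to the modified algorithm with parameters $b$ and $c$. The central observation is that on the event $\Eimu$, the empirical mean $\muitminus$ lies below $x_i = \mu_i + \Delta_i/3$, so the Gaussian sample $\thetait \sim \N(\muitminus, c/(\nitminus+1))$ must deviate from its mean by at least $y_i - x_i = \Delta_i/3$ to land above $y_i$. The probability of this tail event decays exponentially in $\nitminus$, so once $\nitminus$ is large enough it contributes negligibly to the sum.

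First I condition on $\F_{t-1}$. Since $\muitminus$ and $\nitminus$ are $\F_{t-1}$-measurable, the event $\Eimu$ is $\F_{t-1}$-measurable, and the standard Gaussian tail bound $\Pr[\N(0,1)>z]\le e^{-z^2/2}$ applied to the posterior gives
\begin{equation*}
1_{\Eimu}\cdot \Pr[\overline{\Eitheta} \mid \F_{t-1}] \;\leq\; 1_{\Eimu}\cdot\exp\!\Bigl(-\tfrac{\Delta_i^2 (\nitminus+1)}{18c}\Bigr).
\end{equation*}
The factor $c$ in the denominator is inherited from the enlarged sampling variance, exactly as in the GDP analysis of Lemma~\ref{gdp-ts-b}.

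Next I introduce the threshold $L_i := \frac{18c\log((T-bN)\Delta_i^2)}{\Delta_i^2}$, chosen so that once $\nitminus \geq L_i$ the exponential bound above is at most $\frac{1}{(T-bN)\Delta_i^2}$. The hypothesis $T \geq bN + \frac{1}{\Delta_i^2}e^{1/(4\pi)}$ is a mild positivity condition that ensures this threshold is well-defined and that the tail estimate is meaningful. I then split the sum according to whether $\nitminus < L_i$ or $\nitminus \geq L_i$. For the first sub-sum I retain only the $a_t=i$ indicator: because $\nitminus$ starts at $b$ after the prepull phase and increments by $1$ each time arm $i$ is played, the number of indices $t\leq T-bN$ with $a_t=i$ and $\nitminus<L_i$ is at most $\max\{0,L_i-b\}$, giving the first term of the stated bound. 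For the second sub-sum I drop the $a_t=i$ indicator and use the Gaussian tail bound above, obtaining a per-round contribution of at most $\frac{1}{(T-bN)\Delta_i^2}$, which sums over the at most $T-bN$ rounds to $\frac{1}{\Delta_i^2}$.

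The main obstacle is the bookkeeping in the split: the two sub-sums use different parts of the integrand --- the $a_t=i$ indicator for the counting argument, and the $\overline{\Eitheta}$ Gaussian tail bound for the concentration argument --- and the threshold $L_i$ must be matched to the posterior standard deviation $\sqrt{c/(\nitminus+1)}$ so that both the $c$-dependence and the $b$-offset track correctly into the final expression. Once these pieces are aligned, the two sub-bounds add to the claimed quantity.
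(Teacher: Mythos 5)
Your proposal is correct and takes essentially the same route as the paper's proof: the same split of the sum at the pull-count threshold $\frac{18c\log((T-bN)\Delta_i^2)}{\Delta_i^2}$, the same counting argument yielding the $\max\{0,\cdot-b\}$ term for rounds with few pulls, and a Gaussian tail bound on $\Pr[\theta_{i,t}>y_i\mid \mathcal{F}_{t-1}]$ for rounds with many pulls. The only minor difference is that you use the plain sub-Gaussian bound $e^{-z^2/2}$ where the paper invokes Mill's inequality, whose prefactor $\frac{1}{2\sqrt{\pi\log((T-bN)\Delta_i^2)}}$ is where the hypothesis $T\geq bN+\frac{1}{\Delta_i^2}e^{1/(4\pi)}$ is actually used; your version reaches the same per-round bound $\frac{1}{(T-bN)\Delta_i^2}$ without needing that prefactor.
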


In order to bound the third term in Equation \eqref{eq:thm_regret_eq1}, we will need Lemma \ref{lem.2.8}.

\begin{lemma}[\cite{AG17}] \label{lem.2.8}
For all $t$, $i \neq 1$ and histories $F_{t-1}$ we have 
\begin{align*}
    \Pr \left[a_t = i, \Eimu, \Eitheta | \F_{t-1}= F_{t-1}  \right] \leq \frac{1 - \pit}{\pit} \Pr \left[a_t = 1, \Eimu, \Eitheta | \F_{t-1}=F_{t-1} 
    \right].
\end{align*}
\end{lemma}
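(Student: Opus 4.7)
The plan is to fix a history $F_{t-1}$ and exploit the conditional independence of the samples $\thetajt$ across arms. First, note that $\Eimu$ is measurable with respect to $\F_{t-1}$ since it depends only on $\muitminus$, so conditional on $\F_{t-1} = F_{t-1}$ the indicator of $\Eimu$ is a deterministic constant. If $\Eimu$ fails then the left-hand side of the claimed inequality is zero and there is nothing to prove, so I may assume it holds, reducing the goal to showing
\[
    \pit \cdot \Pr[\armt = i,\, \Eitheta \mid \F_{t-1} = F_{t-1}] \;\leq\; (1 - \pit)\cdot \Pr[\armt = 1,\, \Eitheta \mid \F_{t-1} = F_{t-1}].
\]

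I would then decompose both probabilities by conditioning on the vector $\Theta_{-1} := (\theta_{2,t}, \ldots, \theta_{N,t})$, which is independent of $\theta_{1,t}$ given $\F_{t-1}$. Call a realization of $\Theta_{-1}$ \emph{$i$-favorable} if $\thetait \leq y_i$ (so $\Eitheta$ automatically holds) and $\thetait = \max_{j \neq 1} \thetajt$. On such a realization, conditional on $\Theta_{-1}$ and $F_{t-1}$, the event $\{\armt = i,\, \Eitheta\}$ reduces to $\{\theta_{1,t} < \thetait\}$, while $\{\armt = 1,\, \Eitheta\}$ reduces to $\{\theta_{1,t} > \thetait\}$. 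Since $\thetait \leq y_i$, monotonicity gives the pointwise bounds
\[
    \Pr[\theta_{1,t} < \thetait \mid F_{t-1}] \;\leq\; \Pr[\theta_{1,t} \leq y_i \mid F_{t-1}] \;=\; 1 - \pit, \qquad \Pr[\theta_{1,t} > \thetait \mid F_{t-1}] \;\geq\; \Pr[\theta_{1,t} > y_i \mid F_{t-1}] \;=\; \pit,
\]
so the ratio of conditional probabilities is at most $(1-\pit)/\pit$ on every $i$-favorable realization. On non-$i$-favorable realizations the conditional probability of $\{\armt = i,\, \Eitheta\}$ is zero, since $\armt = i$ requires $\thetait$ to exceed every $\thetajt$ with $j \neq 1$. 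Integrating against the law of $\Theta_{-1}$ then yields the claim.

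The main subtlety I anticipate is precisely the asymmetry on the non-$i$-favorable realizations: the integrand for $\{\armt = 1,\, \Eitheta\}$ can remain positive there (arm $1$ may dominate everyone even when $\thetait$ is not the runner-up in $\Theta_{-1}$), while the integrand for $\{\armt = i,\, \Eitheta\}$ vanishes. I must verify carefully that discarding these contributions from the right-hand side only strengthens the inequality, which it does because both integrands are nonnegative. Beyond this bookkeeping, the proof is purely a decoupling argument that isolates $\theta_{1,t}$ from the rest of the sample vector, together with the two elementary monotonicity bounds above; no concentration inequality or algorithm-specific tail estimate is needed.
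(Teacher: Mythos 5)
Your proof is correct. Conditioning on $\F_{t-1}$ indeed makes $\Eimu$ deterministic; the disintegration over $\Theta_{-1}=(\theta_{2,t},\dots,\theta_{N,t})$ is legitimate because the samples are drawn independently across arms given the history; on the $i$-favorable set your two monotonicity bounds give the pointwise inequality $\pit\Pr[\theta_{1,t}<\thetait\mid \F_{t-1}=F_{t-1}]\le \pit(1-\pit)\le(1-\pit)\Pr[\theta_{1,t}>\thetait\mid \F_{t-1}=F_{t-1}]$, and off that set the left integrand vanishes (note there are two failure modes, not just the one you cite: either $\thetait>y_i$, which kills $\Eitheta$ outright, or arm $i$ is not the leader among $j\neq 1$, which makes $\armt=i$ impossible), so integrating over the law of $\Theta_{-1}$ gives the claim; finally $\pit>0$ since $\theta_{1,t}$ is Gaussian with positive variance, so returning from the product form to the ratio form is harmless.

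For comparison: the paper does not prove this lemma itself but imports it from \cite{AG17}, and the argument there is organized differently, though it rests on the same two ingredients (conditional independence of $\theta_{1,t}$ from the other samples, and the threshold comparison $\thetait\le y_i$). Rather than disintegrating on the full vector $\Theta_{-1}$, \cite{AG17} sandwiches both probabilities through the single bridging event $\{\thetajt\le y_i \text{ for all } j\neq 1\}$: the event $\{\armt=i\}\cap\Eitheta$ forces every sample below $y_i$, which upper bounds the left side by $(1-\pit)\Pr[\thetajt\le y_i\ \forall j\neq 1\mid \F_{t-1}]$, while $\{\theta_{1,t}>y_i\ge\thetajt\ \forall j\neq 1\}$ forces $\armt=1$ and $\Eitheta$, which lower bounds the right side by $\pit\Pr[\thetajt\le y_i\ \forall j\neq 1\mid \F_{t-1}]$. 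Your pointwise version is slightly more granular (it compares conditional probabilities given the whole realization of $\Theta_{-1}$ instead of passing through a common bridging event) at the cost of a bit more bookkeeping; both routes are equally valid here.
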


Taking conditional expectations with respect to $\mathcal{F}_{t-1}$, followed by Lemma~\ref{lem.2.8}  we see that
\begin{align}
\sum _{t = 1} ^{T-bN} \Pr \left[a_t = i, E_i ^{\mu} (t), E_i^{\theta} (t) \right] 
&= \sum _{t = 1} ^{T-bN} \E \left[ \Pr (a_t = i, E_i ^{\mu} (t), E_i^{\theta} (t) | \F_{t-1} ) \right] \nonumber \\
& \leq  \sum _{t = 1} ^{T-bN} \E \left[  \frac{1-\pit}{\pit} \Pr (a_t = 1, E_i ^{\mu} (t), E_i^{\theta} (t) | \F_{t-1} )\right]  \hspace{0.5cm} \nonumber \\
& =  \sum _{t = 1} ^{T-bN} \E \left[\E \left[ \frac{1-\pit}{\pit} \mathbbm{1} \big(a_t = 1, E_i ^{\mu} (t), E_i^{\theta} (t)  \big) \Big| \F_{t-1} \right]   \right]\nonumber \\
& =  \sum _{t = 1} ^{T-bN} \E \left[\frac{1-\pit}{\pit} \mathbbm{1} (a_t = 1, E_i ^{\mu} (t), E_i^{\theta} (t)  ) \right].  \label{step.1stterm.1}
\end{align}
The second step is an application of Lemma~\ref{lem.2.8}, and the third step uses the fact that $\pit$ is fixed given $F_{t-1}$ and the outer expectation is taken over all possible histories $\F_{t-1}$.

Now, let $\tau_k$ be the time at which arm 1 is played for the $k$-th time, not counting the pre-pull stage, so that $n_{1,\tau_k} = b+k$. 
Note that $\pit =  \Pr[\theta_{1,t} > y_i | \F_{t-1}]$ changes only if the distribution of $\theta_{1,t}$ changes. Thus, $\pit$ is the fixed for all $t \in \{ \tau_k + 1, \ldots, \tau_{k+1} \}$ for every $k$. Then,
\begin{align}
\sum _{t = 1} ^{T-bN} \E \left[\frac{1-\pit}{\pit} \mathbbm{1} (a_t = 1, E_i ^{\mu} (t), E_i^{\theta} (t)  ) \right]  & = \sum _{k = 0} ^{T-bN - 1} \E \left[ \frac{1 - p_{i, \tau_k + 1}}{p_{i, \tau_k + 1}}  \sum _{t=\tau_k+1} ^ {\tau_{k+1}}\mathbbm{1} \left(a_t = 1, \Eimu, \Eitheta \right)\right] \nonumber \\
& \leq \sum _{k = 0} ^{T - bN-1} \E \left[ \frac{1 - p_{i, \tau_k + 1}}{p_{i, \tau_k + 1}} \right] \nonumber\\
&= \sum _{k = 0} ^{T-bN -1} \E \left[ \frac{1 }{p_{i, \tau_k + 1}}-1\right].   \label{step.Ebreakdown} 
\end{align}
The first step re-indexes from $t$ to $\tau_k$, and the second step is because we know arm 1 will only be played once for $t \in \{ \tau_k + 1, \ldots, \tau_{k+1} \}$.

To continue bounding $ \E [  \frac{1}{p_{i, \tau_{k} + 1}} ] - 1$, we require Lemma \ref{lem.2.13}, which is also proved in Section \ref{app:regret_lemmata}.

\begin{restatable}{lemma}{thirdtermlem}\label{lem.2.13}
%\begin{lemma} \label{lem.2.13}
Let $\tau_k$ be the first time when arm 1 is played for the  $k$-th time excluding the pre-pulls, i.e. $n_{1,\tau_k} = b + k$ and $n_{1,t} < b+ k$ for $t < \tau_k$.  
Then for  $T\geq bN+\frac{4}{\Delta_i^2}$, 
\[
    \E \left[  \frac{1}{p_{i, \tau_{k} + 1}}  \right] -1\leq
\begin{cases}
   71 & \text{for all } k,\\
    \frac{4}{(T-bN)\Delta_i^2}      & \text{for } k \geq \max\{1,\frac{72c}{\Delta_i^2}\log ((T-bN)\Delta_i^2) - (b+1) \}.
\end{cases}
\]
%\end{lemma}
\end{restatable}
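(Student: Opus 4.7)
The plan is to bound $\E[1/p_{i,\tau_k+1}]-1$ by a careful Gaussian tail analysis, adapting the argument of \cite{AG17} to track the new parameters $b$ and $c$. The starting point is the observation that, conditional on $\F_{\tau_k}$, the sample $\theta_{1,\tau_k+1}$ is distributed as $\N(\hat{\mu}_{1,\tau_k}, \sigma_k^2)$ with $\sigma_k^2 = c/(b+k+1)$, so $p_{i,\tau_k+1} = \bar{\Phi}(z_k)$ where $z_k := (y_i - \hat{\mu}_{1,\tau_k})/\sigma_k$ and $\bar{\Phi} = 1 - \Phi$. The only source of randomness is $\hat{\mu}_{1,\tau_k}$, which is (up to a vanishing $O(1/(b+k+1))$ prior-offset bias) an average of $b+k$ bounded rewards with mean $\mu_1$, so Hoeffding's inequality gives $\Pr[\hat{\mu}_{1,\tau_k} \le \mu_1 - s] \le \exp(-2(b+k+1)s^2)$ for all $s > 0$. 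Paired with the Gaussian Mills ratios, this is the backbone of the whole argument.

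For the universal bound of $71$, I would split across the event $\{\hat{\mu}_{1,\tau_k} \ge y_i\}$, on which $p_{i,\tau_k+1} \ge 1/2$ and hence $1/p_{i,\tau_k+1} - 1 \le 1$, and its complement. On the complement the lower Mills ratio $\bar{\Phi}(z) \ge \tfrac{1}{\sqrt{2\pi}} \tfrac{z}{z^2+1} e^{-z^2/2}$ yields a pointwise estimate of the form $1/p_{i,\tau_k+1} \le \sqrt{2\pi}(z_k + 1/z_k)e^{z_k^2/2}$, which I then integrate against the sub-Gaussian tail of $\hat{\mu}_{1,\tau_k}$. The integral converges because the Hoeffding exponent $2(b+k+1)s^2$ dominates the Gaussian exponent $z_k^2/2 = (b+k+1)s^2/(2c)$ whenever $c \ge 1$ (with $s = y_i - \hat{\mu}_{1,\tau_k}$). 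Computing the resulting numerical constant yields the bound $71$.

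For the refined bound in the regime $b+k+1 \ge \frac{72c}{\Delta_i^2}\log((T-bN)\Delta_i^2)$, I would partition across the good event $G := \{\hat{\mu}_{1,\tau_k} \ge \mu_1 - \Delta_i/6\}$ and its complement. On $G$ we have $\hat{\mu}_{1,\tau_k} - y_i \ge \Delta_i/6$, so $-z_k \ge \Delta_i/(6\sigma_k)$ and the hypothesis on $k$ gives $z_k^2/2 \ge \log((T-bN)\Delta_i^2)$; the upper Mills ratio $\bar{\Phi}(|z_k|) \le e^{-z_k^2/2}/(|z_k|\sqrt{2\pi})$ combined with $p_{i,\tau_k+1} \ge 1/2$ on $G$ then contributes an amount of order $1/((T-bN)\Delta_i^2)$ to $\E[(1-p)/p]$. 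On $G^c$, Hoeffding bounds the event probability by $\exp(-(b+k+1)\Delta_i^2/18) \le ((T-bN)\Delta_i^2)^{-4c} \le ((T-bN)\Delta_i^2)^{-4}$, and multiplying by the universal bound of $71$ from the preceding paragraph contributes at most $71/((T-bN)\Delta_i^2)^{4}$, which under the theorem's assumption $(T-bN)\Delta_i^2 \ge 4$ is comfortably absorbed into the final bound $4/((T-bN)\Delta_i^2)$.

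The main obstacle I anticipate is the tracking of numerical constants, especially obtaining exactly $71$ in the universal regime; this reduces to controlling an integral of the form $\int (1+z^2)e^{z^2/2}/z \cdot (\text{Hoeffding density})$ over the lower tail of $\hat{\mu}_{1,\tau_k}$. The choice $y_i = \mu_1 - \Delta_i/3$ rather than $\mu_1$ is crucial, because it creates a uniform gap that keeps this integral convergent, and the assumption $c \ge 1$ is crucial because it ensures the Hoeffding exponent dominates the Gaussian exponent in the bad region. A secondary delicate point is the boundary case $k = 0$, where the effective sample size driving $\hat{\mu}_{1,\tau_k}$ is entirely supplied by the pre-pull budget $b$, and the universal constant must still hold uniformly.
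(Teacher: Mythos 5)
Your route differs from the paper's in the universal-constant step. The paper rewrites $\E[1/p_{i,\tau_k+1}]-1=\E[G_k]$ for a geometric random variable $G_k$ (number of i.i.d.\ draws $\Theta_j\sim\N(\hat\mu_{1,\tau_k},\tfrac{c}{b+k+1})$ until one exceeds $y_i$), then bounds $\Pr[G_k\geq r]$ by inserting the random threshold $\hat\mu_{1,\tau_k}+\sqrt{\tfrac{c}{b+k+1}}\sqrt{\log r}$, applying Mill's inequality to the maximum of $r$ copies and Hoeffding to the event that the threshold exceeds $y_i$, and summing over $r$ (split at $e^{11}$) to get $71$. You instead bound $1/\bar\Phi(z_k)$ pointwise via the lower Mills ratio and integrate directly against the sub-Gaussian deviation of $\hat\mu_{1,\tau_k}$; this is a genuinely different and arguably more transparent computation, and since $c\geq 1$ makes the Hoeffding exponent $2(b+k+1)s^2$ dominate $z_k^2/2=(b+k+1)(s-\Delta_i/3)^2/(2c)$ uniformly, the resulting constant is in fact well below $71$, so the numerical target is not a problem. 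For the refined regime both you and the paper condition on the same good event $\{\hat\mu_{1,\tau_k}\geq\mu_1-\Delta_i/6\}$ with the same Gaussian-tail and Hoeffding estimates.

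Two steps in your plan need repair before this is a proof. First, your pointwise bound $1/p\leq\sqrt{2\pi}(z_k+1/z_k)e^{z_k^2/2}$ blows up as $z_k\to0^+$, while $1/p\to 2$ there; if $\hat\mu_{1,\tau_k}$ has a density near $y_i$ (continuous rewards), integrating the $1/z_k$ term against that density can diverge. You must treat the band $0<z_k\leq 1$ separately (there $1/p-1\leq 1/\bar\Phi(1)-1<6$, multiplied by $\Pr[\hat\mu_{1,\tau_k}<y_i]\leq e^{-2(b+k)\Delta_i^2/9}$), and use a layer-cake/integration-by-parts argument since Hoeffding gives a tail bound, not a density. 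Second, in the refined regime the step ``multiply the universal bound $71$ by $\Pr[G^c]$'' is invalid as stated: $\E[(1/p-1)\mathbbm{1}(G^c)]$ is not bounded by $\E[1/p-1]\cdot\Pr[G^c]$, precisely because $G^c$ is the event on which $1/p$ is large, so the two factors are positively associated. The fix is available inside your own framework: rerun the tail integral restricted to $G^c$, where every term carries the Hoeffding factor $e^{-(b+k+1)\Delta_i^2/18}\leq((T-bN)\Delta_i^2)^{-4c}$, which (together with the $O(1/((T-bN)\Delta_i^2))$ contribution from $G$ and the assumption $(T-bN)\Delta_i^2\geq 4$) lands you under $\tfrac{4}{(T-bN)\Delta_i^2}$. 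With these two repairs your outline goes through.
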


For ease of notation, define $L= \lceil  \frac{72c}{\Delta_i^2}\log ((T-bN)\Delta_i^2) - (b+1)\rceil$. Combining \eqref{step.1stterm.1}, \eqref{step.Ebreakdown} and Lemma~\ref{lem.2.13} we finally obtain the following upper bound on the third term in Equation \eqref{eq:thm_regret_eq1}:

\begin{align}
\label{eq:T3}
     \sum _{t = 1} ^{T-bN} \Pr \left[a_t = i, E_i ^{\mu} (t), E_i^{\theta} (t) \right] \leq  & \sum _{k = 0} ^{T-bN -1} \E \left[ \frac{1 }{p_{i, \tau_k + 1}}-1\right] \nonumber \\ 
       \leq & \ 71 L{\mathbbm{1}(L>0)} + \sum_{k = L} ^{T-bN-1} \frac{4}{(T-bN)\Delta_i^2} \nonumber \\
      \leq &  \ 71 L{\mathbbm{1}(L>0)}
     +\frac{4}{\Delta_i^2}.
\end{align}
The first step combines Equations \eqref{step.1stterm.1} and \eqref{step.Ebreakdown}, the second step applies Lemma \ref{lem.2.13}, treating separately the terms in the sum with $k$ larger or smaller than $L$, and conditioning on the case that $L>0$, otherwise the first term in this expression will be upper bounded by 0. The third step upper bounds the final term, since $\frac{T-bN-1-L}{T-bN}<1$.

\paragraph{Putting everything together.} Finally, plugging in the bounds of Lemma \ref{lem.2.15}, Lemma \ref{lem.2.16}, and Equation \eqref{eq:T3} into Equation \eqref{eq:thm_regret_eq1} gives the following bound on the expected number of times each arm $i$ is pulled in a stream of length $T$. Recall that $L= \lceil  \frac{72c}{\Delta_i^2}\log ((T-bN)\Delta_i^2) - (b+1)\rceil$. Then, 
\begin{align}
 \nonumber \E  [n_{i,T}] & \leq \ b+\frac{9}{\Delta_i^2}e^{-2b\Delta_i^2/9} + \max \{ 0, \frac{18c\log ((T-bN) \Delta_i ^2)}{\Delta_i^2 } - b\} + \frac{1}{\Delta_i^2}+71 L \mathbbm{1}(L>0) +\frac{4}{\Delta_i^2} \\
\nonumber \leq \  & b + \frac{9}{\Delta_i^2}  + \frac{18c\log ((T-bN) \Delta_i ^2)}{\Delta_i^2} + \frac{1}{\Delta_i^2} +  71 \cdot \frac{72c}{\Delta_i^2}\log ((T-bN)\Delta_i^2)  + \frac{4}{\Delta_i^2}. \\
 \nonumber = \ &   b + \frac{14}{\Delta_i^2} + \frac{18c\log ((T-bN) \Delta_i ^2)}{\Delta_i^2}  +  \frac{5112 c}{\Delta_i^2}\log ((T-bN)\Delta_i^2)  \\
 = \ &  b + \frac{14}{\Delta_i^2} +   \frac{5130c}{\Delta_i^2}\log ((T-bN)\Delta_i^2) \label{eqn:upperbound_enit}
\end{align}
The first inequality is plugging in the bounds of Lemma \ref{lem.2.15}, Lemma \ref{lem.2.16}, and Equation \eqref{eq:T3} into Equation \eqref{eq:thm_regret_eq1}. The second inequality holds because $\frac{18c\log ((T-bN) \Delta_i ^2)}{\Delta_i^2 }  > 0$ (because $T-bN > e^{1/4\pi} / \Delta_i^2$ by assumption) is an upper bound for $\max \{ 0, \frac{18c\log ((T-bN) \Delta_i ^2)}{\Delta_i^2 } - b\}$, and $\frac{72c}{\Delta_i^2}\log ((T-bN)\Delta_i^2) > 0$ is an upper bound for $L$. The third and fourth steps combine terms. 

Note that in Equation~(\ref{eqn:upperbound_enit}), the term $b$ comes from the pre-pulling stage which we will count separately from the Thompson Sampling stage. We focus on the Thompson Sampling stage for now, and upper bound the regret from the pre-pulling stage, which is at most $bN$ later. Define $ \Tilde{\E} [n_{i,T}]$ to be the expected number of pulls of arm $i$ excluding the $b$ pre-pulls, from $t = 1$  to $T-bN$, in the Thompson Sampling stage (excluding the pre-pulls). Then,
\begin{align*}
\Tilde{\E} [n_{i,T}] \leq    \frac{14}{\Delta_i^2} +   \frac{5130c}{\Delta_i^2}\log ((T-bN)\Delta_i^2).
\end{align*}

To obtain an upper bound on the expected regret due to arm $i$ in the $T-bN$ timesteps of the Thompson Sampling stage, we multiply the above expression by $\Delta_i$. 
\begin{align}
    \Delta_i  \Tilde{\E} [n_{i,T}] \leq \frac{14}{\Delta_i} +   \frac{5130c}{\Delta_i}\log ((T-bN)\Delta_i^2)  \label{eq:per_arm_regret}
\end{align}

 From Equation \eqref{eq:per_arm_regret}, adding up the expected regret over the $N-1$ suboptimal arms and adding back the $bN$ maximum possible regret from the pre-pulling phase, we obtain the desired problem-dependent asymptotic bound  $bN + \sum_{i=1}^N O( c \frac { \log ((T-bN)  \Delta_i^2 ) } {\Delta_i})$.

Moving to the problem-independent bound, we note that the first term in Equation \eqref{eq:per_arm_regret} is decreasing in $\Delta_i$ and the second term will also be decreasing for large enough $\Delta_i$. More precisely, define $f(\Delta_i) = \frac{\log \left( (T-bN)\Delta_i^2 \right)}{\Delta_i}$, so that $f'(\Delta_i) = \frac{2 - \log ((T-bN)\Delta_i^2) }{\Delta_i^2}$. We see that $f'(\Delta_i) < 0$ (i.e., the second term is decreasing in $\Delta_i$) if $\Delta_i \geq \frac{e}{\sqrt{T-bN}}$. 
Therefore, if we consider those arms with $\Delta_i \geq \frac{e \sqrt{N \log N}}{\sqrt{T-bN}}$, the total regret these arms incur in the Thompson Sampling stage would be bounded by:
\begin{align*}
\nonumber \sum_{i=1}^N \left\{\frac{14}{\Delta_i}  + 5130c \cdot \frac{\log \left( (T-bN)\Delta_i^2\right)}{\Delta_i} \right\}  & \leq \sum_{i=1}^N \left\{\left(\frac{14\sqrt{T-bN}}{e \sqrt{N \log N}} - \right) + 5130c \cdot \sqrt{T-bN} \cdot \frac{\log \left( e^2N\log N\right)}{e \sqrt{N \log N}} \right\}  \\
\nonumber & = \frac{14\sqrt{N(T-bN)}}{e \sqrt{\log N}}  + 5130c \cdot \sqrt{T-bN} \frac{\sqrt{N}\log \left( e^2N\log N\right)}{e \sqrt{\log N}}\\
& = O\left(c  \sqrt{N(T-bN)\log N}\right).
\end{align*}
The first inequality above comes from plugging in $\Delta_i = \frac{e \sqrt{N \log N}}{\sqrt{T-bN}}$, which gives an upper bound on the regret, which is decreasing with respect to $\Delta_i$. The second step replaces the summation with a factor of $N$.

For every arm with $\Delta_i \leq \frac{e \sqrt{N \log N}}{\sqrt{T-bN}}$, the total regret due to all of these arms in $T-bN$ time steps is bounded by $(T-bN)\Delta_i  \leq   (T-bN) \frac{e \sqrt{N \log N}}{\sqrt{T-bN}} \leq  e \sqrt{N(T-bN) \log N} =  O(\sqrt{N(T-bN) \log N) }$, because $bN \leq T$. Therefore, if we add up the regrets due to all arms (those with $\Delta_i \geq \frac{e \sqrt{N \log N}}{\sqrt{T-bN}} $ and those with $\Delta_i \leq \frac{e \sqrt{N \log N}}{\sqrt{T-bN}}$), in the Thompson Sampling stage, we get that the total regret is $O\left(c  \sqrt{N(T-bN)\log N}\right) + O\left( \sqrt{N(T-bN)\log N}\right) = O\left(c  \sqrt{N(T-bN)\log N}\right).$
Adding the regret from the pre-pulling stage -- which is at most $bN$ -- and the regret from the Thompson Sampling stage, we conclude that the total regret in all $T$ timesteps is bounded by $ bN + O( c\sqrt{N(T-bN) \log N) }$.

\subsection{Proofs of Auxiliary Lemmas}
\label{app:regret_lemmata}

Lemmas  \ref{lem.2.15}, Lemma \ref{lem.2.16}, and \ref{lem.2.13} can be viewed as extended and refined versions of  Lemmas 2.15, 2.16, and 2.13 in \cite{AG17} respectively.

\firstterm*

\begin{proof}
Recall that $\overline{\Eimu}=\{\muitminus > x_i\}$. Let $\tau_{i,k}$ denote the time we pull arm $i$ for the $k$-th time, excluding the pre-pulling stage. 
Note that  $t\leq \tau_{i,t}$ for all $t\in \mathbb{N}$, and that for $k>n_{i,t}$ it holds that $\tau_{i,k}>t$. Then, 
    \begin{align*}
        \sum _{t=1}^{T-bN} \Pr [a_t = i, \overline{\Eimu} ] \leq \sum _{k=1}^{T-bN} \Pr [\overline{E_i ^{\mu} (\tau_{i, k})}],
    \end{align*}
because at a timestep $t$ where the pulled arm is not $i$, then $\Pr[a_t=i]=0$. Thus the only relevant terms in the sum are the ones at timesteps $\{ t = \tau_{i,k} \} $ for $k = 1, 2, \ldots, T-bN$.

At time $\tau_{i, k}$, the empirical mean $\hat{\mu}_{i, \tau_{i, k}-1}$ used by the algorithm to make the decision is upper bounded by the average of the outcomes of $(b+k-1)$ i.i.d.~plays of arm $i$. We will use Hoeffding's inequality (Lemma \ref{lem:Hoeffding}) to obtain high probability bounds of  the deviations between these empirical means and their true means.
\begin{lemma} [Hoeffding's inequality] 
\label{lem:Hoeffding}
Let $X_1, \ldots, X_n \in [0,1]$ be i.i.d.~and $\E[X_i] = \mu, \forall i$. Let $S_n = X_1 + \ldots + X_n$. Then for all $a \geq 0$, 
\[\Pr [S_n \geq n\mu + a] \leq e^{-2a^2  / n} \quad \text{and} \quad \Pr [S_n \leq n\mu - a] \leq e^{-2a^2  / n}.\]
\end{lemma}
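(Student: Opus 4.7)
The plan is to prove Hoeffding's inequality by the standard Cram\'er--Chernoff (exponential Markov) method. I would establish the upper-tail bound $\Pr[S_n \geq n\mu + a] \leq e^{-2a^2/n}$ first and recover the lower-tail bound by a symmetry argument applied to $Y_i = 1 - X_i \in [0,1]$, which has mean $1 - \mu$, since the event $\{S_n \leq n\mu - a\}$ is identical to $\{\sum_{i=1}^n Y_i \geq n(1-\mu) + a\}$.

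For the upper tail, I would introduce a free parameter $s > 0$ and apply Markov's inequality to the exponentiated deviation:
\[\Pr[S_n - n\mu \geq a] = \Pr[e^{s(S_n - n\mu)} \geq e^{sa}] \leq e^{-sa}\,\E[e^{s(S_n - n\mu)}].\]
By i.i.d.~independence, the moment generating function (MGF) factorizes as $\prod_{i=1}^n \E[e^{s(X_i - \mu)}]$. The key step is then to invoke Hoeffding's lemma on the MGF of a bounded zero-mean variable: for $Y$ supported on $[l,u]$ with $\E[Y]=0$, $\E[e^{sY}] \leq e^{s^2(u-l)^2/8}$. Applied to $X_i - \mu \in [-\mu, 1-\mu]$ (an interval of length $1$), this gives $\E[e^{s(X_i - \mu)}] \leq e^{s^2/8}$, so
\[\Pr[S_n \geq n\mu + a] \leq e^{-sa + ns^2/8}.\]
Optimizing over $s$ by setting the derivative of the exponent to zero yields $s^{\ast} = 4a/n$, and substituting back gives the claimed bound $e^{-2a^2/n}$.

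The main obstacle is Hoeffding's lemma itself. I would prove it by exploiting convexity of $x \mapsto e^{sx}$: for $y \in [l,u]$, the convex-combination representation $y = \tfrac{u-y}{u-l}\, l + \tfrac{y-l}{u-l}\, u$ yields $e^{sy} \leq \tfrac{u-y}{u-l}e^{sl} + \tfrac{y-l}{u-l}e^{su}$. Taking expectations and using $\E[Y] = 0$ leaves a function of $s$ alone, which I would rewrite as $e^{\varphi(s)}$; one then verifies $\varphi(0) = \varphi'(0) = 0$ and $\varphi''(s) \leq (u-l)^2/4$, where the second bound comes from interpreting $\varphi''$ as the variance of a tilted distribution on $\{l,u\}$ and applying the standard ``variance $\leq$ range${}^2/4$'' inequality. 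The Lagrange form of the Taylor remainder then gives $\varphi(s) \leq s^2(u-l)^2/8$, which is the engine driving the Chernoff calculation above.
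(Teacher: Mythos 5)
Your proposal is correct: the Chernoff--Markov exponentiation, the factorization of the moment generating function over the i.i.d.\ terms, Hoeffding's lemma giving $\E[e^{s(X_i-\mu)}] \leq e^{s^2/8}$ for variables of range $1$, the optimization $s^\ast = 4a/n$ yielding the exponent $-2a^2/n$, and the reduction of the lower tail to the upper tail via $Y_i = 1 - X_i$ are all accurate, and your sketch of Hoeffding's lemma (convexity bound, $\varphi(0)=\varphi'(0)=0$, $\varphi''\leq (u-l)^2/4$, Taylor remainder) is the standard complete argument. The paper itself states this lemma as the classical Hoeffding inequality without proof, so there is no in-paper argument to compare against; your write-up is simply the textbook proof and is fine as it stands.
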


Using the definition $\overline{E_i ^{\mu} (\tau_{i, k})}=\{\hat\mu_{i,\tau_{i,k} - 1} > \mu_i + \Delta_i/3\}$, followed by  Hoeffding's Inequality (Lemma \ref{lem:Hoeffding}),
\begin{align}
\notag
\sum _{k=1}^{T-bN} \Pr \left[\overline{E_i ^{\mu} (\tau_{i, k})}\right] &=  \sum_{k=1}^{T-bN}\Pr\left[ \hat\mu_{i,\tau_{i,k} - 1}-\mu_i> \frac{\Delta_i}{3} \right] \\
& \notag\leq \sum _{k=1}^{T-bN} e^{-\frac{2(k+b-1)\Delta_i^2}{9}} \\
& \notag= \sum _{k=0}^{T-bN-1} e^{-\frac{2(k+b)\Delta_i^2}{9}} \\
& \notag =e^{-\frac{2b\Delta_i^2}{9}} \sum _{k=0}^{T-bN-1} (e^{-\frac{2\Delta_i^2}{9}})^k \\
&= \frac{e^{-2b\Delta_i^2/9}(1-e^{-2(T-bN) \Delta_i^2/9})}{1-e^{-2\Delta_i^2/9}}
\label{eq:lem9proof}
\end{align}
 The last equalities follow by simple manipulations and the computation of a geometric sum. 
Finally, noting that $\frac{1-e^{-xt}}{1-e^{-x}}\leq \frac{1}{1-e^{-x}}\leq \frac{2}{x}$ for $t\geq 0$ and $x\in(0, 2/9]$, we see that \eqref{eq:lem9proof} implies
\begin{equation*}
 \sum _{k=1}^{T-bN} \Pr \left[\overline{E_i ^{\mu} (\tau_{i, k})}\right]   \leq\frac{9}{\Delta_i^2}e^{-2b\Delta_i^2/9}.
\end{equation*}

\end{proof}

\secondterm*

\begin{proof} 
First define $M = \frac{18c\log ((T-bN) \Delta_i ^2)}{\Delta_i^2 }$. Then by the law of total probability,
\begin{align}
  \nonumber  \sum _{t = 1} ^{T-bN} \Pr \left[a_t = i, E_i ^{\mu} (t), \overline{E_i^{\theta} (t)} \right] & = \sum _{t = 1} ^{T-bN} \Pr \left[a_t = i, \nitminus + b \leq  M, E_i ^{\mu} (t), \overline{E_i^{\theta} (t)} \right] \\ 
 \label{eq:lem2.16_eq1}  & \quad\quad  + \sum _{t = 1} ^{T-bN} \Pr \left[a_t = i, \nitminus + b > M, E_i ^{\mu} (t), \overline{E_i^{\theta} (t)} \right]
\end{align}
We can bound the first term in Equation \eqref{eq:lem2.16_eq1} by removing the conditioning, as follows, 
\begin{align}
\nonumber  \sum _{t = 1} ^{T-bN} \Pr \left[a_t = i, \nitminus + b \leq M, E_i ^{\mu} (t), \overline{E_i^{\theta} (t)} \right] & \leq \E \left[ \sum _{t=1}^{T-bN} \mathbbm{1} \left( a_t = i, \nitminus + b \leq  M \right)\right] \\
 \label{eq:lem2.16_eq2}     & \leq  \max\{0, M - b\} 
\end{align}
where the second step holds because for any arm $i$, the timesteps satisfying $a_t = i, \nitminus + b \leq M$ are those in which the arm was pulled at most $M - b$ times.

To bound the second term in Equation \eqref{eq:lem2.16_eq1}, we show that if $\nitminus$ is large and the event $\Eimu$ is satisfied, then the probability that the
event $\Eitheta$ is violated is small. Recall that $\Eitheta$ is the event that $\thetait \leq y_i$.  
\begin{align}
  \nonumber  & \sum _{t = 1} ^{T-bN} \Pr \left[a_t = i, \nitminus + b > M, E_i ^{\mu} (t), \overline{E_i^{\theta} (t)} \right] \\
  \nonumber  &\leq  \   \sum_{t=1}^{T-bN} \Pr \left[ a_t = i, \overline{E_i^{\theta} (t)} \bigg | \nitminus + b > M, \ \Eimu \right]\\
  \nonumber  & \leq  \ \E \left[ \sum_{t=1}^{T-bN} \Pr \left( a_t = i, \overline{E_i^{\theta} (t)} \bigg | \nitminus + b > M, \ \Eimu, \ \F_{t-1} \right)\right] \\
\label{eq:lem2.16_eq3}   & \leq   \ \E \left[ \sum _{t=1}^{T-bN} \Pr \left( \thetait > y_i \bigg | \nitminus + b > M, \ \muitminus \leq x_i, \ \F_{t-1} \right) \right],
\end{align}
where the first step comes from conditioning on additional events, the second step takes the expectation over all histories $\F_{t-1}$, and the third step uses the fact that $\overline{\Eitheta}$ is the event that $\thetait >  y_i$ and $\Eimu$ is the event that $\muitminus \leq x_i$.

Next we will upper bound the probabilities inside the expectation in Equation \eqref{eq:lem2.16_eq3}. Using the fact that $\thetait \sim \N(\hat{\mu}_{i,t-1}, \frac{c}{n_{i,t-1} + b + 1})$, 
\begin{align}
    \nonumber & \E \left[ \Pr \left( \thetait > y_i \Big | \nitminus + b > M, \ \muitminus \leq x_i, \ \F_{t-1} \right)  \right] \\
     \nonumber \leq & \Pr \left[ \N(\hat{\mu}_{i,t-1}, \frac{c}{n_{i,t-1} + b + 1})  > y_i\bigg | \nitminus + b > M, \ \muitminus \leq x_i, \ \F_{t-1} \right] \\
    \leq & \Pr \left[ \N(x_i, \frac{c}{n_{i,t-1} + b + 1})  > y_i\bigg | \nitminus + b > M, \F_{t-1} \right].
    \label{eq:lem2.16_eq4.0}
\end{align}
The second inequality is from the fact that $x_i \geq \hat{\mu}_{i,t-1}$ and thus $\N(\hat{\mu}_{i,t-1}, \frac{c}{n_{i,t-1} + b + 1})$ is stochastically dominated by $\N(x_i, \frac{c}{n_{i,t-1} + b + 1})$.
To bound the latter expression we will use Mill's inequality (Lemma \ref{lem:Mill's-ineq}) as it allows us to bound the deviations of centered Gaussian random variable $\thetait$ around $0$. 

\begin{lemma}[Mill's inequality] \label{lem:Mill's-ineq}
Let $X\sim N(\mu,\sigma^2)$. Then for any $t>0$,
\begin{equation*}
\Pr[X-\mu > t]\leq \frac{\sigma}{\sqrt{2\pi}}\frac{e^{-\frac{t^2}{2\sigma^2}}}{t} \quad \quad \mbox{and} \quad \quad 
\Pr[X-\mu < -t]\leq \frac{\sigma}{\sqrt{2\pi}}\frac{e^{-\frac{t^2}{2\sigma^2}}}{t}. 
\end{equation*}
\end{lemma}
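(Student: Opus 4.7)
The plan is to reduce the general Gaussian case to the standard normal by a linear change of variable, then bound the standard-normal tail using the classical trick of multiplying the integrand by $u/s \geq 1$ on the range of integration. This is the textbook Mill's-ratio bound, so the argument is short; the task is really just to carry out the rescaling carefully.

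First I would set $Z = (X-\mu)/\sigma$, so that $Z \sim \N(0,1)$ and the event $\{X-\mu > t\}$ coincides with $\{Z > t/\sigma\}$. It therefore suffices to show that for every $s > 0$,
\[
\Pr[Z > s] \;\leq\; \frac{1}{s\sqrt{2\pi}}\, e^{-s^2/2},
\]
after which the upper-tail bound of the lemma follows by setting $s = t/\sigma$.

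Next I would write the standard-normal tail as an integral and use that $1 \leq u/s$ for $u \in [s,\infty)$, followed by the explicit antiderivative $-e^{-u^2/2}$:
\[
\Pr[Z > s] \;=\; \frac{1}{\sqrt{2\pi}}\int_s^{\infty} e^{-u^2/2}\, du \;\leq\; \frac{1}{s\sqrt{2\pi}}\int_s^{\infty} u\, e^{-u^2/2}\, du \;=\; \frac{e^{-s^2/2}}{s\sqrt{2\pi}}.
\]
Re-substituting $s = t/\sigma$ and simplifying the prefactor yields $\Pr[X-\mu > t] \leq \frac{\sigma}{t\sqrt{2\pi}}\, e^{-t^2/(2\sigma^2)}$, which is the stated upper-tail inequality. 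For the lower tail, I would exploit the symmetry of the Gaussian density: since $-Z \sim \N(0,1)$, we have $\Pr[X-\mu < -t] = \Pr[Z < -t/\sigma] = \Pr[Z > t/\sigma]$, and the identical argument gives the symmetric bound.

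The main ``obstacle'' here is really just bookkeeping rather than any conceptual difficulty, since this is a classical elementary tail estimate. The two things to be careful about are (i) tracking the factor $\sigma$ through the rescaling, so that the final prefactor reads $\sigma/(t\sqrt{2\pi})$ rather than $1/(t\sqrt{2\pi})$, and (ii) verifying that the boundary term at infinity in the antiderivative vanishes, which is immediate since $e^{-u^2/2} \to 0$ as $u \to \infty$.
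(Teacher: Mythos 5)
Your proof is correct: the paper states Mill's inequality as a classical fact without giving a proof, and your argument (standardize via $Z=(X-\mu)/\sigma$, bound the tail integral using $1\leq u/s$ on $[s,\infty)$, integrate the exact antiderivative, and handle the lower tail by symmetry) is the standard textbook derivation that the paper implicitly relies on. The constants work out exactly as stated, so nothing further is needed.
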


Using $y_i-x_i=\frac{\Delta_i}{3}$, Mill's inequality (Lemma \ref{lem:Mill's-ineq}), the fact that $\nitminus + b +1> M= \frac{18c \log ((T-bN) \Delta_i ^2)}{\Delta_i^2 }$ and $T\geq bN+\frac{1}{\Delta_i^2}e^{\frac{1}{4\pi}}$  we see that 
\begin{align}
\nonumber \Pr  \left[ \N(x_i, \frac{c}{n_{i,t-1} + b + 1})  > y_i \right] & = \Pr \left[ \N(0, \frac{c}{n_{i,t-1} + b + 1} > \frac13 \Delta_i ) \right] \\
\nonumber 
\nonumber& \leq \sqrt{\frac{c}{2\pi(\nitminus + b + 1)}}\frac{3}{\Delta_i}\exp\left(-\frac{\Delta_i^2}{18}\frac{(\nitminus + b + 1)}{c}\right) \\
\nonumber & \leq  \frac{1}{2\sqrt{\pi\log((T-bN)\Delta_i^2)}}\frac{1}{(T-bN)\Delta_i ^2} \\
\label{eq:lem2.16_eq4}& \leq \frac{1}{(T-bN)\Delta_i ^2}.
\end{align}

We can now use Equations \eqref{eq:lem2.16_eq3}, \eqref{eq:lem2.16_eq4.0}, 
 and \eqref{eq:lem2.16_eq4} and sum all the probabilities over $t = 1, \ldots, T-bN$, yielding  
\begin{equation}
  \label{eq:lem2.16_eq5}
  \sum _{t = 1} ^{T-bN} \Pr \left[a_t = i, \nitminus + b > cM_i(T), E_i ^{\mu} (t), \overline{E_i^{\theta} (t)} \right]\leq \frac{1}{\Delta_i^2}.
\end{equation}
Plugging in the bounds of Equations \eqref{eq:lem2.16_eq2} and \eqref{eq:lem2.16_eq5} into Equation \eqref{eq:lem2.16_eq1} gives the desired bound.
\end{proof}

\thirdtermlem*

\begin{proof}
 Recall that $\pit = \Pr[\theta_{1,t} > y_i | \F_{t-1}]$ and $\thetait \sim \N(\muitminus, \frac{c}{\nitminus + 1})$. Let us introduce some useful notation. Let $\Theta_1,\dots,\Theta_r\overset{iid}{\sim}\N(\hat{\mu}_{1, \tau_k}, \frac{c}{k+b+1})$ be a random sample identically distributed to $\theta_{i,\tau_k}$ given $\F_{\tau_k}$, and let $G_k$ be a geometric random variable that denotes the number of trials until $\Theta_k > y_i$.  
Note that  $p_{i, \tau_k + 1} = \Pr \left[ \Theta_j > y_i | \F_{\tau_k} \right]$ and hence
\begin{align*}
    \E \left[ \frac{1}{p_{i, \tau_k + 1}}\right] -1 = \E \left[ \E \left[ G_k | \F_{\tau_k}\right]\right] = \E\left[ G_k \right]=\sum_{ r=0}^\infty \Pr(G_k\geq r).
\end{align*}

We will therefore establish the desired upper bound by upper bounding the summands of the above expression or equivalently, lower bounding $\Pr(G_k <  j)$. We will first establish a bound that holds for all $k$, and then we will establish a tighter bound that holds when $k$ is sufficiently large.

\paragraph{Upper bound for all $k$:} Define  $\text{MAX}_r : = \max_{1\leq k\leq r}(\Theta_k)$ and $z = \sqrt{ \log r } $. Then, 
\begin{align}
    \Pr [G_k < r] & \geq \Pr [\text{MAX}_r > y_i] \nonumber \\
    & \geq \Pr \left[ \text{MAX}_r > \hat{\mu}_{1, \tau_k} + \sigmac z \geq y_i \right] \nonumber \\
    &=\E \left[ \E \left[ \mathbbm{1}\left(\text{MAX}_r > \hat{\mu}_{1, \tau_k } + \sigmac z \geq y_i \right)\Big| \F_{\tau_k}\right] \right]\nonumber \\
    & = \E \left[ \mathbbm{1} \left(\hat{\mu}_{1, \tau_k } + \sigmac z \geq y_i \right) \Pr \left( \text{MAX}_r > \hat{\mu}_{1, \tau_k} + \sigmac z \Big| \F_{\tau_k} \right)\right]. \label{step.1stterm.const.prod}
\end{align}
The first step holds because $\text{MAX}_r > y_i$ implies $G_k < r$; the second step holds because $\text{MAX}_r > \hat{\mu}_{1, \tau_k} + \sigmac z \geq y_i $ implies $\text{MAX}_r > y_i$; the third step is obtained by taking a double conditional expectation over the history $\F_{\tau_k}$; and the last step separates the two events $\{ \text{MAX}_r > \hat{\mu}_{1, \tau_k} + \sigmac z \}$ and $ \{ \hat{\mu}_{1, \tau_k } + \sigmac z \geq y_i \})$.

To continue bounding this expression, we first lower bound $\Pr ( \text{MAX}_r > \hat{\mu}_{1, \tau_k} +\sigmac z | \F_{\tau_k} )$ using Mill's inequality (Lemma \ref{lem:Mill's-ineq}). 
This lemma gives that for any instantiation $F_{\tau_k}$ of $\F_{\tau_k}$ and $ r>1$,
\begin{align}
\nonumber   \Pr \left[ \text{MAX}_r > \hat{\mu}_{1, \tau_k } + \sigmac z | \F_{\tau_k} = F_{\tau_k} \right] & = 1 - \prod _{k=1}^{r} \Pr \left[\Theta_k \leq \hat{\mu}_{1, \tau_k } + \sigmac z | \F_{\tau_k} = F_{\tau_k} \right]\\ 
\nonumber  & \geq 1 - \left( 1 - \frac{1}{\sqrt{2 \pi }} \frac{e^{-z^2 / 2}}{z} \right)^r \\
\nonumber   & = 1 - \left(  1 - \frac{1}{\sqrt{2\pi r\log r}} \right) ^ r\\ 
    & \geq   1 - e^{- \sqrt{\frac{r}{2 \pi \log r }}}
    \label{step.lem1.1.b0}
\end{align}
The first step applies the definition of $\text{MAX}_r$, and the second step is an application of the Mill's inequality with $t = \sqrt{\frac{c}{k+b+1}}z$ and $\sigma = \sqrt{\frac{c}{k+b+1}}z$. The remaining two steps simplify the expression.

If $r \geq e^{11}$, we have $e ^ {- \sqrt{\frac{r}{2 \pi \log r}}} \leq \frac{1}{r^2}$.
Therefore, we can bound this term separately for $r<e^{11}$ and $r \geq e^{11}$, 
\[
    \Pr \left[ \text{MAX}_r > \hat{\mu}_{1, \tau_k} + \sigmac z | \F_{\tau_k} = F_{\tau_k}\right] \geq \begin{cases} 1 - e^{- \sqrt{\frac{r}{2 \pi \log r }}} & \text{ for }1 < r<e^{11} \\ 1 - \frac{1}{r^2} & \text{ for } r\geq e^{11}
    \end{cases}.
\]

Plugging this into Equation \eqref{step.1stterm.const.prod} gives,
\begin{align}
  \nonumber  \Pr[G_k < r] & \geq \begin{cases}
\left( 1 - e^{- \sqrt{\frac{r}{2 \pi \log r }}} \right) \E \left[ \mathbbm{1} \left(\hat{\mu}_{1, \tau_k } + \sigmac z \geq y_i \right) \right] & \text{ for } 1 < r < e^{11}\\
        \left( 1 - \frac{1}{r^2} \right) \E \left[ \mathbbm{1} \left(\hat{\mu}_{1, \tau_k } + \sigmac z \geq y_i \right) \right] & \text{ for } r \geq e^{11}
    \end{cases} \\
    & = \begin{cases}
\left( 1 - e^{- \sqrt{\frac{r}{2 \pi \log r }}} \right) \Pr \left(\hat{\mu}_{1, \tau_k } + \sigmac z \geq y_i \right) & \text{ for } 1 < r < e^{11}\\
        \left( 1 - \frac{1}{r^2} \right)  \Pr \left(\hat{\mu}_{1, \tau_k } + \sigmac z \geq y_i \right) & \text{ for } r \geq e^{11}
    \end{cases}. \label{step.1stterm.const.prod.final}
\end{align}

Continuing lower bounding this expression, we have:
\begin{align}
\label{eq:Hoeffding3}
\nonumber \Pr \left[ \hat{\mu}_{1, \tau_k }  +\sigmac z  \geq y_i \right] & =\Pr \left[ \hat{\mu}_{1, \tau_k }-\mu_1   \geq -\sqrt{\frac{c\log r}{b+k+1}} -\frac{1}{3}\Delta_i \right] \\
\nonumber &\geq 1 - \exp\left(-2(b+k+1)\left(\frac{1}{3}\Delta_i+\sqrt{\frac{c\log r}{b+k+1}}\right)^2\right)\\
\nonumber &=1 - \frac{1}{r^{2c}}\exp\left(-\frac{2}{9}(b+k+1)\Delta_i^2-\frac{4}{3}\Delta_i\sqrt{(b+k+1)c\log r}\right)\\
\nonumber &\geq 1 - \frac{1}{r^{2c}}\\
&\geq 1 - \frac{1}{r^{2}}
\end{align}
where the first step comes from the definitions of $y_i=\mu_1-\frac{1}{3}\Delta_i$ and $z=\sqrt{\log r}$, the second step comes from an application the Hoeffding's inequality (Lemma~\ref{lem:Hoeffding}) with $a = \frac{1}{3}\Delta_i+\sqrt{\frac{c\log r}{b+k+1}}$ and $n = b+k+1$. The third line expands and combines terms, the fourth line upper bounds the exponential term by 1, and the fifth line follows from $c\geq 1$.

We can return to bounding $\E\left[ G_k \right]=\sum_{r=0}^\infty \Pr(G_k\geq r)$ using Equation \eqref{step.1stterm.const.prod} by plugging in Equations \eqref{step.1stterm.const.prod.final} and \eqref{eq:Hoeffding3}.
\begin{align*}
\E[G_k] & = \sum_{r = 0} ^ {\infty} \Pr [G_k \geq r] \\
& \leq 1+ 1 + \sum_{r = 2} ^ {\infty} (1-\Pr [G_k < r]) \\
&\leq 2+ \sum_{r = 2}^ {\lfloor e^{11}\rfloor} \left( 1- \left(1 - e^{- \sqrt{\frac{r}{2 \pi \log r }}}\right)\left( 1 - \frac{1}{r^2} \right)\right) +\sum_{r = \lceil e^{11}\rceil} ^ {\infty} \left( 1- \left( 1 - \frac{1}{r^2} \right)^2\right) \\
&= 2+ \sum_{r = 2} ^ {\lfloor e^{11}\rfloor} \left(  e^{- \sqrt{\frac{r}{2 \pi \log r }}}\left( 1 - \frac{1}{r^2} \right)+\frac{1}{r^2}\right) +\sum_{r = \lceil e^{11}\rceil} ^ {\infty} \left( \frac{2}{r^2}- \frac{1}{r^4}\right) \\
& \leq 2+2\sum_{r = 2} ^ {\infty} \frac{1}{r^2} +\sum_{r = 2}^{\lfloor e^{11}\rfloor}  e^{- \sqrt{\frac{r}{2 \pi \log r }}}\\
& \leq 2 +\frac{\pi^2}{3}+65.58\\
& \leq 71.
\end{align*}
Thus,
\[\E \left[ \frac{1}{p_{i, \tau_k + 1}} \right] - 1   = \E[G_k]  \leq 71,\]
which is the first upper bound of the lemma.

\paragraph{Tighter upper bound for $k \geq \max\{1,\frac{72c}{\Delta_i^2}\log ((T-bN)\Delta_i^2) - (b+1)\}$:} Note that when $k$ increases, so does the  probability of the event $\theta_1>y_i$ because a larger $k$ makes $\hat\mu_{1,t}$ a more accurate estimate of  $\mu_1$. Using  concentration of $\hat\mu_{1,t}$ and the assumed condition on $k$, we can get much a tighter upper bound on $\mathbb{E}[\frac{1}{p_{i,\tau_k+1}}]$.

We first use the fact that $\Theta_k \sim \N (\hat{\mu}_{1, \tau_k } , \frac{c}{b+k+1})$ to apply Lemma~\ref{lem:as-ineq}, which bounds the tails of $\Theta_k$.

\begin{lemma}\label{lem:as-ineq}
Let $X\sim N(\mu,\sigma^2)$. Then for any $t > 0$, we have 
\begin{equation*}
\Pr[X-\mu > t]\leq e^{-\frac{t^2}{2\sigma^2}} \quad \quad \mbox{and} \quad \quad 
\Pr[X-\mu < -t]\leq e^{-\frac{t^2}{2\sigma^2}} . 
\end{equation*}
\end{lemma}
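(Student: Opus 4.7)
The statement is a classical sub-Gaussian tail bound, and the natural approach is a Chernoff-style argument using the moment generating function of the normal distribution. The plan is to handle the upper tail first, then deduce the lower tail by symmetry.

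First, I would reduce to the standard normal case by setting $Z = (X - \mu)/\sigma$, so that $Z \sim \N(0,1)$ and $\Pr[X - \mu > t] = \Pr[Z > t/\sigma]$. The key computational input is the Gaussian MGF $\E[e^{\lambda Z}] = e^{\lambda^2/2}$ for all $\lambda \in \R$, which follows from completing the square inside the Gaussian integral
\[
\E[e^{\lambda Z}] = \frac{1}{\sqrt{2\pi}} \int_{-\infty}^{\infty} e^{\lambda z - z^2/2}\,dz = e^{\lambda^2/2}.
\]

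Next I would apply Markov's inequality to the nonnegative random variable $e^{\lambda Z}$ for any $\lambda > 0$:
\[
\Pr[Z > t/\sigma] \;=\; \Pr\!\left[e^{\lambda Z} > e^{\lambda t/\sigma}\right] \;\leq\; e^{-\lambda t/\sigma}\, \E[e^{\lambda Z}] \;=\; \exp\!\left(\tfrac{\lambda^2}{2} - \tfrac{\lambda t}{\sigma}\right).
\]
Optimizing the exponent over $\lambda > 0$ by setting its derivative to zero yields $\lambda^\star = t/\sigma$, and substituting back gives the bound $\exp(-t^2/(2\sigma^2))$, as desired.

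Finally, the lower tail bound follows immediately from the symmetry of the normal distribution: since $-(X-\mu) \sim \N(0, \sigma^2)$ as well, applying the upper-tail bound to $-(X-\mu)$ gives
\[
\Pr[X - \mu < -t] \;=\; \Pr[-(X-\mu) > t] \;\leq\; e^{-t^2/(2\sigma^2)}.
\]
There is no real obstacle here; the only delicate point is being careful with the sign of $\lambda$ when applying Markov's inequality, which is why the two tails are handled separately rather than combined through $|X - \mu|$.
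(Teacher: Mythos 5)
Your proof is correct: the Chernoff argument via the Gaussian moment generating function, with the optimization $\lambda^\star = t/\sigma$ and the symmetry reduction for the lower tail, gives exactly the stated bound. The paper itself states this lemma as a standard auxiliary fact without proof, so there is nothing to diverge from; your derivation is the canonical one and fills the gap cleanly.
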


We instantiate Lemma~\ref{lem:as-ineq} on $\Theta_k$ with $t = \Delta_i / 6$ to get the first inequality below, and use the assumed lower bound on $k$ to get the second inequality below. Thus for any instantiation $F_{\tau_k}$ of $\F_{\tau_k}$,
\begin{align}
\nonumber \Pr\left[\Theta_k > \hat{\mu}_{1, \tau_k} - \frac{\Delta_i}{6} \big| \F_{\tau_k} = F_{\tau_k}\right] 
    &\geq 1 - e^{- \frac{\Delta_i^2 (b+k+1)}{72c}}\\
    &\geq 1-\frac{1}{(T-bN)\Delta_i^2}.
\label{eq:tighter1}
\end{align}

Next define the event $A_{t-1}$ to be the event that $\hat{\mu}_{1, t-1}  - \frac{\Delta_i}{6} \geq y_i$. Note that  $A_{t-1}$ implicitly depends on the history $\F_{t-1}$. Now, consider an instantiation $F_{\tau_k}$ of $\F_{\tau_k}$ such that $A_{\tau_k}$ occurs. For such $F_{\tau_k}$, from Equation \eqref{eq:tighter1} we have that,
\begin{equation}
\label{eq:tighter2}
    \Pr [\Theta_k > y_k | \F_{\tau_k} = F_{\tau_k} ] \geq 1 -  \frac{1}{(T-bN)\Delta_i^2}.
\end{equation}

Let $\F_{t-1} | _{A_{t-1}}$ denote the random variable $\F_{t-1}$ conditioned on the event $A_{t-1}$ occurring. Then
\begin{align}
\nonumber    \E \left[\frac{1}{p_{i, \tau_k + 1}}\right] & = \E \left[\frac{1}{\Pr (\Theta_k > y_i | \F_{\tau_k})}\right] \\
\nonumber    & \leq \E \left[\frac{1}{\Pr (\Theta_k > y_i | \F_{\tau_k} | _{A_{\tau_k}}) \Pr (A_{\tau_k})}\right] \\
 \label{eq:tigher_UB1}   & \leq \E \left[\frac{1}{ (1-\frac{1}{(T-bN)\Delta_i^2}  ) \Pr (A_{\tau_k})} \right]. 
\end{align}
The first step holds because $\Theta_k$ and $\theta_{1, \tau_k}$ have the same distribution. The second step is by law of total probability, and the third step applies the bound in Equation \eqref{eq:tighter2}.

We can continue to bound $\Pr (A_{\tau_k})$ as follows. For any $t \geq \tau_k + 1$ and $j \geq \frac{72c}{\Delta_i^2}\log ((T-bN)\Delta_i^2) - b - 1$,  
\begin{align}
\nonumber\Pr(A_{t-1})&=1-\Pr \left[ \hat{\mu}_{1, t} < \mu_1 - \frac{\Delta_i}{6} \right] \\
\nonumber& \geq 1 - \exp \left( - \frac{ n_{1,t-1} \Delta_i^2}{18}\right) \\
\nonumber & \geq 1 - \exp \left( - 4c \log ((T-bN)\Delta_i^2 ) +\frac{1}{18}\right) \\
&\geq 1-e^{1/18}\frac{1}{(T-bN)^4\Delta_i^8 }
\label{eq:tigher_UB2},
\end{align}
where the first line follows from the definition of $A_{t-1}$, the second line is an application of Hoeffding's inequality (Lemma \ref{lem:Hoeffding}),  the third line uses $n_{1,t-1}\geq b+k\geq \frac{72c}{\Delta_i^2}\log ((T-bN)\Delta_i^2) - 1$ for any $t \geq \tau_k + 1$ and the last inequality used $c\geq 1$.

Hence for  $T\geq bN+\frac{e^{1/54}}{\Delta_i^2}$, from \eqref{eq:tighter2} we  obtain the lower bound
\begin{equation}
    \label{eq:tigher_UB3}
    \Pr(A_{t-1})\geq 1-\frac{1}{(T-bN)\Delta_i^2}.
\end{equation}
Finally, combining \eqref{eq:tigher_UB1}, \eqref{eq:tigher_UB2} and \eqref{eq:tigher_UB3}, using that $(T-bN)\Delta_i^2 \geq 4$ by our assumption, we get that
\begin{equation*}
    \E \left[\frac{1}{p_{i, \tau_k}}\right] -1  \leq \frac{1}{\left(1- \frac{1}{(T-bN)\Delta_i^2}\right)^2} -1 \leq \frac{4}{(T-bN)\Delta_i^2}.
\end{equation*}
\end{proof}

%%%%% %%%%% %%%%% %%%%% %%%%% %%%%%
%%%%%% Appendix C %%%%%
%%%%% %%%%% %%%%% %%%%% %%%%% %%%%%

\section{Alternative Privacy Analyses}\label{app.otherprivacy}

In this appendix, we provide two alternative methods for showing that Thompson Sampling with Gaussian priors is differentially privacy. In the body, we prove this by showing that each timestep $t$ satisfies GDP, then composing under GDP across all timesteps, and finally translating the guarantee back into DP. In Appendix \ref{app.altdp}, we prove this using DP directly, by showing that each timestep satisfies DP, and then applying advanced composition across all timesteps. This results in a more directly interpretable privacy guarantee in terms of the parameters $T$ and $N$, although as we show, it is a substantially weaker guarantee than that achieved using GDP. Along the way, we prove that a more general version of the well-known ReportNoisyMax algorithm \cite{DR14} -- extended to accommodate heterogeneous Gaussian noise, rather than i.i.d.~Laplace noise -- still satisfies DP.

In Appendix \ref{app.altrdp}, we provide another proof, this time using Renyi DP (RDP): we show that each timestep of Thompson sampling satisfies RDP, compose all timesteps under RDP, and then translate back to DP. In Appendix \ref{app.privacycompare} we compare all three of these methods -- GDP, DP, and RDP -- and find that GDP slightly outperforms RDP, and both substantially outperform DP. This is the motivation for focusing our attention on the GDP-based privacy analysis method in the body of the paper.

\subsection{Alternative Method: Standard DP}\label{app.altdp}

In the privacy analysis presented in Sections \ref{s.tsisdp} and \ref{s.improving}, we treat Thompson sampling at every timestep as an instantiation of the Gaussian mechanism and analyze the privacy guarantee assuming that all samples $\{ \theta_{i,t} \}_{i \in [N]}$ are output. However, not all samples need to be published to determine the next arm to pull. Instead, only the index of the sample which with the max value needs to be published:
\begin{align} \label{mech2}
    \arg \max _i \theta_{i,t} \hspace{0.3cm} \text{where} \hspace{0.2cm} \theta_{i,t} \sim \N (\hat{\mu} _{i, t-1} , (\frac{1}{n_{i,t-1} + 1})^2) \hspace{0.2cm} \text{for} \hspace{0.2cm} i = 1, \ldots, N.
 \end{align}
 Intuitively, publishing less information at each timestep should lead to a tighter privacy guarantee for the same amount of noise.

 Fortunately, the ReportNoisyMax algorithm \cite{DR14} (Algorithm \ref{alg.noisymax}) already exists for differentially privately computing an argmax of function values evaluated on a database. This algorithm takes in functions of a fixed sensitivity; it first evaluates all the functions on the database, then adds i.i.d.~Laplace noise to each result, and outputs the argmax of the noisy values. Algorithm~\ref{alg.noisymax} achieves $(\epsilon,0)$-differential privacy by tuning the Laplace noise parameter based on the sensitivity $s$ of all functions and privacy parameter $\epsilon$ \cite{DR14}.

\begin{algorithm}
\begin{algorithmic}[1]
\caption{ReportNoisyMax}\label{alg.noisymax}
\Require Database $R$, real-valued functions $\{f_1, \ldots, f_n\}$ of sensitivity $s$, privacy parameter $\epsilon$
    \State Sample $Z_1, \ldots, Z_n \sim \text{Lap}(s / \epsilon)$ 
    \State Return $\arg \max _{i \in [n]} (f_i(R) + Z_i)$
\end{algorithmic}
\end{algorithm}

To contrast, Equation \eqref{mech2} adds Gaussian noise with different variances to each empirical mean. Thus a single round of Thompson Sampling can be viewed as a variant of the classic ReportNoisyMax algorithm, that adds heterogeneous Gaussian noise, rather than i.i.d.~Laplace noise.
Algorithm~\ref{alg.noisymaxgaus} formally defines the Heterogeneous Gaussian ReportNoisyMax algorithm, which adds Gaussian noise of heterogeneous variances to the results of a set of functions evaluated on the database, where each function is allowed to have different sensitivity. 

\begin{algorithm}
\begin{algorithmic}[1]
\caption{Heterogeneous Gaussian ReportNoisyMax}\label{alg.noisymaxgaus}
\Require Database $R$, real-valued functions $ \{f_1, \ldots, f_n\}$, where $f_i$ has sensitivity $s_i$, noise variances $\{\sigma_1^2,\ldots,\sigma_n^2\}$
    \State Sample $X_i \sim \N(0, \sigma_i^2)$ for $i \in [n]$ 
    \State Return $\arg \max _{i \in [n]} (f_i(R) + X_i)$
\end{algorithmic}
\end{algorithm}

In the context of Thompson sampling, function $f_i(R)$ is the empirical mean $\hat{\mu}_{i,t}$ of arm $i$, given the history of observed rewards $\F_t$. Recall that neighboring histories $\F_t$ and $\F'_t$ contain databases of rewards $R$ and $R'$ that differ only in a single reward observation, so only one arm will have a different empirical mean across these neighbors. Thus we prove differential privacy guarantees for Algorithm~\ref{alg.noisymaxgaus} under the assumption that for any pair of neighboring databases $R, R'$, it holds that $f(R) = (f_1(R), \ldots, f_n(R))$ and $f(R') = (f_1(R'), \ldots, f_n(R'))$ differ at at most one function value.

Theorem \ref{thm-RNM} gives the privacy guarantee of Algorithm~\ref{alg.noisymaxgaus} under this assumption. The proof of this theorem follows closely to the structure of the proof of privacy of ReportNoisyMax in \cite{DR14}, but is modified in key ways based on the change in noise distribution. Additionally, Algorithm~\ref{alg.noisymaxgaus} satisfies $(\epsilon,\delta)$-DP for $\delta>0$ since it uses Gaussian noise, while Algorithm~\ref{alg.noisymax} satisfies $(\epsilon,0)$-DP since it uses Laplace noise.

\begin{theorem} \label{thm-RNM}
Assume that for any pair of neighboring databases $R$ and $R'$, $f(R)$ and $f(R')$ differ at at most one entry.
Then, Algorithm~\ref{alg.noisymaxgaus} is $(\epsilon, \delta)$-differentially private for $\epsilon \geq \frac12 {\sqrt{\log \frac{n-1}{2\delta}}}  \max _{i \in [n]} \left( \frac{s_i}{ \sigma_i} \right)$.
\end{theorem}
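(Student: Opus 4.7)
The plan is to adapt the classical privacy analysis of \textsc{ReportNoisyMax} (due to Dwork--Roth) to heterogeneous Gaussian noise. Fix a target output $i^\ast \in [n]$; by hypothesis only one coordinate of $f$ changes between neighbors $R$ and $R'$, say $f_j(R)\neq f_j(R')$ with $|f_j(R)-f_j(R')|\leq s_j$. Condition on $X_{-i^\ast} := (X_i)_{i\neq i^\ast}$ and define the threshold $T(R) := \max_{i\neq i^\ast}\bigl[X_i + f_i(R) - f_{i^\ast}(R)\bigr]$, so that the output equals $i^\ast$ iff $X_{i^\ast} > T(R)$. A short case analysis on whether $i^\ast=j$ or $i^\ast\neq j$ gives $|T(R)-T(R')|\leq s_j$: when $i^\ast\neq j$, only the term indexed by $j$ in the max shifts (by at most $s_j$); when $i^\ast=j$, every term in the max shifts by the same amount $f_{i^\ast}(R)-f_{i^\ast}(R')$, so the max shifts by the same amount.

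For each fixed $X_{-i^\ast}$, this reduces the problem to a one-dimensional Gaussian tail comparison: bounding $\Pr[X_{i^\ast}>T(R)]$ against $\Pr[X_{i^\ast}>T(R')]$ for $X_{i^\ast}\sim\N(0,\sigma_{i^\ast}^2)$ and thresholds separated by at most $s_j$. Rescaling by $\sigma_{i^\ast}$ turns this into a standard-Gaussian tail comparison under a shift of magnitude at most $s_j/\sigma_{i^\ast}\leq M := \max_i s_i/\sigma_i$, which is precisely what the Gaussian mechanism controls. Taking expectations over $X_{-i^\ast}$ and using the tail bound $\Phi(-x)\leq \tfrac{1}{2}e^{-x^2/2}$ yields a per-output statement $\Pr[\M(R)=i^\ast]\leq e^\epsilon\Pr[\M(R')=i^\ast]+\delta^\ast$, with $\delta^\ast$ of order $e^{-\Theta(\epsilon^2/M^2)}$. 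This per-output guarantee is lifted to an arbitrary output set $S\subseteq[n]$ by summing over $i^\ast\in S$: the case $|S|=n$ is trivial since both sides equal $1$, and for $|S|\leq n-1$ a union bound contributes a factor of $n-1$, yielding total DP failure probability $\delta\leq(n-1)\delta^\ast$. Inverting this relation for $\epsilon$ produces the stated lower bound $\epsilon\geq \tfrac{1}{2}\sqrt{\log((n-1)/(2\delta))}\cdot M$ (up to absolute constants from the tail estimate).

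The main obstacle will be the Gaussian tail manipulation in the second step. Unlike the Laplace \textsc{ReportNoisyMax} argument where the density ratio is uniformly bounded by $e^\epsilon$, Gaussian tails force us to absorb a small failure probability for extreme values of $X_{i^\ast}$. Carefully bookkeeping this failure probability -- so that it does not depend on $X_{-i^\ast}$ and can thus be pulled out of the outer expectation -- together with the union bound over the $n-1$ non-trivial singleton outputs, is precisely what produces the $\sqrt{\log((n-1)/(2\delta))}$ factor in the final bound.
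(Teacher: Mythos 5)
Your reduction to a one-dimensional comparison for $X_{i^\ast}$ breaks at the step where you claim the rescaled shift is at most $M := \max_i s_i/\sigma_i$. Your case analysis correctly gives $|T(R)-T(R')|\leq s_j$, where $j$ is the single coordinate at which $f(R)$ and $f(R')$ differ, but after rescaling by $\sigma_{i^\ast}$ the relevant quantity is $s_j/\sigma_{i^\ast}$, and when $i^\ast\neq j$ this is \emph{not} bounded by $M$: the numerator and denominator carry different indices. Concretely, take $n=2$, $s_1=\sigma_1=1$, $s_2=\sigma_2=10^{-3}$, so $M=1$ but $s_1/\sigma_2=10^3$; conditioning on $X_1$ so that $T(R)\approx 0.5$ and $T(R')\approx -0.5$, the conditional probabilities $\Pr[X_2>T(R)]$ and $\Pr[X_2>T(R')]$ are essentially $0$ and $1$, so no inequality of the form $e^{\epsilon}(\cdot)+\delta^\ast$ with $\epsilon,\delta^\ast$ governed by $M$ holds uniformly over the conditioning, and your plan of pulling a conditioning-independent failure probability out of the outer expectation cannot go through. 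The heterogeneity of the noise scales --- the whole point of the theorem --- is exactly what defeats this naive adaptation of the Laplace \textsc{ReportNoisyMax} argument.

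The paper's proof never charges a shift of size $s_j$ to the noise of a different coordinate. Since neighbors differ in a single entry, it assumes WLOG $c:=f(R)\geq c':=f(R')$ coordinatewise and compares the event $\{\M(R)=i\}$ against the \emph{mixed} threshold $X^*:=\min\{x:\ c_i+x> c_j'+X_j\ \ \forall j\neq i\}$: monotonicity gives $\Pr[\M(R)=i\mid X_{-i}]\leq\Pr[X_i\geq X^*\mid X_{-i}]$, while shifting $X_i$ up by only $s_i$ (using $c_i\leq c_i'+s_i$) already forces $i$ to win under $R'$. Thus the Gaussian sliding lemma (Lemma~\ref{lem-sliding}) is only ever invoked with shift $s_i/\sigma_i\leq M$, with matching indices; in the opposite direction, when the changed coordinate $m$ is not the output coordinate, the natural repair is to condition on $X_{-m}$ instead and charge the shift $s_m$ to $X_m$, again with ratio $s_m/\sigma_m\leq M$. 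Your outer scaffolding --- conditioning on the other coordinates, per-singleton bounds, the trivial $|S|=n$ case plus a union bound over at most $n-1$ singletons, and inverting the Gaussian tail to get $\epsilon\geq\tfrac12\sqrt{\log\tfrac{n-1}{2\delta}}\,M$ --- matches the paper, but without the ordering/mixed-threshold device (or the switch of conditioning coordinate) the central one-dimensional comparison fails, so the proposal as written has a genuine gap.
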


\begin{proof}
Fix input functions $f=\{f_1, \ldots, f_n\}$ and noise variances $\{\sigma_1^2,\ldots,\sigma_n^2\}$. Let $R$ and $R'$ be two neighboring databases, let $\M(R)$ denote the output of Algorithm~\ref{alg.noisymaxgaus} on input database $R$, and
let $c = f(R)$ and $c' = f(R')$. Without loss of generality, let $c \geq c'$. This is without loss, since we have assumed that $c$ and $c'$ differ only in a single entry.

Fix any $i \in [n]$. We will bound from above and below the ratio of the probabilities that index $i$ is selected under $R$ versus $R'$. Fix $X_{-i}$ to be a draw from the Gaussian distributions used for all noisy function values except the $i$-th one. 

We first argue that $\Pr [\M(R) = i | X_{-i}] \leq e^{\epsilon} \Pr [\M(R') = i |  X_{-i}] + \frac{1}{n-1} \delta$. Define $X^*$ to be the minimum value of $X_i$ such that $c_i + X_i > c_j' + X_j$ for all $j\neq i$. Then fixing $X_{-i}$, $i$ will the output of Algorithm \ref{alg.noisymaxgaus} on $R$ if and only if $X_i > X^*$. Then for all $j \neq i$, we know that $c_i + X^*  > c_j + X_j$, which also implies: 
\begin{equation*}
     (s_i + c_i' + X^*)  \geq c_i + X^* > c_j + X_j \geq  c_j' + X_j .
\end{equation*}
Thus, if $X_i \geq X^* + s_i$, then the $i$-th noisy function value $f_i(R')$ will be the maximum when the noise vector is $(X_i,X_{-i})$.  

To reason about the probability of the noise term $X_i$ being sufficiently large, we apply a lemma from \cite{NRS07} that bounds the closeness between a standard normal random variable $Z$ and $Z$ plus an additive shift.

\begin{lemma} [\cite{NRS07}] \label{lem-sliding} 
For a subset $\mathcal{S} \in \mathbb{R} ^ d$ and a vector $a \in \mathbb{R} ^ d$, we write $\mathcal{S} + a$ for the set $\{ y + a : y \in \mathcal{S} \}$.
The standard normal distribution $\N(0,1)$ satisfies that for all $\| a \|_1 \leq \frac{2\epsilon}{\sqrt{\log (1/2\delta)}}$ and subsets $\mathcal{S} \in \mathbb{R}^d$, 
$$\Pr _{Z \sim \N(0,1)}[ Z \in \mathcal{S}] \leq e^{\epsilon} \Pr _{Z \sim \N(0,1)}[ Z \in \mathcal{S} + a] + \delta.$$
\end{lemma}

To apply Lemma~\ref{lem-sliding} to our (non-standard) $X_i\sim N(0,\sigma_i^2)$, we use the fact that $X_i = \sigma_i Z$. We instantiate the lemma with $d=1$, $a=\frac{s_i}{\sigma_i}$, and $\mathcal{S}=[X^*,\infty)$. The condition on $|a|$ will be satisfied as long as $\frac{2\epsilon}{\sqrt{\log ((n-1)/2\delta)}} \geq \max _{i \in [n]} \left( \frac{s_i}{ \sigma_i} \right) \geq \frac{s_i}{\sigma_i}$, as required in the theorem statement. Then Lemma~\ref{lem-sliding} gives that:
\[
\Pr [X_i  \geq  X^* | X_{-i} ] \leq e^\epsilon \Pr [X_i \geq X^* + s_i | X_{-i}] + \frac{\delta}{n-1}.
\]

\begin{align*}
 \Pr [\M(R) = i |  X_{-i}] & = \Pr [X_i  \geq  X^* | X_{-i}] \leq e^\epsilon \Pr [X_i \geq X^* + s_i | X_{-i}] + \frac{\delta}{n-1} \leq e^\epsilon \Pr [\M(R') = i| X_{-i}] + \frac{\delta}{n-1}.
\end{align*}

A symmetric argument shows that $\Pr [\M(R')  = i | X_{-i}] \leq e^{\epsilon} \Pr[\M(R) = i | X_{-i}] + \frac{1}{n-1} \delta$. 
Now define $X^*$ to be the minimum value of $X_i$ such that $c_i' + X_i > c_j' + X_j$ for all $j\neq i$. This means that $i$ is the output of $\M$ on $R'$ if and only if $X_i > X^*$. Then for all $j \neq i$, we have 
\begin{align*}
     c_i' + X^* & > c_j' + X_j \\
    \implies \quad s_i + c_i' + X^* & > s_i + c_j' + X_j\\
      \implies \quad c_i + (s_i + X^*)  \geq c_i' + (s_i + X^*)  &> (s_i + c_j') + X_j \geq c_j + X_j.
\end{align*}
Thus, if $X_i \geq X^* + s_i$ then the $i$-th noisy function value will be the maximum under database $R$ when the noise vector is $(X_i,X_{-i})$. 

We again instantiate Lemma~\ref{lem-sliding} in the same way as before to get that,
\[
\Pr [X_i  \geq  X^* | X_{-i}] \leq e^\epsilon \Pr [X_i \geq X^* + s_i | X_{-i}] + \frac{\delta}{n-1}.
\]

Then,
\begin{align*}
 \Pr [\M(R') = i| X_{-i}] = \Pr [X_i  \geq  X^* | X_{-i}] \leq e^\epsilon \Pr [X_i \geq X^* + s_i | X_{-i}] + \frac{\delta}{n-1}  \leq e^\epsilon \Pr [\M(R) = i| X_{-i}] + \frac{\delta}{n-1}.
\end{align*}

Using the law of total probability we can obtain $\Pr[\M(D) = i] \leq e^\epsilon \Pr[\M(D') = i] + \frac{\delta}{n-1}$ for all $i \in [n]$. Then, for any $\mathcal{S} = \{ i_1, i_2, \ldots, i_k\}$, $\Pr[\M(D) \in \mathcal{S} ] \leq e^{\epsilon} \Pr[\M(D') \in \mathcal{S} ] + \frac{k}{n-1} \delta$. The maximum size of $k$ is $n-1$, because when $k = n$, $\mathcal{S} = Range(\M)$, and both probabilities are 1. 
Thus we conclude that Algorithm~\ref{alg.noisymaxgaus} is $(\epsilon, \delta)$-differentially private for $\epsilon \geq \frac12 {\sqrt{\log \frac{n-1}{2\delta}}}  \max _{i \in [n]} \left( \frac{s_i}{ \sigma_i} \right)$.
\end{proof}

Using Theorem \ref{thm-RNM}, we can finally show that Thompson Sampling is differentially private. 

\begin{theorem} \label{thm-RMN-TS} At timestep $t$, the action of Algorithm~\ref{alg.ts.orig} is $(\epsilon, \delta)$-differentially private for $\epsilon = \frac{1}{2\sqrt{2}}  \sqrt{\log \frac{N-1}{2\delta}}$.   
\end{theorem}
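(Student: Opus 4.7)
The plan is to recognize that a single round of Thompson Sampling at time $t$ is exactly an instance of the Heterogeneous Gaussian ReportNoisyMax algorithm (Algorithm \ref{alg.noisymaxgaus}), and then apply Theorem \ref{thm-RNM} directly. The correspondence is: the database $R$ is the reward history $\mathcal{F}_{t-1}$; the function $f_i$ is the empirical-mean query $f_i(\mathcal{F}_{t-1}) = \hat{\mu}_{i,t-1}$; and the noise $X_i \sim \N(0, \sigma_i^2)$ with $\sigma_i^2 = \frac{1}{n_{i,t-1}+1}$ is the shift that turns $\hat{\mu}_{i,t-1}$ into the sampled $\theta_{i,t}$. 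The output $\arg\max_i \theta_{i,t} = a_t$ then matches the output of the heterogeneous-noise mechanism.

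The first step is to verify the one-coordinate-change hypothesis of Theorem \ref{thm-RNM}: two neighboring histories differ in exactly one reward $r_\tau$, and that reward was generated by exactly one arm $a_\tau$, so only the empirical mean of that arm changes between $f(\mathcal{F}_{t-1})$ and $f(\mathcal{F}'_{t-1})$. The second step is to bound the per-coordinate sensitivity: since rewards lie in $[0,1]$, changing a single reward in the sum $\sum_{\tau : a_\tau = i} r_\tau$ changes it by at most $1$, so $s_i \leq \frac{1}{n_{i,t-1}+1}$ (using the algorithm's denominator $n_{i,t-1}+1$).

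The third step is to compute the ratio $s_i/\sigma_i$ that appears in Theorem \ref{thm-RNM}:
\[
\frac{s_i}{\sigma_i} \;\leq\; \frac{\tfrac{1}{n_{i,t-1}+1}}{\sqrt{\tfrac{1}{n_{i,t-1}+1}}} \;=\; \frac{1}{\sqrt{n_{i,t-1}+1}} \;\leq\; \frac{1}{\sqrt{2}},
\]
where the last inequality uses $n_{i,t-1} \geq 1$. Taking the maximum over $i \in [N]$ preserves this $1/\sqrt{2}$ bound. Plugging $\max_i s_i/\sigma_i \leq 1/\sqrt{2}$ and $n = N$ into the privacy condition of Theorem \ref{thm-RNM} gives $\epsilon \geq \tfrac{1}{2}\sqrt{\log \tfrac{N-1}{2\delta}} \cdot \tfrac{1}{\sqrt{2}} = \tfrac{1}{2\sqrt{2}}\sqrt{\log \tfrac{N-1}{2\delta}}$, which is exactly the claimed parameter.

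The main subtlety (and essentially the only place where one has to pause) is the assumption $n_{i,t-1} \geq 1$ used to bound $1/\sqrt{n_{i,t-1}+1}$ by $1/\sqrt{2}$. For any arm $i$ that has never been pulled, we have $\hat{\mu}_{i,t-1} = 0$ identically and $\theta_{i,t} \sim \N(0,1)$ is independent of the data, so that coordinate contributes no privacy loss and the ReportNoisyMax argument can be applied only to the arms that have been pulled at least once (equivalently, the sensitivity for never-pulled arms is $0$). This matches the handling used in the proof of Lemma \ref{lem.tsgdp-onestep}, so no new ideas are needed; this is the only step that requires any care beyond a direct invocation of Theorem \ref{thm-RNM}.
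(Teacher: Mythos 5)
Your proposal is correct and follows essentially the same route as the paper's proof: instantiating the Heterogeneous Gaussian ReportNoisyMax result (Theorem \ref{thm-RNM}) with $s_i \leq \frac{1}{n_{i,t-1}+1}$ and $\sigma_i^2 = \frac{1}{n_{i,t-1}+1}$, bounding $\max_i s_i/\sigma_i \leq 1/\sqrt{2}$ via $n_{i,t-1}\geq 1$, and noting that never-pulled arms contribute no privacy loss. Your treatment is in fact slightly more careful than the paper's, which invokes the same steps more tersely.
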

\begin{proof}
At timestep $t$ of Algorithm~\ref{alg.ts.orig}, we are instantiating Algorithm~\ref{alg.noisymaxgaus} with sensitivity $s_i = \frac{1}{n_{i,t-1 } + 1}$, and adding Gaussian noise of variance $\sigma^2_{i} = \frac{1}{n_{i,t-1} + 1}$. Plugging these into Theorem~\ref{thm-RNM} gives that Algorithm~\ref{alg.ts.orig} at timestep $t$ is $(\epsilon, \delta)$-differentially private for $\epsilon \geq \frac12 \sqrt{\log \frac{N-1}{2\delta}} \max_i (\frac{1}{n_{i,t-1} + 1} / \sqrt{ \frac{1}{n_{i,t-1} + 1}}) = \frac12 \sqrt{\log \frac{N-1}{2\delta}} \max_{i} \left( {\frac{1}{\sqrt{n_{i,t-1} + 1}}} \right)$. Note that for privacy to be non-trivial, it must be that $n_{i,t-1} \geq 1$, otherwise there would no data to protect. Thus, $\max_{i} \left( {\frac{1}{\sqrt{n_{i,t-1} + 1}}} \right) \leq \frac{1}{\sqrt{2}}$, and the algorithm is $(\epsilon, \delta)$-differentially private for any $\epsilon \geq  \frac{1}{2\sqrt{2}} \sqrt{\log \frac{N-1}{2\delta}} $.
\end{proof}

Corollary~\ref{thm.tsregulardp} gives a bound on the complete Algorithm~\ref{alg.ts.orig} across all timesteps, by utilizing Advanced Composition \cite{Dwork10} to compose the privacy guarantees of Theorem \ref{thm-RMN-TS} across all $T$ timesteps.

\begin{corollary}\label{thm.tsregulardp}
    Given any $\epsilon, \delta$ such that $\epsilon = \frac{1}{2\sqrt{2}} \sqrt{\log \frac{N-1}{2\delta}}$,  Algorithm~\ref{alg.ts.orig} is $(\epsilon_{TS}, \delta_{TS})$-differentially private for $\epsilon_{TS} = \epsilon \sqrt{2T \log (\frac{1}{\delta_{TS} - T\delta} )}  + T\epsilon (e^\epsilon - 1)$ for $\delta_{TS} > T\delta$.
\end{corollary}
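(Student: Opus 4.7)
}
The plan is to combine the per-round DP guarantee already established in Theorem \ref{thm-RMN-TS} with a standard advanced composition theorem for $T$-fold adaptive composition. Concretely, Theorem \ref{thm-RMN-TS} shows that the single-round mechanism $\M_{TS}(\F_t)$ is $(\epsilon,\delta)$-DP for the stated pairing $\epsilon = \tfrac{1}{2\sqrt{2}}\sqrt{\log\tfrac{N-1}{2\delta}}$, so it remains only to glue together $T$ such rounds and track the resulting parameters.

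First I would recall (or cite from \cite{Dwork10}) the advanced composition theorem: for any $\epsilon\geq 0$, $\delta\geq 0$ and $\delta'>0$, the adaptive $T$-fold composition of $(\epsilon,\delta)$-DP mechanisms is $(\epsilon',T\delta+\delta')$-DP with
\[
\epsilon' \;=\; \epsilon\sqrt{2T\log(1/\delta')} + T\epsilon(e^{\epsilon}-1).
\]
Next I would observe that Algorithm \ref{alg.ts.orig} is precisely such an adaptive composition: at every time $t$ the mechanism $\M_{TS}(\F_t)$ outputs $a_t$ as a function of the previous outputs (through the history $\F_{t-1}$) and of the current reward, and neighboring reward streams $R,R'$ (differing in a single coordinate) induce neighboring histories at every step in the sense used to verify the per-round bound. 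Therefore each round is $(\epsilon,\delta)$-DP and the adaptive composition theorem directly applies.

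Finally, I would set the free parameter of the composition theorem to be $\delta' = \delta_{TS} - T\delta$ (which is positive exactly when $\delta_{TS} > T\delta$, as assumed). Substituting yields
\[
\epsilon_{TS} \;=\; \epsilon\sqrt{2T\log\!\tfrac{1}{\delta_{TS}-T\delta}} + T\epsilon(e^{\epsilon}-1),
\qquad \delta_{TS} = T\delta + \delta',
\]
which matches the claim.

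The one place where I anticipate some care is justifying that the per-round DP guarantee of Theorem \ref{thm-RMN-TS} holds \emph{uniformly over adversarial histories}, so that the adaptive (rather than merely sequential) version of advanced composition is applicable. This is not an obstacle in substance, because the per-round argument is made conditionally on an arbitrary instantiation $F_{t-1}$ of the history and only invokes sensitivity of the empirical-mean queries on neighboring reward streams, which is a pointwise quantity. I would state this observation explicitly before invoking the composition theorem so that the adaptivity of Thompson sampling is transparently handled. With that in place, the corollary follows from a direct substitution.
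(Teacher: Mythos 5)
Your proposal is correct and follows essentially the same route as the paper: the corollary is obtained by applying the advanced composition theorem of \cite{Dwork10} to the per-round $(\epsilon,\delta)$-DP guarantee of Theorem \ref{thm-RMN-TS}, with the slack parameter chosen as $\delta' = \delta_{TS} - T\delta$. Your additional remark on why the per-round guarantee holds uniformly over histories (so that adaptive composition applies) is a reasonable clarification of a point the paper leaves implicit, but it does not change the argument.
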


\subsection{Alternative Method: RDP}\label{app.altrdp}

Renyi Differential Privacy (RDP) \cite{M17} also generalizes differential privacy, with the guarantees of closeness of outputs across neighboring databases based on \emph{Renyi divergence}.

\begin{definition}
    (Renyi Differential Privacy \cite{M17}). An algorithm $\mathcal{M}$ satisfies $(\alpha, \gamma(\alpha))$-RDP with $\alpha \geq 1$ if for any neighboring datasets $D$ and $D'$:
    \begin{equation*}
        D_\alpha (\mathcal{M}(D)|| \mathcal{M}(D')) = \frac{1}{\alpha -1} \log \mathbbm{E}_{x \sim \mathcal{M}(D)} \left[ \left(\frac{\Pr[\mathcal{M}(D) =x ]}{\Pr[\mathcal{M}(D') =x ]} \right)^{\alpha -1} \right] \leq \gamma(\alpha),
    \end{equation*}
    where the Renyi divergence $D_\alpha$ between two distributions $P$ and $Q$ is 
    \[D_\alpha (P|| Q) = \frac{1}{\alpha -1} \log \mathbbm{E}_{x \sim Q} \left[ (P(x)/ Q(x))^\alpha \right] = \frac{1}{\alpha -1 } \log \mathbbm{E}_{x \sim P} [ (P(x) / Q(x))^{\alpha -1} ].\]
\end{definition}

From \cite{M17}, it is known that the Gaussian Mechanism that adds noise $\N(0, \sigma^2)$ to the value of a function with sensitivity $s$ is $(\alpha, \gamma(\alpha))$-RDP for $\gamma(\alpha) = \frac{ s^2}{2\sigma^2} \alpha$ and for any $\alpha > 1$. In the context of a single-step of the Thompson Sampling algorithm, the sensitivity $s_i$ of function $f_i$ at time $t$ is $\frac{1}{n_{i,t-1} + 1}$, and the variance of the noise added is $\sigma_i^2 = (\frac{1}{n_{i,t-1} + 1})^2$. Let arm $j$ be the arm where the observed rewards differ across two neighboring databases. Then, the RDP guarantee for one pull of this arm is:
\begin{align*}
    \gamma (\alpha) = \frac{s_j^2}{2\sigma_j^2} \alpha =  \frac{1/(n_{j,t-1} + 1 )^2 }{2/(n_{j,t-1} + 1)} \alpha = \frac{1}{2(n_{j,t-1} + 1)}\alpha \leq \frac{1}{4}\alpha.
\end{align*}
The last inequality holds because $n_{j,t-1} \geq 1$, otherwise the dataset would be empty and there would be no data to protect.

The composition guarantees of RDP state that the adaptive composition of $T$ mechanisms that each satisfy $(\alpha, \gamma(\alpha))$-RDP, will together satisfy $(\alpha, T\gamma(\alpha))$-RDP \cite{M17}. Thus $T$ rounds of Thompson Sampling will together satisfy $(\alpha, \frac{1}{4}  \alpha T )$-RDP. 
To convert this RDP guarantee back to DP, we use the fact from \cite{M17} that if an algorithm $\mathcal{M}$ satisfies $(\alpha, \gamma(\alpha))$-RDP, then it also satisfies $(\gamma(\alpha) + \frac{\log (1/ \delta)}{\alpha -1}, \delta)$-DP for any $\delta \in (0,1)$. This gives us the final privacy result of Theorem \ref{thm-rdp}. 

\begin{theorem} \label{thm-rdp}
% Algorithm~\ref{alg.ts.orig} is $(\alpha, \frac{1}{4}  \alpha T ) $-RDP for any $\alpha > 1$. That is, for any $\delta \in (0,1)$ and $\alpha > 1$, it is $(\epsilon, \delta)$-differentially private for $\epsilon = \frac{1}{4}  \alpha T  + \frac{\log (1/\delta)}{\alpha - 1}$.
Algorithm~\ref{alg.ts.orig} is $(\epsilon, \delta)$-differentially private for any $\delta \in (0,1)$, $\alpha > 1$, and $\epsilon = \frac{1}{4}  \alpha T  + \frac{\log (1/\delta)}{\alpha - 1}$.
\end{theorem}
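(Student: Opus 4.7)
The plan is to mirror the three-step template already used for GDP (per-round bound, composition, conversion back to DP), but now carried out in the language of Rényi differential privacy. Concretely, I would first establish a per-round RDP guarantee for one step of Algorithm~\ref{alg.ts.orig}, then apply RDP composition across all $T$ rounds, and finally translate the composed RDP bound into an $(\epsilon,\delta)$-DP statement.

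For the per-round analysis, observe that at time $t$, conditioning on the history $\F_{t-1}$, the only sample $\theta_{i,t}$ whose distribution depends on whether the underlying reward stream is $R$ or a neighbor $R'$ is the one corresponding to the single arm $j$ where the rewards differ. As argued in the proof of Lemma~\ref{lem.tsgdp-onestep}, this sample is an instantiation of the Gaussian mechanism on the query $\hat\mu_j(\F_{t-1})$ with sensitivity $s_j \leq \tfrac{1}{n_{j,t-1}+1}$ and noise variance $\sigma_j^2 = \tfrac{1}{n_{j,t-1}+1}$. Invoking the known RDP guarantee of the Gaussian mechanism (from \cite{M17}), namely $(\alpha,\tfrac{s^2}{2\sigma^2}\alpha)$-RDP for every $\alpha>1$, one arm's sample therefore contributes
\[
\gamma_t(\alpha) \;=\; \frac{s_j^2}{2\sigma_j^2}\alpha \;=\; \frac{1}{2(n_{j,t-1}+1)}\,\alpha \;\leq\; \frac{\alpha}{4},
\]
where the last inequality uses $n_{j,t-1}\geq 1$ (otherwise the arm has no data and there is nothing to protect). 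Since the argmax step is post-processing on the released $\theta$-vector, the entire timestep is $(\alpha,\tfrac{\alpha}{4})$-RDP.

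Next, I would invoke the adaptive composition property of RDP: the $T$-fold adaptive composition of $(\alpha,\gamma_t(\alpha))$-RDP mechanisms is $(\alpha,\sum_t \gamma_t(\alpha))$-RDP. Summing the per-round bounds gives that $T$ rounds of Algorithm~\ref{alg.ts.orig} together satisfy $(\alpha,\tfrac{\alpha T}{4})$-RDP. Finally, I would apply the standard RDP-to-DP conversion of \cite{M17}, which states that an $(\alpha,\gamma(\alpha))$-RDP mechanism is also $(\gamma(\alpha)+\tfrac{\log(1/\delta)}{\alpha-1},\delta)$-DP for every $\delta\in(0,1)$; plugging in $\gamma(\alpha)=\tfrac{\alpha T}{4}$ yields the claimed $\epsilon = \tfrac{1}{4}\alpha T + \tfrac{\log(1/\delta)}{\alpha-1}$.

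There is no real technical obstacle here: every ingredient (Gaussian-mechanism RDP bound, RDP composition, RDP-to-DP conversion) is an off-the-shelf result from \cite{M17}, and the only bandit-specific observation is the sensitivity/variance calculation, which is identical to the one already performed for the GDP analysis. The only subtlety worth flagging is the same one noted after Theorem~\ref{thm.tsdp}: the bound $\tfrac{1}{2(n_{j,t-1}+1)}\leq \tfrac{1}{4}$ is governed by a worst-case arm that may have been pulled only once, so tightening the per-round constant would require instance-specific accounting rather than a uniform bound. This is exactly the issue that motivates considering the modified algorithm in Section~\ref{s.improving}, and it does not affect correctness of the statement as written.
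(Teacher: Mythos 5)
Your proposal is correct and follows essentially the same route as the paper's own argument in Appendix~\ref{app.altrdp}: a per-round RDP bound of $\frac{\alpha}{2(n_{j,t-1}+1)} \leq \frac{\alpha}{4}$ for the single differing arm via the Gaussian-mechanism RDP guarantee (with the argmax as post-processing), RDP adaptive composition to $(\alpha, \frac{\alpha T}{4})$-RDP over $T$ rounds, and the standard RDP-to-DP conversion from \cite{M17}. Your sensitivity/variance bookkeeping (variance $\frac{1}{n_{j,t-1}+1}$, consistent with Algorithm~\ref{alg.ts.orig}) is in fact cleaner than the paper's stated $\sigma_i^2$, which appears with a spurious square but is used correctly in the computation, so no substantive difference remains.
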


\subsection{Comparisons of the GDP, Standard DP and RDP Results}\label{app.privacycompare}

In this section, we compare the privacy guarantees provided by our three different privacy results: Theorem~\ref{thm.tsdp} (GDP), Corollary~\ref{thm.tsregulardp} (Standard DP), and Theorem \ref{thm-rdp} (RDP). Since the GDP and RDP guarantees are not directly interpretable for comparison, we use empirical evaluations for the comparison.

Figure~\ref{fig:3methods} shows a direct comparison of the numerical privacy guarantees obtained under the three methods: RDP, GDP, and standard DP, for the setting of $T=1,000$ and $N=2,10$ (respectively). We observe that the guarantees obtained by GDP and RDP are significantly tighter than the one obtained by the standard DP method. Unlike the standard DP guarantee, the GDP and RDP guarantees do not depend on $N$. RDP and GDP offer comparable guarantees, with GDP showing a slight advantage in specific regions, particularly when $\delta$ is small, and consistently performing no worse than RDP across all scenarios. This is more clearly observable in Figure~\ref{fig:2methods}, which shows only the GDP and RDP lines on a more zoomed in portion of the y-axis.

 \begin{figure}[ht]
    \centering
\includegraphics[scale=0.5]{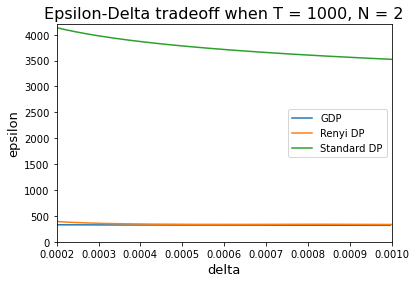}
\includegraphics[scale=0.5]{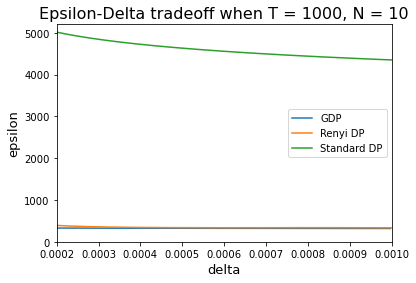}
    \caption{DP parameter $\epsilon$ as a function of $\delta$ when fixing $T = 1000$, obtained by three different analyses (GDP, RDP and Standard DP). The left plot has $N=2$, and the right plot has $N=10$.  }
    \label{fig:3methods}
\end{figure}

 \begin{figure}[ht]
    \centering
\includegraphics[scale=0.5]{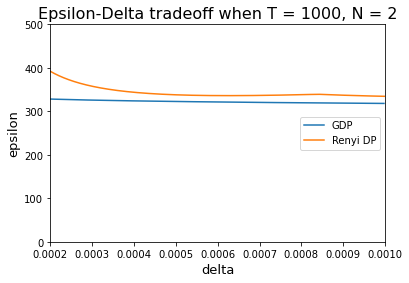}
\includegraphics[scale=0.5]{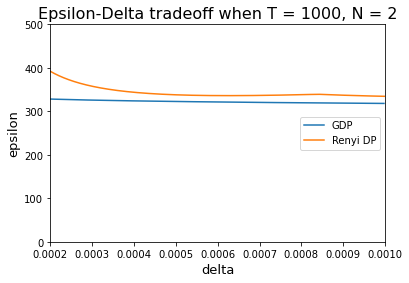}
    \caption{DP parameter $\epsilon$ as a function of $\delta$ when fixing $T = 1000$, obtained by three different analyses (GDP and RDP only). The left plot has $N=2$, and the right plot has $N=10$.  }
    \label{fig:2methods}
\end{figure}

\section{Comparison against Other Private Bandit Algorithms}\label{app.comparison}

We compare the performance of our Modified TS algorithm against two recent non-TS-based DP algorithms for online bandit learning: DP-SE \cite{SS19} and Anytime-Lazy-UCB \cite{HU21}. We use the same arm settings as in Section 5.1, and privacy parameters $\epsilon = 4.88, 35.57, $ and $96.71$ with $\delta=10^{-6}$ for all three algorithms, which corresponds to 1-, 5- and 10-GDP. The $(b,c)$ parameters of Modified Thompson sampling are set via grid search and are omitted on the plots for cleaner presentation.

The empirical results are presented in Figure \ref{fig.compare} below, which shows the regret of all three algorithms over time. We observe that for smaller $\epsilon$ values, the Anytime-Lazy-UCB \cite{HU21} performs the best, and that Anytime-Lazy-UCB always outperforms DP-SE \cite{SS19}. However as $\epsilon$ grows larger, our Modified TS algorithm begins to substantially outperform the other methods.

 \begin{figure}[h]
 \centering
\includegraphics[scale=0.5]{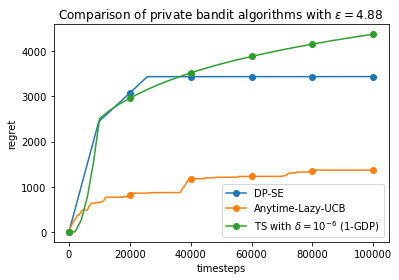}
\includegraphics[scale=0.5]{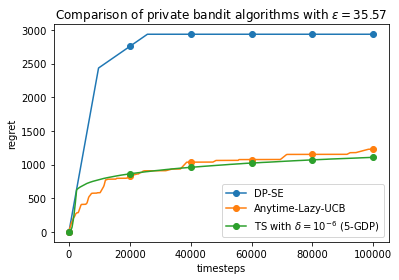}
\includegraphics[scale=0.5]{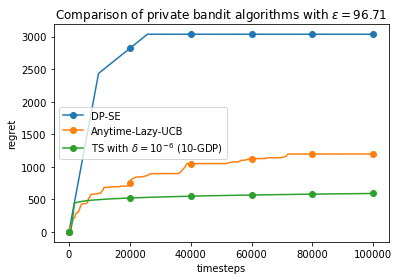}
\caption{Comparison of our Modified TS algorithm with DP-SE \cite{SS19} and Anytime-Lazy-UCB \cite{HU21}, under fixed privacy levels $\epsilon = 4.88, 35.57, $ and $96.71$. }\label{fig.compare}
 \end{figure}

\end{document}